\documentclass[11pt]{article}
\usepackage[T1]{fontenc}
\usepackage{lmodern}
\usepackage{inconsolata}
\AtBeginDocument{\begingroup\ttfamily\endgroup\DeclareFontShape{T1}{zi4}{m}{it}{<->ssub*zi4/m/n}{}}
\usepackage{fullpage}
\usepackage[authoryear,round]{natbib}
\usepackage{iftex}
\ifPDFTeX
  \usepackage[protrusion=true,expansion=true]{microtype}
\else
  \usepackage[protrusion=true,expansion=false]{microtype}
\fi
\usepackage{amsmath,amssymb,amsthm,mathtools,bm}
\usepackage{enumitem}
\usepackage{graphicx}
\usepackage{multirow}
\usepackage{booktabs}
\usepackage{caption}
\usepackage{float}
\usepackage[dvipsnames]{xcolor}
\usepackage[colorlinks,linkcolor=red,anchorcolor=blue,citecolor=blue]{hyperref}
\usepackage{xurl}
\renewcommand{\arraystretch}{1.15}
\numberwithin{equation}{section}

\newtheorem{theorem}{Theorem}[section]
\newtheorem{proposition}[theorem]{Proposition}
\newtheorem{lemma}[theorem]{Lemma}

\theoremstyle{definition}

\theoremstyle{remark}

\newcommand{\E}{\mathbb E}
\newcommand{\KL}{D_{\mathrm{KL}}}
\newcommand{\cA}{\mathcal A}
\newcommand{\cF}{\mathcal F}
\newcommand{\cS}{\mathcal S}
\newcommand{\cU}{\mathcal U}
\newcommand{\cV}{\mathcal V}
\newcommand{\ip}[2]{\left\langle #1,#2\right\rangle}
\newcommand{\norm}[1]{\left\lVert #1\right\rVert}
\newcommand{\R}{\mathbb R}

\newif\ifshowcomments
\showcommentsfalse

\newcommand{\QwenRLMath}{Qwen3-4B-RL-Math}
\newcommand{\QwenRLCode}{Qwen3-4B-RL-Code}
\newcommand{\SkyworkTeacher}{Skywork-7B}
\newcommand{\QwenOneSevenNT}{Qwen3-1.7B (Non-thinking)}
\newcommand{\QwenFourNT}{Qwen3-4B (Non-thinking)}
\newcommand{\QwenFourteenNT}{Qwen3-14B (Non-thinking)}

\title{Why On-Policy Distillation Sometimes Fails:\\Vanishing Learning Signals}

\author{%
  Lei Zhao\textsuperscript{1}\thanks{Contributed equally to this work. Email: \texttt{leizhao7@upenn.edu}.}%
  \quad Qichao Zhao\textsuperscript{2}\protect\footnotemark[1]%
  \quad Bowen Zuo\textsuperscript{3}%
  \quad Qishi Zhan\textsuperscript{4}\\[0.6em]
  {\normalsize\textsuperscript{1}\,University of Pennsylvania\quad
  \textsuperscript{2}\,Tsinghua University}\\
  {\normalsize\textsuperscript{3}\,University of California, Riverside\quad
  \textsuperscript{4}\,Marquette University}
}
\date{}

\begin{document}
\maketitle

\begin{abstract}
On-policy distillation (OPD) enables effective capability transfer between
language models, yet the mechanisms underlying its failures are not fully
understood. Across code generation and mathematical reasoning, OPD with
larger-scale teachers exhibits early loss plateaus, with an average final
loss reduction of 25.1\% after 200 updates, compared with 96.2\% for
self-RL teachers, obtained by further reinforcement learning (RL)
training of the initial student. To
understand this difference, we analyze OPD as an idealized continuous-time
dynamical system in the small-learning-rate limit. Our training-log diagnostics associate
these plateaus with an early decline in a gradient-based learning-signal
proxy while substantial loss remains; these measurements do not establish
why the underlying gradient weakens. We further prove a local
recovery guarantee for teachers sufficiently close to the initial student
in a shared parameterization under regularity conditions, offering a
conditional explanation for the success of self-RL teachers in our
experiments. Across runs with and without loss plateaus, we observe small
relative parameter changes (0.025--0.098\%) and high similarity between the
student's representations before and after OPD (linear CKA $>0.98$ across
layers). These observations suggest that limited representation adaptation
may contribute to learning-signal collapse, a hypothesis that remains to be
tested.
Code is available at
\url{https://github.com/leizhao7/opd-learning-signals}.
\end{abstract}

\section{Introduction}
\label{sec:intro}

On-policy distillation (OPD) has become an important component of
language-model post-training
\citep{yang2025qwen3,xiao2026mimo,lu2025onpolicy}. In OPD, the student
generates its own responses and is trained to match the teacher's next-token
distributions along these responses \citep{agarwal2024policy}.
This provides dense supervision on the prefixes the student actually visits,
reducing the train--inference distribution mismatch that motivates on-policy
imitation learning \citep{ross2011reduction,agarwal2024policy}. This approach has
shown promise for transferring capabilities between language models
\citep{agarwal2024policy,gu2024minillm,yang2025qwen3,xiao2026mimo}.
Despite these successes, recent studies have observed that
OPD does not consistently improve student performance, even when the teacher
substantially outperforms the student \citep{li2026rethinking}. These
findings suggest that teacher capability alone is insufficient to ensure
effective transfer, motivating a closer examination of when and why OPD
fails.

To investigate these failures, we examine OPD training dynamics across code
generation and mathematical reasoning (Section~\ref{sec:phenomenon}). We
observe that \emph{self-RL teachers}, obtained by further reinforcement learning (RL)
training of the initial student, support sustained loss reduction and substantial
improvements in student performance. By contrast, OPD with the larger-scale
teachers in our experiments exhibits early loss plateaus and more limited
performance gains (Figure~\ref{fig:teacher-trainability}). In these runs,
the distillation loss stops decreasing while substantial teacher--student
mismatch remains. Related observations have also been reported elsewhere:
\citet{li2026rethinking} observe weak gradients and persistent
token-probability discrepancies in a failing run, while
\citet{jiang2026cornerstones} report persistent
high-loss tokens after the aggregate OPD objective saturates. Together with
our results, these reports indicate that limited optimization progress
despite unresolved disagreement occurs across multiple OPD settings. This
raises a central question:
\begin{center}
\emph{Why does OPD stop reducing its loss before the student matches the teacher?}
\end{center}

Prior work has proposed several explanations for why OPD does not always
improve student performance
\citep{li2026rethinking,fu2026revisiting,wang2026demystifying}.
These include teacher--student incompatibility
\citep{li2026rethinking,wang2026not}, unreliable teacher guidance
on student-generated prefixes \citep{fu2026revisiting,zhu2026many},
limitations of sampled-token supervision
\citep{fu2026revisiting,jia2026asymmetric}, and imperfect alignment between
distillation updates and task success
\citep{wang2026demystifying,armandpour2026unmasking}. These studies show that a strong teacher does not always provide feedback
that helps the student learn. However, these explanations do not fully clarify why the OPD loss keeps
decreasing with some teachers but plateaus early with others.

We address this problem in the small-learning-rate limit
\citep{borkar2000ode}. In this setting, we model OPD as an idealized
continuous-time dynamical system and analyze its loss dynamics
(Section~\ref{sec:loss-dynamics}).
Our main contributions are as follows:
\begin{itemize}[leftmargin=*,itemsep=3pt,topsep=4pt,parsep=0pt]
  \item We identify a recurring failure mode of OPD across models and tasks:
  training stalls while much of the distillation loss remains. This pattern
  occurs with the larger-scale teachers in our experiments
  (Section~\ref{sec:phenomenon}).

  \item We diagnose an early decline in a training-log proxy for the learning
  signal while substantial loss remains, which we describe as
  \emph{premature learning-signal collapse}
  (Section~\ref{sec:learning-signal-collapse}).

  \item We prove a local recovery guarantee for OPD with teachers
  sufficiently close to the initial student in a shared parameterization
  (Section~\ref{sec:parameter-recovery}). This result offers a conditional
  theoretical explanation for the success of OPD in our experiments when
  self-RL teachers are used.

  \item Across runs with and without loss plateaus, we observe small
  relative parameter changes and high similarity between the student's
  representations before and after OPD
  (Section~\ref{sec:parameter-representation}). These observations motivate the hypothesis that limited adaptation may
  constrain teacher matching; they do not establish a causal explanation
  for the observed plateaus.
\end{itemize}

\section{Related Work}
\label{sec:related-work}

\paragraph{On-policy distillation and its failures.}
OPD trains students on their own rollouts, extending on-policy imitation
learning \citep{ross2011reduction} to language models
\citep{lin2020autoregressive,agarwal2024policy,gu2024minillm,ko2024distillm}.
GKD matches teacher and student next-token distributions on
student-generated responses \citep{agarwal2024policy}, while MiniLLM minimizes
sequence-level reverse KL using policy gradients \citep{gu2024minillm}.
OPD is used in LLM post-training \citep{yang2025qwen3,lu2025onpolicy}
and has been interpreted as dense KL-constrained RL \citep{yang2026learning}
and online imitation learning \citep{zhang2026does}. However, a stronger
model is not always a better teacher. In classical distillation, students
can learn less from much larger teachers \citep{cho2019efficacy,mirzadeh2020improved}
and fail to match them despite sufficient capacity \citep{stanton2021does}.
In OPD, \citet{li2026rethinking} emphasize alignment between teacher and
student thinking patterns and observe small gradients in failing runs;
\citet{wang2026not} find that teacher supervision is learnable
mainly where the teacher assigns probability mass to the student's top
candidate tokens. Other studies link OPD failures to imbalanced token-level
supervision \citep{fu2026revisiting,jia2026asymmetric},
teacher signals misaligned with task success
\citep{wang2026demystifying,armandpour2026unmasking}, and progressively slower
learning from supervision \citep{fu2026rethinking}. In RL fine-tuning,
low reward variance can lead to small expected gradients
\citep{razin2024vanishing}, and a more accurate reward model need not be a
better teacher \citep{razin2026makes}. These findings motivate our
population-level analysis of how the OPD loss evolves as the student and its
rollout distribution change (Section~\ref{sec:loss-dynamics}).

\paragraph{Training dynamics and geometry of post-training.}
Continuous-time dynamics can approximate gradient descent with small step
sizes \citep{elkabetz2021continuous}, while Polyak--{\L}ojasiewicz-type
conditions yield convergence guarantees for nonconvex optimization,
including policy gradient methods \citep{karimi2016linear,mei2020global}.
Post-training often induces limited model changes. Language-model
fine-tuning can be well described by kernel dynamics
\citep{malladi2023kernel}. Studies of RL post-training report proximity to
the base model in KL divergence \citep{shenfeld2026rl}, updates that
avoid principal weight directions \citep{zhu2025path}, preserved
hidden representations \citep{runwal2026prism}, and many unchanged weights
at BF16 precision \citep{mukherjee2025reinforcement}. In OPD, updates quickly
concentrate in a low-dimensional subspace \citep{shen2026geometry}.
\citet{cai2026learning} explain this early low-rank structure using a local
quadratic model around the initial student with fixed contexts. We study
how teacher proximity shapes OPD loss by expanding around the teacher
under a changing rollout distribution
(Section~\ref{sec:parameter-recovery}) and compare parameter and
representation changes under OPD and RL post-training
(Section~\ref{sec:parameter-representation}).

\section{Preliminaries}
\label{sec:preliminaries}

\subsection{On-policy distillation}
\label{sec:opd-prelim}

\paragraph{Notation.}
A prompt $x\sim\mathcal D$ is followed by a response $y=(y_1,\ldots,y_{|y|})$
of tokens from a vocabulary $\cV$.  The prefix state at position $t$ is
$s_t=(x,y_{<t})$ and the action is the next token $a_t=y_t$, so a policy
$\pi$ factorizes as $\pi(y\mid x)=\prod_{t=1}^{|y|}\pi(a_t\mid s_t)$.  The
student policy $\pi_\theta$ has parameters $\theta$; the teacher policy is
$\pi_T$.  

\paragraph{On-policy distillation.}
OPD minimizes the reverse KL between the student's and the teacher's
next-token distributions on the prefixes of the student's own sampled
responses,
\begin{equation}
 \widehat{\mathcal L}(\theta;\cS)=\frac{1}{|\cS|}
 \sum_{s\in\cS}
 \KL\!\bigl(\pi_\theta(\cdot\mid s)\,\Vert\,\pi_T(\cdot\mid s)\bigr),
 \quad
 \mathcal L(\theta)=\E_{x\sim \mathcal D, y\sim\pi_\theta}\KL\!\bigl(\pi_\theta(\cdot\mid s)\,\Vert\,\pi_T(\cdot\mid s)\bigr),
 \label{eq:online-loss}
\end{equation}
where $\cS$ is a set of prefixes, and a prefix $s=(x,y_{<t})$ is distributed
by drawing $x\sim\mathcal D$, sampling $y\sim\pi_\theta(\cdot\mid x)$, and
taking the position $t$ uniformly from $1,\ldots,|y|$.  In
$\widehat{\mathcal L}$ the set $\cS$ collects the prefixes of the responses
sampled in a minibatch.  

In practice, the sum over the full vocabulary in the
KL is replaced by a top-$K$ version.  At each prefix $s$, we retain only the $K$ tokens with the highest
student probabilities, denoted by $\cV_K(s)$, and renormalize the
student distribution over this set to obtain $\tilde\pi_\theta$.
We then average the log-ratio under this renormalized distribution:
\begin{equation}
  \sum_{a\in\cV_K(s)}\tilde\pi_\theta(a\mid s)
  \bigl[\log\pi_\theta(a\mid s)-\log\pi_T(a\mid s)\bigr].
  \label{eq:topk-kl}
\end{equation}
The full KL needs the teacher's log-probabilities for all $|\cV|$ tokens at
every prefix, whereas Eq~\eqref{eq:topk-kl} needs them for top $K$ tokens and drops
the tail of the distribution. We use a top-$K$ approximation to the reverse KL loss with $K=16$ during training.

\section{Early Loss Plateaus in On-Policy Distillation}
\label{sec:phenomenon}

We compare OPD with different teachers across code generation and
mathematical reasoning. In our experiments, OPD with larger-scale teachers
exhibits early loss plateaus while substantial teacher--student performance
gaps remain. By contrast, self-RL teachers support
sustained loss reduction and larger performance gains.

\begin{table}[htbp]
  \centering
  \normalsize
  \caption{Student--teacher pairs in the seven main OPD runs. Full model names
  and training provenance are given in Appendix~\ref{app:teacher-comparison-details}.}
  \label{tab:student-teacher-pairs}
  \normalsize
  \setlength{\tabcolsep}{5pt}
  \renewcommand{\arraystretch}{1.0}
  \begin{tabular}{@{}lll@{}}
    \hline
    \textbf{Task} & \textbf{Student} & \textbf{Teacher} \\
    \hline
    Code & \QwenFourNT{}
      & \QwenRLCode{} \\
      & & \QwenFourteenNT{} \\
    \hline
    Math & \QwenOneSevenNT{}
      & \QwenRLMath{} \\
      & & \QwenFourNT{} \\
    \hline
    Math & R1-Distill-1.5B
      & JustRL-1.5B \\
      & & \SkyworkTeacher{} \\
      & & R1-Distill-7B \\
    \hline
  \end{tabular}
\end{table}

\subsection{Experimental Setup}
\label{sec:phenomenon-setup}

\paragraph{Models.}
We study three student settings spanning code generation and mathematical
reasoning, with seven student--teacher pairs listed in
Table~\ref{tab:student-teacher-pairs}.  Details of the models are given in
Appendix~\ref{app:teacher-comparison-details}.  We use
\emph{self-RL teachers} to refer specifically to teachers obtained
by further RL training of the initial student, namely \QwenRLCode{} and
JustRL-1.5B, and \emph{larger-scale teachers} to refer to teachers with more
parameters than the student, namely Qwen3-14B, \QwenRLMath{}, Qwen3-4B,
\SkyworkTeacher{}, and R1-Distill-7B.

\paragraph{Datasets.}
For training, we use Eurus-2-RL-Data~\citep{cui2025process} for code generation and DAPO-Math-17K~\citep{yu2026dapo} for mathematical reasoning.

\paragraph{Training.}
We implement OPD in veRL~\citep{sheng2025hybridflow} and optimize the top-$K$ objective in Eq~\eqref{eq:topk-kl} with $K=16$ for the seven main runs. These runs use one rollout per prompt and 64 prompts per update. We train for 200 steps on both code generation and mathematical reasoning. We use AdamW with a constant learning rate of $10^{-6}$, a sampling temperature of 1 for both the student and teacher, and a gradient clipping norm of 1. The remaining hyperparameters are reported in Table~\ref{tab:opd-training-config}.

\paragraph{Evaluation.}
We evaluate code on HumanEval+, MBPP+~\citep{liu2023your}, and LiveCodeBench~\citep{jain2025livecodebench}, and mathematics on AIME 2024, AIME 2025, and AMC 2023. For each benchmark, we sample \(k\) responses per problem and report \(\mathrm{avg}@k\), the fraction of correct responses across all sampled responses and problems, using \(k=4\) for code and \(k=8\) for mathematics. We cap responses at 16,384 tokens for code and 31,744 tokens for mathematics. Task-level validation accuracy is the unweighted mean of the three benchmark accuracies, ensuring that each benchmark contributes equally regardless of its number of problems.

\subsection{Evaluation Metrics}
\label{sec:evaluation-metrics}


\paragraph{Accuracy gain and gap recovery.}
Let $\mathrm{Acc}_{\mathrm{start}}$ be the validation accuracy of the base
student model before any OPD updates (step~0), $\mathrm{Acc}_{\mathrm{end}}$ its accuracy at the
final comparison step, and $\mathrm{Acc}_{\mathrm{teacher}}$ the teacher's
validation accuracy. All accuracies are expressed as percentages (\%). We define the student's accuracy gain as
$\mathrm{Gain}=\mathrm{Acc}_{\mathrm{end}}-\mathrm{Acc}_{\mathrm{start}}$,
reported in percentage points (pp). Gap recovery is the percentage of the
initial teacher--student accuracy gap closed by OPD:
\[
\mathrm{GapRecovery}
=
\frac{\mathrm{Acc}_{\mathrm{end}}-\mathrm{Acc}_{\mathrm{start}}}
     {\mathrm{Acc}_{\mathrm{teacher}}-\mathrm{Acc}_{\mathrm{start}}}
\times 100\%.
\]
\subsection{Experimental Results}
\label{sec:phenomenon-results}

\begin{figure}[!ht]
 \centering
 \includegraphics[width=\textwidth]{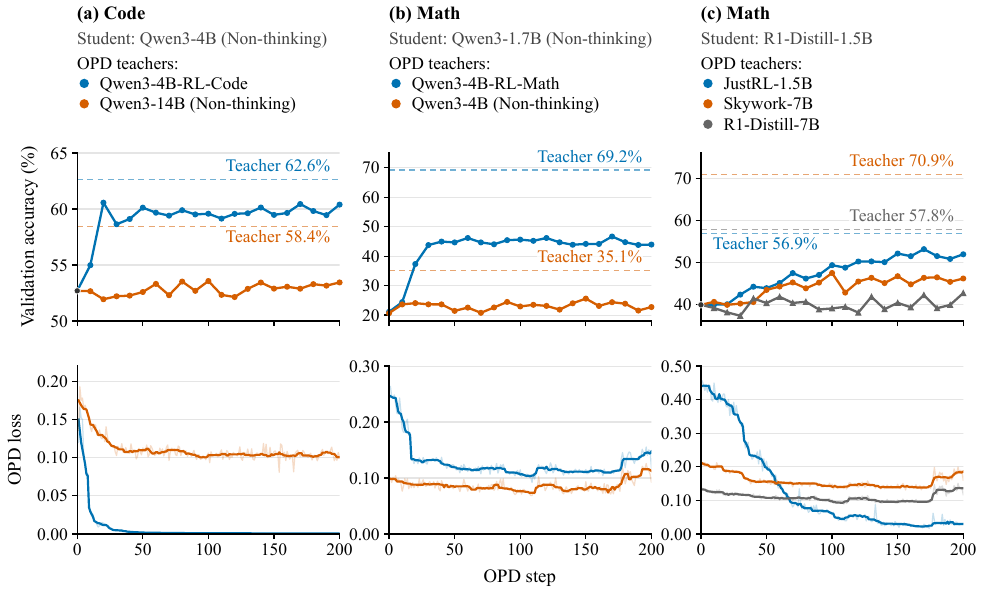}
 \caption{Validation accuracy (top) and OPD loss (bottom) over 200 updates.
 The black point in panels (a) and (c) is the shared initial-student
 evaluation. Dashed lines mark teacher accuracies. Loss curves are centered
 9-step rolling medians; raw losses are shown faintly. Qwen3 models use
 non-thinking mode.}
 \label{fig:teacher-trainability}
\end{figure}

\paragraph{The OPD loss plateaus with larger-scale teachers while the gap remains large.}
\label{sec:loss-plateau}
As Figure~\ref{fig:teacher-trainability} shows, with larger-scale teachers,
the OPD loss stops decreasing early in training, then stays roughly flat in
the code run and rises again in the math runs.
Table~\ref{tab:loss-ability-recovery} quantifies this plateau: the maximum
observed loss reduction in these five runs is only $30.4\%$--$65.1\%$,
compared with $96.3\%$--$99.5\%$ for self-RL teachers. The corresponding
final reductions average $25.1\%$ and $96.2\%$, respectively
(Appendix~\ref{app:teacher-comparison-details}). The
student's validation accuracy plateaus at the same time. After 200 updates, it has
recovered only $13.0\%$--$47.5\%$ of the teacher--student performance gap,
leaving at least half of the gap unclosed.  With a larger-scale teacher, OPD can stop improving the student while much of the accuracy gap remains and the reverse-KL distance to the teacher is still large.

\begin{table}[htbp]
  \centering
  \normalsize
  \setlength{\tabcolsep}{2pt}
  \renewcommand{\arraystretch}{1.5}
  \caption{OPD outcomes over 200 updates: step-1 and minimum loss, loss reduction $(1-\widehat{\mathcal L}_{\min}/\widehat{\mathcal L}_{\mathrm{start}})\times 100\%$ from unrounded losses, accuracy gain from the base student at step~0, and gap recovery (Section~\ref{sec:evaluation-metrics}). Bold marks within-group maxima; $^{\dagger}$ marks self-RL teachers.}
  \label{tab:loss-ability-recovery}
  \begin{tabular}{p{83pt}lrrccc}
    \hline
    \textbf{Student} & \textbf{Teacher} & $\widehat{\mathcal L}_{\mathrm{start}}$ & $\widehat{\mathcal L}_{\min}$ & \textbf{Loss} & \textbf{Gain} & \textbf{Gap} \\
    & & & & \textbf{reduction} & \textbf{(pp)} & \textbf{recovery} \\
    \hline
    \multirow{2}{83pt}{\raggedright \textbf{Code}\\\QwenFourNT{}} & \QwenRLCode{}$^{\dagger}$ & 0.120 & 0.001 & \textbf{99.5\%} & \textbf{+7.70} & \textbf{77.4\%} \\
     & \QwenFourteenNT{} & 0.136 & 0.081 & 40.5\% & +0.75 & 13.0\% \\
    \hline
    \multirow{2}{83pt}{\raggedright \textbf{Math}\\\QwenOneSevenNT{}} & \QwenRLMath{} & 0.263 & 0.092 & \textbf{65.1\%} & \textbf{+22.83} & \textbf{47.5\%} \\
     & \QwenFourNT{} & 0.099 & 0.069 & 30.4\% & +2.07 & 14.3\% \\
    \hline
    \multirow{3}{83pt}{\raggedright \textbf{Math}\\R1-Distill-1.5B\\\strut} & JustRL-1.5B$^{\dagger}$ & 0.455 & 0.017 & \textbf{96.3\%} & \textbf{+12.03} & \textbf{70.7\%} \\
     & \SkyworkTeacher{} & 0.210 & 0.131 & 37.7\% & +6.27 & 20.2\% \\
     & R1-Distill-7B & 0.136 & 0.089 & 34.3\% & +2.80 & 15.6\% \\
    \hline
  \end{tabular}
\end{table}

\paragraph{Self-RL teachers drive the OPD loss to near zero and recover most of the gap.}
\label{sec:teacher-strength}
    Figure~\ref{fig:teacher-trainability} shows that OPD follows a different training dynamic with self-RL teachers than with larger-scale teachers. With self-RL teachers, the OPD loss continues to decrease rather than plateau. As shown in Table~\ref{tab:loss-ability-recovery}, the maximum observed loss reduction reaches 99.5\% for \QwenRLCode{} and 96.3\% for JustRL-1.5B. In the same runs, the student closes 77.4\% and 70.7\% of the accuracy gap to the teacher, respectively.  These results suggest that students can approach self-RL teachers in reverse-KL distance during training and learn transferable abilities from them.

\paragraph{SFT does not resolve the failure, even when followed by OPD.}
\citet{li2026rethinking} use SFT on teacher responses before OPD to recover
failing OPD. In our experiments, SFT alone does no better than direct OPD
with any of the three larger-scale teachers tested. We further continue
OPD for 200 updates from the SFT checkpoints for R1-Distill-7B and
\QwenFourNT{}. The final accuracies are 40.2\% and 22.6\%, respectively,
compared with 42.7\% and 22.7\% for direct OPD. The maximum observed loss
reductions during these subsequent OPD runs are only 37.1\% and 23.0\%.
Thus, SFT initialization does not resolve the limited loss reduction or
improve the final accuracy over direct OPD in these two settings
(Appendix~\ref{app:ablation-sft}).

\paragraph{Ablations of vocabulary support and rollout count.}
For the \QwenOneSevenNT{} math student, ablations comparing top-16 with
full-logit OPD, and one versus eight training rollouts per prompt, show similar
training behavior and closely matched final results
(Appendix~\ref{app:teacher-comparison-details}).

\section{Vanishing Learning Signals: Loss Dynamics of OPD}
\label{sec:main}
Section~\ref{sec:phenomenon-results} raises an optimization question: why does OPD with larger-scale teachers stop reducing its loss, while OPD with self-RL teachers continues to do so?
Section~\ref{sec:loss-dynamics} introduces a continuous-time framework for OPD
and uses it to analyze the loss dynamics.
Section~\ref{sec:learning-signal-collapse} examines gradient-based
diagnostics and their limitations.
Section~\ref{sec:parameter-recovery} gives a sufficient local condition
for loss reduction with teachers sharing the student's parameterization.

\subsection{Mean-Flow Formulation and Loss Dynamics}
\label{sec:loss-dynamics}

Write $\theta_0$ for the student parameters at the start of OPD. By differentiating $\mathcal L(\theta)$, we get:
\begin{equation}
 \nabla \mathcal L(\theta)
 =\underbrace{\E_{\tau\sim\pi_\theta}
       \nabla_\theta\widehat{\mathcal L}(\theta;\tau)}
      _{=:\;g(\theta)}
 +\underbrace{\E_{\tau\sim\pi_\theta}
   \bigl[\widehat{\mathcal L}(\theta;\tau)\,
         \nabla_\theta\log\pi_\theta(y\mid x)\bigr]}
      _{=:\;h(\theta)},
 \label{eq:online-gradient-split}
\end{equation}
Here, the \emph{detached gradient} $g$ is the KL gradient evaluated at prefixes
currently visited by the student, whereas the \emph{occupancy gradient} $h$
captures how changes in $\theta$ alter the prefixes the student visits
\citep{mohamed2020monte}.

At step $m$, OPD updates the parameters according to
$\theta_{m+1}=\theta_m-\eta\,\widehat g_m$, where $\eta$ is the learning rate
and $\widehat g_m$ is an unbiased stochastic gradient of
$\widehat{\mathcal L}$, satisfying
$\mathbb{E}[\widehat g_m\mid\theta_m]=g(\theta_m)$. In the small-learning-rate
limit, the sampling noise averages out. With the rescaled time $t=m\eta$, the
iterates are then described by the ODE
\citep{kushner2003stochastic,borkar2000ode}
\begin{equation}
 \dot\theta(t)=-g(\theta(t)),
 \qquad \theta(0)=\theta_0.
 \label{eq:opd-flow-alignment}
\end{equation}
See Appendix~\ref{app:ode-limit} for the derivation and a discussion of the
idealizations used here.
Under this framework, we obtain the
following exact loss dynamics, together with a factorization of the decay
rate.

\begin{proposition}[OPD loss dynamics]
\label{thm:loss-dynamics}
Under the regularity conditions of Appendix~\ref{app:proof-loss-dynamics},
while the loss is positive, the flow in Eq~\eqref{eq:opd-flow-alignment} satisfies
\begin{equation}
 \frac{d}{dt}\mathcal L(\theta(t))=-2\gamma(\theta(t))\,\mathcal L(\theta(t)),
 \qquad
 \gamma:=\frac{\ip{\nabla\mathcal L}{g}}{2\mathcal L}=\mu(1+\alpha),
 \label{eq:main-flow-identity}
\end{equation}
where $\mu:=\norm{g}^2/(2\mathcal L)$ and $\alpha:=\ip{h}{g}/\norm{g}^2$, the
last equality holding wherever $g\ne0$.
\end{proposition}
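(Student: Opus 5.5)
The identity is essentially the chain rule combined with the gradient decomposition in Eq~\eqref{eq:online-gradient-split}. The plan has three short steps.

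\emph{Step 1 (differentiability along the flow).} Using the regularity conditions of Appendix~\ref{app:proof-loss-dynamics}, I will argue that $\mathcal L$ is continuously differentiable in $\theta$ and that $g$ is locally Lipschitz. For example, this holds for bounded response lengths, smooth logits, and teacher log-probabilities that are bounded on the relevant support, so that the KL summands and score functions are integrable. Local Lipschitzness of $g$ gives a unique $C^1$ solution $\theta(t)$ of Eq~\eqref{eq:opd-flow-alignment}. Differentiability of $\mathcal L$ together with the product rule for $\E_{\tau\sim\pi_\theta}$ (the log-derivative trick) justifies Eq~\eqref{eq:online-gradient-split}, that is, $\nabla\mathcal L=g+h$. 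This exchange of differentiation and expectation is the only genuinely technical point and the main obstacle. With finitely many responses of bounded length it is a finite sum and is immediate. Otherwise I would invoke dominated convergence, using an integrable envelope on $\widehat{\mathcal L}(\theta;\tau)\norm{\nabla_\theta\log\pi_\theta(y\mid x)}$ and on $\norm{\nabla_\theta\widehat{\mathcal L}}$, uniformly on a neighborhood of $\theta$.

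\emph{Step 2 (chain rule).} Along the trajectory,
\[
\frac{d}{dt}\mathcal L(\theta(t))=\ip{\nabla\mathcal L(\theta(t))}{\dot\theta(t)}=-\ip{\nabla\mathcal L}{g}.
\]
On the set where $\mathcal L(\theta(t))>0$, we multiply and divide by $2\mathcal L$. By the definition of $\gamma$ this gives $\frac{d}{dt}\mathcal L=-2\gamma\mathcal L$. This part is purely definitional and holds whenever the loss is positive, which is why the statement is restricted to that regime.

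\emph{Step 3 (factorization).} Substituting $\nabla\mathcal L=g+h$ gives $\ip{\nabla\mathcal L}{g}=\norm{g}^2+\ip{h}{g}$. Where $g\neq0$, factor out $\norm{g}^2$ to obtain
\[
\gamma=\frac{\norm{g}^2}{2\mathcal L}\Bigl(1+\frac{\ip{h}{g}}{\norm{g}^2}\Bigr)=\mu(1+\alpha).
\]
Where $g=0$, $\alpha$ is undefined but $\gamma=0$ still holds, consistent with the first identity.
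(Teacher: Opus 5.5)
Your proposal is correct and follows the paper's proof. The chain rule along $\dot\theta=-g$ gives $\frac{d}{dt}\mathcal L=-\ip{\nabla\mathcal L}{g}$, and substituting the split $\nabla\mathcal L=g+h$ and dividing $\norm{g}^2+\ip{h}{g}$ by $2\mathcal L$ yields $\mu(1+\alpha)$; the paper takes the interchange of differentiation and expectation, and the flow regularity (including that the trajectory stays in the region $\cU$ on $[0,T_\star]$, a point your Step 1 leaves implicit), as assumptions, so your dominated-convergence discussion is optional.
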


The proof is given in Appendix~\ref{app:proof-loss-dynamics}.

Proposition~\ref{thm:loss-dynamics} describes how fast the loss falls.
The quantity $2\gamma$ is the instantaneous fractional rate of loss reduction.
The average of $\gamma$ over $[0,t]$ is
$\overline\gamma(t):=\frac1t\int_0^t\gamma(\theta(u))\,du$.  Integrating Eq.~\eqref{eq:main-flow-identity} gives
$\overline\gamma(t)=\frac{1}{2t}
\log\bigl(\mathcal L(\theta_0)/\mathcal L(\theta(t))\bigr)$,
so $\overline\gamma(t)$ can be computed directly from the loss curve.
A loss plateau, such as those observed in
Section~\ref{sec:phenomenon-results}, corresponds to a small
average of $\gamma$ over the plateau interval.  The
factorization $\gamma=\mu(1+\alpha)$ splits this rate into two factors, so a
small rate can arise in only two ways, through a small $\mu$ or through
$\alpha$ near or below $-1$.

Here, $\mu=\norm{g}^2/(2\mathcal L)$ quantifies the learning signal relative to the
remaining loss; a small $\mu$ indicates a weak signal. The coefficient
$\alpha=\ip{h}{g}/\norm{g}^2$ measures how much of the occupancy gradient $h$
points along the update $g$. So it indicates whether changes in the rollout
distribution reinforce the update ($\alpha>0$) or oppose it ($\alpha<0$).
When $\alpha\leq -1$, the loss no longer decreases, even if $g\neq 0$.

\subsection{Premature Learning-Signal Collapse with Larger-Scale Teachers}
\label{sec:learning-signal-collapse}

Proposition~\ref{thm:loss-dynamics} suggests two possible explanations for
the loss plateau in Section~\ref{sec:phenomenon-results}: the learning signal
may be weak ($\mu$ is small), or the occupancy gradient may oppose the update
($\alpha\le -1$).  We measure both quantities in the code setting with student Qwen3-4B
(Non-thinking) and in
the math setting with student R1-Distill-1.5B.  At every training step $m$, we estimate
$\mu$ from the training logs as
$\widehat\mu_m=\widehat G_m^2/(2\widehat{\mathcal L}_m)$, where $\widehat G_m$
is the gradient norm and $\widehat{\mathcal L}_m$ is the loss.  We estimate $\alpha$ using
$\widehat\alpha_m$ at checkpoints saved every 10 training steps.
At each checkpoint, we use sampled responses to cross-fit
$g$ and $h$.
See Appendix~\ref{app:teacher-comparison-details} for details.
The measured $\widehat\mu_m$ uses a noisy, pre-clipping minibatch gradient;
it does not measure the size of an AdamW parameter update. Moreover, the
squared minibatch gradient norm includes sampling variance, so a decline
in this proxy alone does not establish a decline of the same magnitude
in $\norm{g}^2$.
\paragraph{The logged learning-signal proxy declines while much of the loss remains.}
Figure~\ref{fig:code-signal-collapse} plots the estimated learning signal
$\widehat{\mu}_m$ against the remaining loss at each training step.  With
self-RL teachers, the loss decreases by more than $90\%$ as the
learning signal declines gradually.  By contrast, with larger-scale teachers, the logged proxy drops by approximately an order of magnitude while over
$60\%$ of the loss remains. After this
drop, the loss changes little.  We call this pattern \emph{premature
learning-signal collapse}: the logged gradient weakens before the teacher--student
mismatch has been substantially reduced. Here, ``premature'' describes the
early timing and remaining disagreement; it does not establish that the
student can achieve a lower loss under the same parameterization.

\begin{figure}[tbp]
 \centering
 \includegraphics[width=\textwidth]{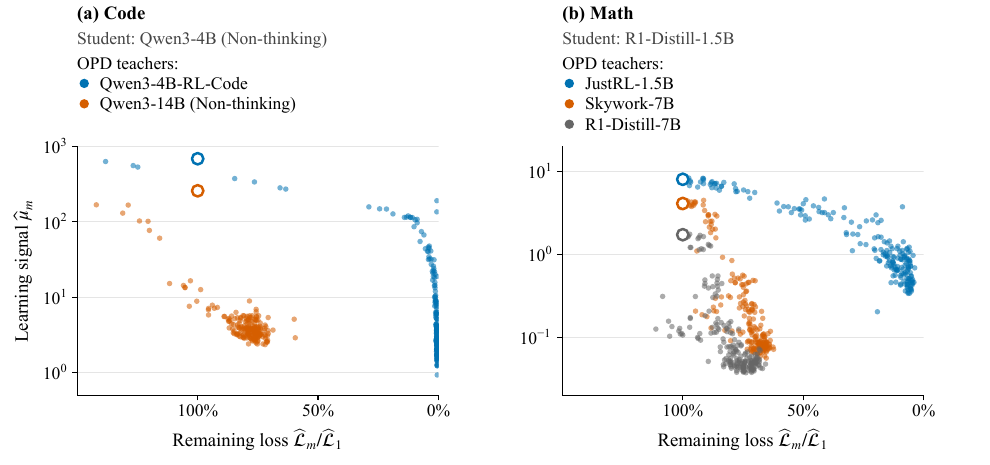}
 \caption{Learning signal $\widehat\mu_m$ versus remaining loss
 $\widehat{\mathcal L}_m/\widehat{\mathcal L}_1$ over 200 updates.
 Remaining loss decreases from left to right; the learning-signal axis is
 logarithmic. Each dot is one training step; open circles mark step~1.
 Colors and teachers match Figure~\ref{fig:teacher-trainability}.
 Appendix~\ref{app:teacher-comparison-details} gives the estimators and
 sources, and shows the corresponding time series in
 Figure~\ref{fig:signal-collapse-timeseries}.}
 \label{fig:code-signal-collapse}
\end{figure}

\paragraph{The occupancy gradient does not consistently explain the loss plateaus.}
Table~\ref{tab:occupancy-factor} shows that, in the code setting,
$1+\widehat\alpha_m$ is positive at 19 of 20 checkpoints for Qwen3-14B.  These
estimates provide little evidence for persistent occupancy cancellation in
this run.  In the math setting, $1+\widehat{\alpha}_m$ is positive at 12 of the 20
checkpoints for \SkyworkTeacher{} and 13 of the 20 checkpoints for
R1-Distill-7B. The other estimates are consistent with the occupancy gradient offsetting
or reversing loss reduction in the idealized flow. This may slow training further, alongside the collapse
of $\mu$.  

Across the code and math settings, an early drop in $\widehat\mu_m$ is the
common pattern. Cross-fitted estimates of $\norm{g}^2$ decline from early to
middle training in Math but remain roughly stable for Qwen3-14B after
step 20. The occupancy estimates remain tentative, however, because
$\alpha$ is measured at only 20 checkpoints per teacher in each setting.

\begin{table}[htbp]
  \centering
  \normalsize
  \caption{Occupancy factor $1+\widehat\alpha_m$ for larger-scale teachers:
  median, interquartile range (IQR), and counts above zero across 20
  checkpoints (steps 10--200, every 10 steps). Quartiles use linear
  interpolation. The estimator is given in
  Appendix~\ref{app:teacher-comparison-details}.}
  \label{tab:occupancy-factor}
  \begin{tabular}{@{}llccc@{}}
    \toprule
    & & \multicolumn{3}{c}{$1+\widehat\alpha_m$} \\
    \cmidrule(l){3-5}
    \textbf{Task} & \textbf{Teacher} & \textbf{Median} & \textbf{IQR} & \textbf{Positive} \\
    \midrule
    Code & Qwen3-14B & $1.10$ & $[0.67,\,1.38]$ & 19/20 \\
    \midrule
    Math & \SkyworkTeacher{} & $0.13$ & $[-2.28,\,1.48]$ & 12/20 \\
         & R1-Distill-7B & $0.41$ & $[-0.83,\,1.37]$ & 13/20 \\
    \bottomrule
  \end{tabular}
\end{table}

\subsection{Local Convergence with Self-RL Teachers}
\label{sec:parameter-recovery}

We now study a sufficient local condition for continued loss reduction
that is motivated by the self-RL teacher experiments.  Since each such teacher is obtained by further RL training of the
student (Section~\ref{sec:phenomenon-setup}), it shares the parameter space of
the student, and we write it as $\theta_T=\theta_0+\Delta$.  We also find that
the teacher is close to the student in this space.  As
Table~\ref{tab:parameter-representation} shows, the relative distance
$\norm\Delta/\norm{\theta_0}$ is 0.030\% for \QwenRLCode{} and 0.50\% for
JustRL-1.5B.
As a supplementary run, we also apply OPD with the self-RL
instruction-following teacher UltraData-IF-1.5B, whose distance is 0.037\%
(Appendix~\ref{app:if-experiment}).
If the teacher is close enough, the OPD loss around it is approximately
quadratic (Proposition~\ref{prop:local-expansion}).  On such a quadratic, the
OPD flow keeps reducing the loss.  Theorem~\ref{thm:main} makes this precise
and shows how the remaining loss depends on the distance.
\paragraph{Setting.}
We analyze OPD in the continuous-time ODE framework of
Eq~\eqref{eq:opd-flow-alignment}.  We also use three regularity conditions that
are standard in policy-gradient analysis, namely (i) bounded rollout length
\citep{razin2026makes}, (ii) a bounded and Lipschitz policy score
\citep{fatkhullin2023stochastic,barakat2022analysis,chakraborty2024parl}, and
(iii) a Lipschitz Hessian of $\mathcal L$ \citep{jaiswal2024low}.
Appendix~\ref{app:main-proof} states them in full.  Our OPD experiments with self-RL teachers in
Section~\ref{sec:phenomenon-results} motivate this analysis but do not
directly test its local, spectrum-dependent bound.

\paragraph{Slow directions and residual loss.}
The recovery bound depends on both the teacher--student distance
$\rho=\|\Delta\|$ and the directions of $\Delta$. We characterize these
directions using the initial student's Fisher matrix $H_0=H(\theta_0)$,
which locally approximates the loss Hessian for nearby teachers
(Appendix~\ref{app:endpoint-geometry}). In the quadratic approximation,
loss along an eigenvector $u_i$ of $H_0$ with eigenvalue $\lambda_i$ decays as $e^{-2\lambda_i t}$, so directions
with small eigenvalues are learned slowly (Appendix~\ref{app:local}).
For a cutoff $\delta>0$, their contribution to the initial loss is
\begin{equation}
E_\Delta(\delta)
:=\frac12\sum_{0<\lambda_i\le\delta}
\lambda_i(u_i^\top\Delta)^2
\le \frac{\delta}{2}\rho^2.
\label{eq:endpoint-tail}
\end{equation}
\begin{theorem}[Abridged version of Theorem~\ref{thm:detailed-recovery}]
\label{thm:main}
Let $\theta_T$ be a self-RL teacher, with $\rho=\norm\Delta>0$.  Under the conditions of Appendix~\ref{app:main-proof}, the OPD flow in
Eq~\eqref{eq:opd-flow-alignment} stays within distance $2\rho$ of the teacher up
to a time $T_{\mathrm{loc}}=\Omega(1/\rho)$.  For every $\delta>0$ and
$0\le t\le T_{\mathrm{loc}}$,
\begin{equation}
 \mathcal L(\theta(t))
 \le e^{-\delta t}\mathcal L(\theta_0)+E_\Delta(\delta)
 +O\bigl(\rho^3+\rho^4/\delta\bigr),
 \label{eq:main-spectral-rate}
\end{equation}
where $\Omega(\cdot)$ and $O(\cdot)$ hide constants depending on the policy-score
bounds, the rollout length and the Lipschitz constant of the Hessian of
$\mathcal L$.
\end{theorem}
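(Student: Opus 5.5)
We expand the loss around the teacher, compare the OPD flow with a linear flow driven by the Fisher matrix $H_0$, and treat slow spectral directions with an energy estimate.

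\textbf{Local model.} Since $\theta_T=\theta_0+\Delta$ lies in the student's parameter space and $\pi_{\theta_T}=\pi_T$, we have $\mathcal L(\theta_T)=0$. This point is a global minimum, so $\nabla\mathcal L(\theta_T)=0$. At $\theta_T$ the per-prefix KL vanishes identically, so $\widehat{\mathcal L}(\theta_T;\tau)=0$ for every rollout. Hence $h(\theta_T)=0$, and $g(\theta_T)=0$ as well.

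Write $e=\theta-\theta_T$. Proposition~\ref{prop:local-expansion} together with the Lipschitz-Hessian condition gives
\[
\mathcal L(\theta)=\tfrac12 e^\top H(\theta_T)e+O(\|e\|^3).
\]
The Hessian at $\theta_T$ is the occupancy-weighted Fisher matrix $H(\theta_T)$, which differs from $H_0$ by $O(\rho)$ because the score is Lipschitz and the rollout length is bounded. This yields
\[
\mathcal L(\theta)=\tfrac12 e^\top H_0 e+O(\rho\|e\|^2+\|e\|^3).
\]

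For the driving field we also need a first-order expansion. Both $g$ and $h$ vanish at $\theta_T$, and $h$ vanishes to second order because its integrand is $\widehat{\mathcal L}\cdot\text{score}$ with $\widehat{\mathcal L}=O(\|e\|^2)$. It follows that $h=O(\|e\|^2)$ and
\[
g(\theta)=H_0e+O(\rho\|e\|+\|e\|^2).
\]

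\textbf{Staying local.} Define $T_{\mathrm{loc}}$ as the first exit time from the ball $\|e\|\le2\rho$. Along the flow,
\[
\tfrac{d}{dt}\|e\|^2=-2e^\top g\le -2e^\top H_0e+C\rho\|e\|^2\le C\rho\|e\|^2,
\]
since $H_0\succeq0$. Grönwall's inequality then gives $\|e(t)\|\le \rho e^{C\rho t/2}$. This remains at most $2\rho$ for $t\le (2\log 2)/(C\rho)$, so $T_{\mathrm{loc}}=\Omega(1/\rho)$.

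\textbf{Spectral rate.} Expand $e$ in the eigenbasis $\{u_i\}$ of $H_0$, with components $e_i=u_i^\top e$ and $e_i(0)=-u_i^\top\Delta$. We compare with the linear flow $\dot{\bar e}=-H_0\bar e$, which gives $\bar e_i(t)=e^{-\lambda_i t}e_i(0)$. The exact components satisfy
\[
\dot e_i=-\lambda_i e_i+r_i,\qquad \|r\|\le C(\rho\|e\|+\|e\|^2)=O(\rho^2)\ \text{inside the ball}.
\]
Rather than solving each component exactly, we estimate the fast and slow parts of the quadratic energy separately:
\[
Q_{>}=\tfrac12\sum_{\lambda_i>\delta}\lambda_i e_i^2,\qquad Q_{\le}=\tfrac12\sum_{\lambda_i\le\delta}\lambda_ie_i^2 .
\]

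\emph{Fast directions.} Differentiating,
\[
\dot Q_{>}=-\sum_{\lambda_i>\delta}\lambda_i^2e_i^2+\sum_{\lambda_i>\delta}\lambda_ie_ir_i\le -2\delta Q_{>}+\langle H_0e_{>},r\rangle .
\]
We bound the cross term by Young's inequality, $\langle H_0e_>,r\rangle\le \tfrac12\|H_0e_>\|^2+\tfrac12\|r\|^2$, and absorb the first piece using $\|H_0e_>\|^2\ge 2\delta Q_>$ on that subspace. This leaves
\[
\dot Q_>\le-\delta Q_>+O(\rho^4),
\]
so that
\[
Q_>(t)\le e^{-\delta t}Q_>(0)+O(\rho^4/\delta).
\]

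\emph{Slow directions.} Here the linear flow is non-expansive. The perturbation contributes $O(\rho\cdot\rho^2)=O(\rho^3)$ once we use $\lambda_i\le\delta$ and $|e_i|\le2\rho$, with the time integral $O(\rho^2)\cdot t$ controlled against $t\le T_{\mathrm{loc}}=O(1/\rho)$. This gives
\[
Q_{\le}(t)\le E_\Delta(\delta)+O(\rho^3).
\]

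\textbf{Combining the bounds.} We have $\mathcal L\le Q_>+Q_\le+O(\rho^3)$, and $Q_>(0)\le \mathcal L(\theta_0)+O(\rho^3)$. Substituting the two energy bounds yields Eq~\eqref{eq:main-spectral-rate}.

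The main obstacle is the slow block. Integrating the $O(\rho^2)$ perturbation over a time of order $1/\rho$ naively gives $O(\rho)$ in $e_i$, and therefore only $O(\delta\rho^2)$ in energy rather than $O(\rho^3)$. To fix this, I would first try to sharpen the perturbation using the structure $h=O(\|e\|^2)$. I would then write the $g$ error as $(H(\theta)-H_0)e$, so that the cross term becomes $\lambda_ie_i\cdot O(\rho\|e\|)$, and the extra factor $\lambda_i$ combined with a Young split against the fast dissipation brings the loss to the stated order.

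If that does not work, I would carry the slow-block contribution as $O(\rho^3)$ via $|dQ_\le/dt|\le \|H_0^{1/2}e_\le\|\,\|H_0^{1/2}r\|$, bounding $\int_0^t\|H_0^{1/2}e\|^2$ by the total dissipation, which in turn is bounded by $\mathcal L(\theta_0)=O(\rho^2)$. By Cauchy–Schwarz this gives $O(\rho\cdot\rho^{1/2}\cdot\rho^{3/2})=O(\rho^3)$.
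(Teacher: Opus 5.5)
Your overall architecture matches the paper's. You expand around $\theta_T$ with $H_0$ as the reference quadratic and $\norm{H_T-H_0}_{\mathrm{op}}=O(\rho)$, confine the flow to $\mathcal B_{2\rho}$ for time $\Omega(1/\rho)$, and split $E_0(e)=\tfrac12e^\top H_0e$ spectrally at $\delta$. The genuine gap is the slow block, which you flag but do not close.

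\begin{itemize}
\item Your first estimate bounds the cross term by $\delta\cdot2\rho\cdot O(\rho^2)$ and discards the slow block's own dissipation $-\sum_{\lambda_i\le\delta}\lambda_i^2e_i^2$. Integrated over $t=O(1/\rho)$ this gives $O(\delta\rho^2)$. That is not $O(\rho^3+\rho^4/\delta)$ when $\delta\gg\rho$, and it is no better than the trivial bound $E_\Delta(\delta)\le\delta\rho^2/2$.
\item Your second fallback fails as written. The dissipation identity controls $\int_0^t\norm{H_0e}^2\le2E(0)+\int_0^t\norm r^2=O(\rho^2)$. It does not control $\int_0^t\norm{H_0^{1/2}e}^2=\int_0^t2E$, which is only bounded by $O(\rho)$ on $[0,c/\rho]$. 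Even with the correct quantity, Cauchy--Schwarz gives $\bigl(\int\norm{H_0P_{\le\delta}e}^2\bigr)^{1/2}\bigl(\int\norm r^2\bigr)^{1/2}=O(\rho)\cdot O(\rho^{3/2})=O(\rho^{5/2})$, not $O(\rho^3)$.
\item Your first fallback has the right ingredient, the factor $\lambda_i$ in $\lambda_ie_ir_i$. But the term that absorbs it is the slow block's own dissipation, not the fast one. No sharpening of $r$ beyond $\norm r=O(\rho^2)$ is needed.
\end{itemize}

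The missing step is the Young split you already used for $Q_>$, applied to the slow block before discarding anything. Let $a=\norm{H_0P_{\le\delta}e}$ (the projector commutes with $H_0$) and $\varepsilon_g:=\sup_{\mathcal B_{2\rho}}\norm r=O(\rho^2)$. Then
\[
\dot Q_\le=-a^2+\ip{H_0P_{\le\delta}e}{r}\le-a^2+a\norm r\le\tfrac14\norm r^2,
\qquad\text{so}\qquad
Q_\le(t)\le E_\Delta(\delta)+\tfrac14\varepsilon_g^2\,t .
\]
This is $E_\Delta(\delta)+O(\rho^3)$ only because $t$ is also bounded above by $O(1/\rho)$. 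So define $T_{\mathrm{loc}}$ as an explicit $c/\rho$ (the paper takes $(2C_g\rho)^{-1}$), not as the first exit time, which can be much larger.

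Also reorder your fast-block step. Apply Young to $-\norm{H_0e_>}^2+\ip{H_0e_>}{r}$ first, and only then use $\norm{H_0e_>}^2\ge2\delta Q_>$. Once $-\norm{H_0e_>}^2$ has been replaced by $-2\delta Q_>$, the leftover $+\tfrac12\norm{H_0e_>}^2$ cannot be absorbed.

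With these changes, your separate treatment of $Q_>$ and $Q_\le$ reproduces the paper's bound. The paper instead runs a single differential inequality for the full $E$ and feeds in the slow-block bound through $\norm{H_0e}^2\ge2\delta(E-E_{\le\delta})$.
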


The proof is given in Appendix~\ref{app:main-proof}.
Theorem~\ref{thm:main} bounds the loss by an exponentially decaying term plus
a residual determined by $E_\Delta(\delta)$ and higher-order errors in $\rho$.  This is consistent with
Section~\ref{sec:phenomenon-results}, where the minimum loss reaches 0.5\% and
3.7\% of its starting value with the two self-RL teachers.

We emphasize that Theorem~\ref{thm:main} does not suggest that only
self-RL teachers are easy to learn from.  Its proof does not use how
the teacher was trained, so OPD can reach any teacher in the student's parameter
space that is close enough to the student and differs from it mostly along
high-curvature directions.  Rather, it
explains why the self-RL teachers in our experiments, which lie close
to their students, can be reached by OPD, as shown in
Section~\ref{sec:phenomenon-results}.  The larger-scale teachers are not
covered, because their parameter dimensions differ from those of their
students.

\section{Parameter and Representation Changes in OPD}
\label{sec:parameter-representation}

Section~\ref{sec:main} shows that OPD can reach a teacher close to the
student, but its learning signal collapses early with larger-scale teachers.
A possible reason is that OPD only makes local changes to the student, which
may not suffice when matching the teacher requires changes these updates
do not make.  We
therefore measure how much OPD changes the student, both in its parameters and
in its hidden representations.

\paragraph{Metrics.}
We use the relative parameter change
$d_{\mathrm{par}}=\norm{\theta_{\mathrm{end}}-\theta_{\mathrm{start}}}/\norm{\theta_{\mathrm{start}}}$
to measure how far OPD moves the student in parameter space.  Because parameter
distances say little about what the model computes, we also use linear Centered Kernel Alignment (CKA)
\citep{kornblith2019similarity} to measure how much its hidden representations
change.  For a set of prompts, let $X$ and $Y$ be the hidden states of the
starting and final checkpoints at one layer, with one row per prompt, averaged
over its tokens and centered across prompts.  Then
$\operatorname{CKA}(X,Y)=\norm{Y^\top X}_F^2/(\norm{X^\top X}_F\,\norm{Y^\top Y}_F)$,
which equals 1 when the two representations agree up to rotation and scaling.
We use the prompts of the evaluation benchmarks, namely AIME24, AIME25 and
AMC23 in the math setting (143 prompts) and HumanEval+, MBPP+ and
LiveCodeBench in the code setting (673 prompts).  See
Appendix~\ref{app:representation-details} for details.

\paragraph{OPD changes the student's parameters and representations only slightly.}
Table~\ref{tab:parameter-representation} reports both metrics for eight OPD
runs after 200 updates.  In every run, $d_{\mathrm{par}}$ is between 0.025\% and
0.098\%, and CKA is at least 0.985 at every layer.  Under this learning rate and training budget, the measured changes in
parameters and representations are small. The displacement magnitude alone does not separate
successful from stalled runs. Whether the direction of these small changes
limits teacher matching remains a hypothesis for controlled interventions.

\begin{table}[htbp]
 \centering
 \caption{Parameter and representation changes from the student's initialization.
 \emph{OPD student} is measured after 200 updates; \emph{Teacher} compares
 the self-RL teacher ($^{\dagger}$). For teachers with a different parameterization,
 $d_{\mathrm{par}}$ is undefined and teacher--student CKA was not computed (--). $d_{\mathrm{par}}$ denotes relative parameter change;
 Min.\ CKA is the minimum layerwise CKA. IF details:
 Appendix~\ref{app:if-experiment}.}
 \label{tab:parameter-representation}
 \normalsize
 \setlength{\tabcolsep}{4pt}
 \begin{tabular}{@{}lllcccc@{}}
\toprule
 & & & \multicolumn{2}{c}{\textbf{OPD student}} & \multicolumn{2}{c}{\textbf{Teacher}} \\
\cmidrule(lr){4-5}\cmidrule(l){6-7}
\textbf{Task} & \textbf{Student} & \textbf{Teacher}
 & $d_{\mathrm{par}}$ & Min.\ CKA & $d_{\mathrm{par}}$ & Min.\ CKA \\
\midrule
Code & Qwen3-4B & \QwenRLCode{}$^{\dagger}$ & 0.032\% & 0.9979 & 0.030\% & 0.9993 \\
     &          & Qwen3-14B                 & 0.063\% & 0.9850 & --      & --     \\
\midrule
Math & Qwen3-1.7B & \QwenRLMath{}          & 0.031\% & 0.9939 & --      & --     \\
     &            & Qwen3-4B                  & 0.025\% & 0.9973 & --      & --     \\
\midrule
Math & R1-Distill-1.5B & JustRL-1.5B$^{\dagger}$       & 0.098\% & 0.9904 & 0.497\% & 0.9674 \\
     &                 & \SkyworkTeacher{}             & 0.063\% & 0.9988 & --      & --     \\
     &                 & R1-Distill-7B                 & 0.050\% & 0.9986 & --      & --     \\
\midrule
IF   & R1-Distill-1.5B & UltraData-IF-1.5B$^{\dagger}$ & 0.026\% & 0.9999 & 0.037\% & 0.9998 \\
\bottomrule
\end{tabular}

\end{table}

\paragraph{Self-RL teachers remain close to the initial student.}
The Teacher columns of Table~\ref{tab:parameter-representation} compare each
self-RL teacher with the base model it was trained from.  RL
post-training moves the base model by 0.030\% to 0.50\%, and its minimum CKA is
between 0.967 and 0.9998.  These changes are comparable to or moderately
larger than those made by OPD (up to about $5\times$, for JustRL).  These measurements motivate the local recovery analysis of
Section~\ref{sec:parameter-recovery}, but small relative distances alone
do not verify its assumptions on absolute distance and curvature.

\section{Conclusion}
\label{sec:conclusion}

We study why OPD can plateau while substantial teacher--student
disagreement remains. Across code generation and mathematical reasoning,
our training logs show an early decline in a gradient-based learning-signal
proxy while substantial loss remains, whereas self-RL teachers support
sustained loss reduction. Our continuous-time analysis separates the effects of the
learning signal and the changing rollout distribution, and provides a
local recovery guarantee for sufficiently nearby teachers under regularity
conditions. Small parameter changes and largely preserved representations
further suggest that limited adaptation may constrain learning from other
teachers, although displacement magnitude alone does not separate
successful from stalled runs.
Our local guarantee assumes an idealized gradient flow and shared
parameterization, while the experiments use AdamW and a noisy gradient
proxy; connecting these diagnostics to actual updates remains future work.
Controlled interventions across learning rates, training budgets, and
seeds are also needed to distinguish optimization limitations from
representational limits and to test the proposed role of limited adaptation.

\clearpage
\bibliographystyle{ims}
\bibliography{references}

\clearpage
\appendix

\section{OPD calculus and continuous-time identity}
\label{app:opd-calculus}

\subsection{Per-prefix reverse-KL gradient}

Let
\begin{equation}
 r_\theta(a,s)
 :=\log\frac{\pi_\theta(a\mid s)}{\pi_T(a\mid s)}.
 \label{eq:log-ratio}
\end{equation}
Write $\ell(\theta;s)=\KL\!\bigl(\pi_\theta(\cdot\mid s)\,\Vert\,\pi_T(\cdot\mid s)\bigr)$
for the per-prefix loss in Eq~\eqref{eq:online-loss}.  At a fixed prefix,
\begin{align}
 \nabla_\theta\ell(\theta;s)
 &=\sum_a\pi_\theta(a\mid s)
   \bigl(r_\theta(a,s)+1\bigr)\psi_\theta(a,s)\\
 &=\E_{a\sim\pi_\theta(\cdot\mid s)}
 \left[
  \left(r_\theta(a,s)-
  \E_{b\sim\pi_\theta}r_\theta(b,s)\right)
  \psi_\theta(a,s)
 \right].
 \label{eq:centered-reverse-gradient}
\end{align}
The second equality uses
$\E_{a\sim\pi_\theta}\psi_\theta(a,s)=0$; the centered baseline is optional.
For the forward KL
$\KL(\pi_T(\cdot\mid s)\Vert\pi_\theta(\cdot\mid s))$, the corresponding
fixed-prefix gradient is
$-\E_{a\sim\pi_T}\psi_\theta(a,s)$.

\subsection{Population gradient split}

Assume differentiation may be interchanged with expectation in
Eq~\eqref{eq:online-loss}.  Since the prompt distribution does not depend on
$\theta$, the score of a trajectory $\tau=(x,y)$ is
$\nabla_\theta\log\pi_\theta(y\mid x)$, and the likelihood-ratio identity gives
\begin{align}
 \nabla \mathcal L(\theta)
 &=\E_{\tau\sim\pi_\theta}
       \nabla_\theta\widehat{\mathcal L}(\theta;\tau)
 +\E_{\tau\sim\pi_\theta}
 \left[
  \widehat{\mathcal L}(\theta;\tau)\nabla_\theta\log\pi_\theta(y\mid x)
 \right]\\
 &=g(\theta)+h(\theta),
\end{align}
which proves Eq~\eqref{eq:online-gradient-split}.  Write
$\cA(\theta):=\ip{\nabla \mathcal L(\theta)}{g(\theta)}$
for the loss--update alignment, so that $\cA=2\gamma\mathcal L$ in the
notation of Proposition~\ref{thm:loss-dynamics}.  Substituting the split
gives
\begin{equation}
 \cA(\theta)
 =\norm{g(\theta)}^2
 +\ip{h(\theta)}{g(\theta)}.
 \label{eq:alignment-expanded}
\end{equation}

\subsection{From the OPD update to the ODE}
\label{app:ode-limit}

Write the update of Section~\ref{sec:loss-dynamics} as
\begin{equation}
 \theta_{m+1}=\theta_m-\eta\,\widehat g_m,
 \qquad
 \widehat g_m=g(\theta_m)+\xi_m,
 \label{eq:opd-sgd}
\end{equation}
where $\xi_m$ is the sampling noise of the minibatch gradient, with
$\E[\xi_m\mid\theta_m]=0$.  Take a short interval of length $\Delta t$ that
contains $n=\Delta t/\eta$ steps starting from step $m$.  Summing
Eq~\eqref{eq:opd-sgd} over these steps gives
\[
 \theta_{m+n}-\theta_m
 =-\eta\sum_{k=m}^{m+n-1}g(\theta_k)-\eta\sum_{k=m}^{m+n-1}\xi_k .
\]
Because the iterates move little over the interval and $g$ is continuous,
the first sum equals $\Delta t\,g(\theta_m)$ up to a term of order
$\Delta t^2$.  The second sum has mean zero and, for noise with bounded
variance, a standard deviation of order $\sqrt{\eta\,\Delta t}$, which is
negligible next to $\Delta t$ when $\eta\to0$.  Writing
$\theta(t)=\theta_{t/\eta}$, dividing by $\Delta t$, and letting first
$\eta\to0$ and then $\Delta t\to0$ gives Eq~\eqref{eq:opd-flow-alignment}.
This is the standard ODE limit of a stochastic iteration
\citep{kushner2003stochastic,borkar2000ode}, and
Proposition~\ref{prop:finite-step} below bounds the loss after finitely many
steps without taking the limit.

Our runs use AdamW, whereas Eq~\eqref{eq:opd-flow-alignment} idealizes
detached-gradient descent. AdamW's momentum, preconditioning, and weight decay
can change the update direction; gradient clipping also rescales the gradient.  \citet{razin2026makes} make
the same idealization in their continuous-time analysis of policy gradient.
Finally, the analysis uses the full-vocabulary KL in Eq~\eqref{eq:online-loss},
whereas our main runs optimize its top-16 version in Eq~\eqref{eq:topk-kl}.  The
statistics measured in Section~\ref{sec:learning-signal-collapse} are computed
on the top-16 objective and serve as proxies for the corresponding full-KL
quantities.

\subsection{\texorpdfstring{Proof of Proposition~\ref{thm:loss-dynamics}}{Proof of the loss-dynamics proposition}}
\label{app:proof-loss-dynamics}

\paragraph{Regularity conditions.}
Proposition~\ref{thm:loss-dynamics} uses two conditions.
\emph{Differentiation:} differentiation and expectation may be interchanged
in Eq~\eqref{eq:online-loss}, so that the split in Eq~\eqref{eq:online-gradient-split}
holds.
\emph{Flow regularity:} there is an open set $\cU\ni\theta_0$ on which
$\mathcal L$ is $C^1$ and $g$ is locally Lipschitz, and
Eq~\eqref{eq:opd-flow-alignment} has a solution that stays in $\cU$ on
$[0,T_\star]$ for the horizon $T_\star$ under consideration.  Since $g$ is
locally Lipschitz, this solution is unique and $C^1$ (Picard--Lindel\"of), so
$t\mapsto\mathcal L(\theta(t))$ is $C^1$ and the chain rule applies.

\begin{proof}[Proof of Proposition~\ref{thm:loss-dynamics}]
\emph{Rate identity in Eq~\eqref{eq:main-flow-identity}.}
By the chain rule and Eq~\eqref{eq:opd-flow-alignment}, for $t\in[0,T_\star]$,
\[
 \frac{d}{dt}\mathcal L(\theta(t))
 =\ip{\nabla\mathcal L(\theta(t))}{\dot\theta(t)}
 =-\ip{\nabla\mathcal L(\theta(t))}{g(\theta(t))}
 =-\cA(\theta(t)).
\]
Wherever $\mathcal L(\theta(t))>0$, the definition $\gamma=\cA/(2\mathcal L)$
rewrites the right-hand side as $-2\gamma(\theta(t))\,\mathcal L(\theta(t))$,
which is Eq~\eqref{eq:main-flow-identity}.

\emph{Integrated form.}
Let $t_0:=\inf\{t\in[0,T_\star]:\mathcal L(\theta(t))=0\}$, with
$t_0:=T_\star$ if the loss never vanishes.  On $[0,t_0)$ the loss is positive,
so $\log\mathcal L(\theta(t))$ is $C^1$ with
$\frac{d}{dt}\log\mathcal L(\theta(t))=-2\gamma(\theta(t))$ by
Eq~\eqref{eq:main-flow-identity}.  Integrating from $0$ to $t<t_0$ and
exponentiating gives
\begin{equation}
 \mathcal L(\theta(t))
 =\mathcal L(\theta_0)
 \exp\!\left(-2\int_0^t\gamma(\theta(u))\,du\right),
 \label{eq:main-exact-rate}
\end{equation}
which yields the run average $\overline\gamma(t)$ of
Section~\ref{sec:loss-dynamics}.

\emph{Factorization.}
Wherever $\mathcal L(\theta)>0$ and $g(\theta)\ne0$, dividing
Eq~\eqref{eq:alignment-expanded} by $2\mathcal L(\theta)$ gives
\[
 \gamma
 =\frac{\norm{g}^2}{2\mathcal L}
 +\frac{\ip{h}{g}}{2\mathcal L}
 =\frac{\norm{g}^2}{2\mathcal L}
  \left(1+\frac{\ip{h}{g}}{\norm{g}^2}\right)
 =\mu(1+\alpha)
\]
by the definitions of $\mu$ and $\alpha$.
\end{proof}

By Cauchy--Schwarz, $|\alpha|\le\norm{h}/\norm{g}$, so the factorization
gives $\mu(1-\norm{h}/\norm{g})\le\gamma\le\mu(1+\norm{h}/\norm{g})$.

\begin{proposition}[Finite-step stochastic decomposition]
\label{prop:finite-step}
Suppose $\mathcal L$ is $\beta$-smooth on a region containing the iterates,
$\E[\widehat g_m\mid\cF_m]=g(\theta_m)$, and
\begin{equation}
 \sigma_m^2
 :=\E\left[
  \norm{\widehat g_m-g(\theta_m)}^2
  \mid\cF_m
 \right]<\infty.
\end{equation}
Then
\begin{equation}
 \begin{split}
 \E \mathcal L(\theta_M)
 \le{}&\mathcal L(\theta_0)
 -\sum_{m=0}^{M-1}\eta_m\E\cA(\theta_m)\\
 &+\frac{\beta}{2}\sum_{m=0}^{M-1}\eta_m^2
 \E\left[\norm{g(\theta_m)}^2+\sigma_m^2\right].
 \end{split}
 \label{eq:finite-step}
\end{equation}
\end{proposition}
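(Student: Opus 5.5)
The plan is the standard descent-lemma argument for smooth functions, applied one step at a time and then summed. Fix a step $m$ and condition on $\cF_m$. Since $\mathcal L$ is $\beta$-smooth on a region containing the iterates, and $\theta_{m+1}=\theta_m-\eta_m\widehat g_m$, the quadratic upper bound gives
\[
 \mathcal L(\theta_{m+1})
 \le \mathcal L(\theta_m)
 -\eta_m\ip{\nabla\mathcal L(\theta_m)}{\widehat g_m}
 +\frac{\beta}{2}\eta_m^2\norm{\widehat g_m}^2 .
\]
This step needs the segment between $\theta_m$ and $\theta_{m+1}$ to lie in the smoothness region. I will take the hypothesis ``a region containing the iterates'' to mean a convex region, or one that contains these segments.

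Next, take the conditional expectation given $\cF_m$. The point $\theta_m$ is $\cF_m$-measurable, and $\E[\widehat g_m\mid\cF_m]=g(\theta_m)$. So the linear term becomes $-\eta_m\ip{\nabla\mathcal L(\theta_m)}{g(\theta_m)}=-\eta_m\cA(\theta_m)$, using the definition of $\cA$ in the population-gradient split. The key point is that the update is the detached gradient $g$, not $\nabla\mathcal L$. The descent term is therefore exactly the alignment $\cA=\norm{g}^2+\ip{h}{g}$, not $\norm{\nabla\mathcal L}^2$.

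For the quadratic term, I will use the bias--variance decomposition
\[
\E[\norm{\widehat g_m}^2\mid\cF_m]=\norm{g(\theta_m)}^2+\sigma_m^2 .
\]
It holds because the cross term $2\ip{g(\theta_m)}{\E[\widehat g_m-g(\theta_m)\mid\cF_m]}$ vanishes by unbiasedness. This gives the one-step bound
\[
\E[\mathcal L(\theta_{m+1})\mid\cF_m]\le\mathcal L(\theta_m)-\eta_m\cA(\theta_m)+\tfrac{\beta}{2}\eta_m^2\bigl(\norm{g(\theta_m)}^2+\sigma_m^2\bigr).
\]

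Finally, take total expectations using the tower property and telescope over $m=0,\dots,M-1$. With $\theta_0$ deterministic, this yields Eq~\eqref{eq:finite-step}.

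The only delicate point is integrability. The expectations $\E\cA(\theta_m)$, $\E\norm{g(\theta_m)}^2$ and $\E\sigma_m^2$ must be finite, or at least the right-hand side must be well defined, for the telescoping to be legitimate. I will handle this by noting that if any term on the right is $+\infty$ the inequality is trivial. Otherwise each one-step inequality can be integrated, since $\mathcal L\ge0$. No other obstacle is expected: the result is a direct stochastic analogue of the chain-rule identity in Proposition~\ref{thm:loss-dynamics}, with the second-order smoothness term and the variance $\sigma_m^2$ as the price of finite steps.
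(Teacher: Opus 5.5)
Your proposal is correct and follows the paper's proof. Both apply the $\beta$-smoothness descent inequality to the step $\theta_{m+1}=\theta_m-\eta_m\widehat g_m$, then take the conditional expectation using unbiasedness and the bias--variance split to obtain $\E_m\mathcal L(\theta_{m+1})\le\mathcal L(\theta_m)-\eta_m\cA(\theta_m)+\frac{\beta\eta_m^2}{2}\bigl(\norm{g(\theta_m)}^2+\sigma_m^2\bigr)$, and finally take total expectations and sum; your remarks that the segments between iterates must lie in the smoothness region and on integrability are reasonable clarifications that the paper leaves implicit.
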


\begin{proof}
Smoothness and the update $\theta_{m+1}=\theta_m-\eta_m\widehat g_m$ imply
\[
 \E_m\mathcal L(\theta_{m+1})
 \le \mathcal L(\theta_m)-\eta_m\cA(\theta_m)
 +\frac{\beta\eta_m^2}{2}
  \left(\norm{g(\theta_m)}^2+\sigma_m^2\right).
\]
Take total expectations and sum over $m$.
\end{proof}

\section{Parameter differences and the initial Fisher}
\label{app:endpoint-geometry}

Throughout this section, the teacher and student use the same architecture,
parameter coordinates, tokenizer, vocabulary, temperature, and policy
interface. Set $\theta_T=\theta_{\mathrm{RL}}$ and
$\Delta=\theta_T-\theta_0$. All expectations below are assumed finite,
with $N(\tau)>0$ almost surely. For a policy checkpoint $\vartheta$, write
$\psi_\vartheta(a,s)=\nabla_\vartheta\log\pi_\vartheta(a\mid s)$ and define
\begin{equation}
 H(\vartheta):=\E_{\tau\sim\pi_\vartheta}
 \left[
  \frac{1}{N(\tau)}\sum_{s\in\cS(\tau)}
  \E_{a\sim\pi_\vartheta(\cdot\mid s)}
  \psi_\vartheta(a,s)\psi_\vartheta(a,s)^\top
 \right].
 \label{eq:self-fisher}
\end{equation}
Thus $H_0=H(\theta_0)$. The teacher Fisher and its difference from the
initial-student Fisher are
\begin{equation}
 H_T:=H(\theta_T),
 \qquad
 \varepsilon_H:=\norm{H_T-H_0}_{\mathrm{op}}.
 \label{eq:teacher-fisher}
\end{equation}
Both matrices use the expectation of the token-normalized statistic, not
a ratio of separate expectations of its numerator and denominator.

\begin{lemma}[Low-curvature endpoint energy]
\label{lem:endpoint-tail}
For $H_0\succeq0$, let $P_{\le\delta}$ be its spectral projector onto
eigenvalues in $[0,\delta]$. Then
\begin{equation}
 E_\Delta(\delta)
 =\tfrac12\Delta^\top H_0P_{\le\delta}\Delta,
 \qquad
 0\le E_\Delta(\delta)\le\tfrac\delta2\norm\Delta^2.
 \label{eq:endpoint-tail-bound}
\end{equation}
No restriction of $\Delta$ to the range of $H_0$ is needed.
\end{lemma}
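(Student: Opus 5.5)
The plan is to work directly in the eigenbasis of $H_0$. Since $H_0$ is a Fisher-type matrix, an expectation of outer products $\psi\psi^\top$, it is symmetric positive semidefinite; this matches the hypothesis $H_0\succeq0$. I will write its spectral decomposition as $H_0=\sum_i\lambda_i u_iu_i^\top$, with an orthonormal eigenbasis $\{u_i\}$ and $\lambda_i\ge0$. The spectral projector onto eigenvalues in $[0,\delta]$ is then $P_{\le\delta}=\sum_{i:\lambda_i\le\delta}u_iu_i^\top$. Because $H_0$ and $P_{\le\delta}$ are simultaneously diagonal, they commute, and
\[
H_0P_{\le\delta}=\sum_{i:\,0\le\lambda_i\le\delta}\lambda_i u_iu_i^\top .
\]

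The key observation concerns the zero eigenvalues. The projector includes them, but each is weighted by $\lambda_i=0$ in the product $H_0P_{\le\delta}$. That product therefore coincides with $\sum_{0<\lambda_i\le\delta}\lambda_iu_iu_i^\top$. Expanding $\Delta=\sum_i(u_i^\top\Delta)u_i$ and substituting gives
\[
\tfrac12\Delta^\top H_0P_{\le\delta}\Delta=\tfrac12\sum_{0<\lambda_i\le\delta}\lambda_i(u_i^\top\Delta)^2 ,
\]
which is exactly $E_\Delta(\delta)$ as defined in Eq~\eqref{eq:endpoint-tail}. This step also justifies the closing remark of the lemma: components of $\Delta$ in $\ker H_0$ are annihilated automatically, so $\Delta$ need not be restricted to the range of $H_0$.

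For the bounds, I will argue termwise. Each summand $\lambda_i(u_i^\top\Delta)^2$ is nonnegative, since $\lambda_i\ge0$, and at most $\delta(u_i^\top\Delta)^2$, since $\lambda_i\le\delta$. Summing over the relevant indices and using Parseval's identity, $\sum_i(u_i^\top\Delta)^2=\norm\Delta^2$, gives $0\le E_\Delta(\delta)\le\frac\delta2\norm\Delta^2$.

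I do not expect any real obstacle. The only point that needs care is the convention that the projector includes the zero eigenspace while the sum defining $E_\Delta$ excludes it, and the reconciliation above handles this. If $H_0$ has repeated eigenvalues, any orthonormal eigenbasis works, because $P_{\le\delta}$ does not depend on the choice of basis.
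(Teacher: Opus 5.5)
Your proposal is correct and follows essentially the same route as the paper's proof. Both expand $\Delta$ in an orthonormal eigenbasis of $H_0$, note that the zero eigenvalues contribute nothing (so no range restriction is needed), and bound each retained term using $\lambda_i\le\delta$ together with $\sum_i(u_i^\top\Delta)^2=\norm\Delta^2$.
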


\begin{proof}
Write $d_i=u_i^\top\Delta$ in an orthonormal eigenbasis of $H_0$.
Expanding the quadratic form gives
$E_\Delta(\delta)=\tfrac12\sum_{0<\lambda_i\le\delta}\lambda_i d_i^2$.
Each retained eigenvalue is at most $\delta$, so the sum is bounded by
$\delta\sum_i d_i^2/2=\delta\norm\Delta^2/2$.
Zero eigenvalues contribute no energy; nullspace components need not vanish.
\end{proof}

\begin{lemma}[Same-model root and Fisher identities]
\label{lem:root-fisher}
Assume positive conditional token probabilities and a rollout law with
common support in a neighborhood of $\theta_T$. Assume the smoothness and
integrable domination needed to differentiate $\mathcal L$ twice and
$g$ once. Then
\begin{equation}
 \mathcal L(\theta_T)=0,
 \qquad \nabla \mathcal L(\theta_T)=0,
 \qquad g(\theta_T)=0,
 \label{eq:root}
\end{equation}
and
\begin{equation}
 \nabla^2\mathcal L(\theta_T)
 =Dg(\theta_T)=H_T.
 \label{eq:root-fisher}
\end{equation}
\end{lemma}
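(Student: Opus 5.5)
The plan is to evaluate $\mathcal L$, $\nabla\mathcal L$ and $g$ directly at $\theta_T$, using that the student at $\theta_T$ coincides with the teacher policy, and then differentiate once more. Since teacher and student share architecture and parameter coordinates, $\pi_{\theta_T}(\cdot\mid s)=\pi_T(\cdot\mid s)$ for every prefix $s$. Hence $r_{\theta_T}(a,s)=0$ identically, every per-prefix loss $\ell(\theta_T;s)$ vanishes, and so does $\widehat{\mathcal L}(\theta_T;\tau)$ for every trajectory. This gives $\mathcal L(\theta_T)=0$. By Eq~\eqref{eq:centered-reverse-gradient}, $\nabla_\theta\ell(\theta_T;s)=0$ at every prefix, so $g(\theta_T)=0$. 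The occupancy term $h(\theta_T)$ is an expectation of $\widehat{\mathcal L}(\theta_T;\tau)\nabla\log\pi_{\theta_T}(y\mid x)$, which is zero pathwise. The split of Eq~\eqref{eq:online-gradient-split} therefore gives $\nabla\mathcal L(\theta_T)=0$. Alternatively, $\mathcal L\ge0$ everywhere and $\mathcal L(\theta_T)=0$, so $\theta_T$ is a minimizer; this is consistent, but the pathwise argument already suffices.

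For the Jacobian of $g$, differentiate $g(\theta)=\E_{\tau\sim\pi_\theta}\nabla_\theta\widehat{\mathcal L}(\theta;\tau)$ using the likelihood-ratio rule. The derivative splits into two parts:
\begin{itemize}
\item an inner term $\E_{\tau\sim\pi_\theta}\nabla^2_\theta\widehat{\mathcal L}(\theta;\tau)$;
\item a score term $\E_{\tau\sim\pi_\theta}\bigl[\nabla_\theta\widehat{\mathcal L}(\theta;\tau)\,\nabla_\theta\log\pi_\theta(y\mid x)^\top\bigr]$.
\end{itemize}
At $\theta_T$ the score term vanishes, because $\nabla_\theta\widehat{\mathcal L}(\theta_T;\tau)=0$ pathwise, as shown above. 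For the inner term, differentiate $\nabla\ell=\sum_a\pi_\theta(a\mid s)(r_\theta+1)\psi_\theta$. Every term containing $r_\theta$ vanishes at $\theta_T$. The remaining terms are
\begin{itemize}
\item $\sum_a\pi_\theta\psi_\theta\psi_\theta^\top$, which comes from $\nabla r_\theta=\psi_\theta$;
\item $\nabla\bigl(\sum_a\pi_\theta\psi_\theta\bigr)$, which is $\nabla$ of $0$ and hence zero.
\end{itemize}
So $\nabla^2\ell(\theta_T;s)=\E_{a\sim\pi_{\theta_T}}\psi\psi^\top$, the per-prefix Fisher matrix. Averaging over prefixes with weights $1/N(\tau)$ and over $\tau\sim\pi_{\theta_T}$ gives exactly $H(\theta_T)=H_T$ from Eq~\eqref{eq:self-fisher}. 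The token-normalized average is what makes this match, since $\widehat{\mathcal L}(\theta;\tau)$ is the uniform average over prefixes.

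For $\nabla^2\mathcal L=Dg+Dh$, it remains to show $Dh(\theta_T)=0$. Differentiate $h(\theta)=\E_\tau[\widehat{\mathcal L}\,S_\theta]$, where $S_\theta=\nabla\log\pi_\theta(y\mid x)$. Each resulting term carries a factor of either $\widehat{\mathcal L}(\theta_T;\tau)$ or $\nabla\widehat{\mathcal L}(\theta_T;\tau)$. The first factor multiplies $\nabla S$ and $SS^\top$; the second multiplies $S$. Both factors vanish pathwise, so every term is zero, and $\nabla^2\mathcal L(\theta_T)=H_T$.

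The main obstacle is justifying the interchanges of differentiation and expectation, especially the second-order likelihood-ratio differentiation of $h$ over a $\theta$-dependent rollout law. Here I would invoke the assumed positivity of token probabilities, the common support near $\theta_T$ (so that densities relative to a fixed reference measure are differentiable), and the integrable domination hypothesis. With these, dominated convergence permits passing derivatives inside, and everything else is the algebra above.
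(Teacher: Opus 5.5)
Your proposal is correct and follows essentially the paper's argument. The pathwise vanishing of $\widehat{\mathcal L}(\theta_T;\tau)$ and $\nabla_\theta\widehat{\mathcal L}(\theta_T;\tau)$ kills every term involving derivatives of the rollout law, leaving $\E_{\tau\sim\pi_{\theta_T}}\nabla_\theta^2\widehat{\mathcal L}(\theta_T;\tau)=H_T$. The only differences are cosmetic: you assemble $\nabla^2\mathcal L(\theta_T)$ as $Dg(\theta_T)+Dh(\theta_T)$ with $Dh(\theta_T)=0$, whereas the paper differentiates $\mathcal L$ twice directly, and you spell out the per-prefix Hessian computation that the paper only asserts.
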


\begin{proof}
At $\theta_T$, every conditional KL and its first parameter derivative
vanish for every fixed prefix. Its Hessian is the conditional policy
Fisher. Hence $\widehat{\mathcal L}(\theta_T;\tau)$ and its first parameter
derivative vanish for every sampled trajectory $\tau$.
When differentiating $\mathcal L(\theta)=\E_{\tau\sim\pi_\theta}\widehat{\mathcal L}(\theta;\tau)$ twice, all terms
involving derivatives of the rollout law multiply one of these vanishing
quantities. The remaining term is
$\E_{\tau\sim\pi_{\theta_T}}\nabla_\theta^2\widehat{\mathcal L}(\theta_T;\tau)=H_T$.
Differentiating $g$ once gives the same term because the
derivative of the rollout law multiplies
$\nabla_\theta\widehat{\mathcal L}(\theta_T;\tau)=0$.
The denominator $N(\tau)$ is held fixed conditional on $\tau$; its
distributional dependence is already included in the law of $\tau\sim\pi_\theta$.
\end{proof}

\paragraph{Score and sampling conditions.}
We impose boundedness and Lipschitz continuity of the policy score,
\begin{equation}
 \norm{\psi_\theta(a,s)}\le B,
 \qquad
 \norm{\psi_\theta(a,s)-\psi_{\theta'}(a,s)}
 \le L_s\norm{\theta-\theta'},
 \label{eq:policy-score-regularity}
\end{equation}
uniformly over the relevant prefixes and tokens and over parameters near
$\theta_T$.  These conditions are standard in policy-gradient analysis
\citep[Assumption~1]{fatkhullin2023stochastic} and are also used by
\citet[Assumption~3.1]{barakat2022analysis} and in RLHF
\citep[Assumption~3, Eq~(19)]{chakraborty2024parl}.
Prompts follow a fixed distribution.  Each sampled trajectory uses at most
$M$ autoregressive token draws from the full student policy, with fixed
stopping and masking rules and $N(\tau)=|\cS(\tau)|>0$.
Here $M$ bounds all draws in $\tau$, including every response when $\tau$
collects a batch.  A finite generation horizon is also used by
\citet[Section~3.1]{razin2026makes}.
Lemma~\ref{lem:score-update} derives a quadratic update remainder with
\begin{equation}
 C_g:=\tfrac32B L_s+(M+1)B^3.
 \label{eq:score-remainder-constant}
\end{equation}

\begin{lemma}[Policy scores and the detached update]
\label{lem:score-update}
Let $\mathcal B_R$ be a closed ball of radius $R$ about $\theta_T$.
Assume positive differentiable token probabilities and the uniform score
bounds in Eq~\eqref{eq:policy-score-regularity} on an open neighborhood of this
ball. Prompts and auxiliary randomness have parameter-independent laws.
Each trajectory is generated using at most $M$ draws from $\pi_\theta$;
stopping, masking, and postprocessing are fixed functions of the generated
trajectory and auxiliary randomness. Suppose
$N(\tau)=|\cS(\tau)|>0$ almost surely.
Then $g$ is locally Lipschitz near the ball and
\begin{equation}
 \norm{g(\theta_T+e)-H_Te}
 \le C_g\norm e^2,
 \qquad \theta_T+e\in\mathcal B_R,
 \label{eq:score-update-remainder}
\end{equation}
with $C_g$ from Eq~\eqref{eq:score-remainder-constant}.
The self-Fisher map also satisfies
\begin{equation}
 \norm{H(\theta)-H(\theta')}_{\mathrm{op}}
 \le L_H\norm{\theta-\theta'},
 \qquad L_H:=2B L_s+(M+1)B^3,
 \label{eq:score-fisher-stability}
\end{equation}
for $\theta,\theta'\in\mathcal B_R$.
\end{lemma}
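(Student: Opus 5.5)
The plan is to write $g$ explicitly through the score function, Taylor-expand around $\theta_T$ with integral remainders, and get the Fisher bound by the same bookkeeping. Combining the per-prefix gradient of Eq~\eqref{eq:centered-reverse-gradient} with the definition of $g$ gives
\[
 g(\theta)=\E_{\tau\sim\pi_\theta}\Bigl[\tfrac{1}{N(\tau)}\sum_{s\in\cS(\tau)}\E_{a\sim\pi_\theta(\cdot\mid s)}\bigl(r_\theta(a,s)+1\bigr)\psi_\theta(a,s)\Bigr].
\]
The trajectory law $\pi_\theta(\tau)$ is a product of at most $M$ token probabilities times parameter-independent factors. Its log-derivative is therefore a sum of at most $M$ scores, each of norm at most $B$. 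This gives $\norm{\nabla_\theta\log\pi_\theta(\tau)}\le MB$.

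For local Lipschitzness, differentiate the integrand and the rollout law. There are three kinds of terms:
\begin{itemize}
\item derivatives of $r_\theta$, which equal $\psi_\theta$;
\item derivatives of $\psi_\theta$, controlled by $L_s$;
\item derivatives of the inner and trajectory laws, each contributing one further score factor.
\end{itemize}
The one problem is the unbounded factor $r_\theta$. I would handle it by noting $r_{\theta_T}=0$ and $r_\theta(a,s)=\int_0^1\ip{\psi_{\theta_T+ue}(a,s)}{e}\,du$. Hence $|r_\theta|\le B\norm{e}$ on $\mathcal B_R$, and all terms stay bounded, which gives Lipschitzness.

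For Eq~\eqref{eq:score-update-remainder}, write $e=\theta-\theta_T$ and split $g(\theta_T+e)$ into three pieces.
\begin{enumerate}
\item The term carrying the explicit factor $r_\theta$. Expanding $r_\theta=\ip{\psi_{\theta_T}}{e}+O(L_s\norm e^2/2)$ and $\psi_\theta=\psi_{\theta_T}+O(L_s\norm e)$ gives the main part $\E\,\psi_{\theta_T}\psi_{\theta_T}^\top e$, which is not yet $H_Te$.
\item The error from evaluating both the inner expectation and the trajectory law at $\theta$ rather than $\theta_T$. Each such change costs at most (number of draws)$\times B\norm e$ in total variation. Multiplying by the $O(B^2\norm e)$ size of $r\psi$ leaves an $O(\norm e^2)$ term with $B^3$ times the number of draws. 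One draw comes from the inner token and $M$ from the trajectory, which explains the factor $(M+1)$.
\item The term $\E_{a\sim\pi_\theta}\psi_\theta=0$, which vanishes exactly.
\end{enumerate}
Collecting the $L_s$ terms should give $\tfrac32BL_s$: $\tfrac12$ from the second-order remainder in $r$, and $1$ from replacing $\psi_\theta$ by $\psi_{\theta_T}$ in the outer score factor.

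For Eq~\eqref{eq:score-fisher-stability}, the integrand difference $\psi_\theta\psi_\theta^\top-\psi_{\theta'}\psi_{\theta'}^\top$ has operator norm at most $2BL_s\norm{\theta-\theta'}$. Changing the sampling laws of the token and trajectory, applied to an integrand of size $B^2$, costs at most $(M+1)B\cdot B^2\norm{\theta-\theta'}$. I would bound this law change by the path integral $\int_0^1|\tfrac{d}{du}\pi_{\theta'+u(\theta-\theta')}|$ rather than by Pinsker, to get the clean constant.

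The main obstacle is the constant bookkeeping in the second piece of the remainder. Tracking the exact coefficients, and justifying that differentiation through stopping and masking is legitimate, is the delicate step. I would do it by conditioning on auxiliary randomness and writing the trajectory law as an explicit finite product.
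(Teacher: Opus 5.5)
Your proposal is correct and follows essentially the same route as the paper. The score identity removes the $+1$ term. The per-prefix error then splits into the score change ($BL_s$), the log-ratio remainder ($\tfrac12 BL_s$), and the one-draw token-law change ($B^3$). The path-integrated $L^1$ bound $\|Q_\theta-Q_{\theta_T}\|_1\le MB\norm{e}$, applied to an integrand of size $B^2\norm{e}$, supplies the remaining $MB^3$, and the Fisher bound comes from the same bookkeeping.

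There is one small adjustment. Since $\psi_\theta$ is only assumed Lipschitz, not differentiable, prove the local Lipschitz property of $g$ with difference bounds rather than by differentiating $\psi_\theta$, as the paper does. Also run that argument on a slightly larger ball inside the open regularity neighborhood, so that the Lipschitz property holds near $\mathcal B_R$ and not only on it.
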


\begin{proof}
Write $d=\norm e$, $p_\theta(a)=\pi_\theta(a\mid s)$, and
$b_s(\theta)=\nabla_\theta\ell(\theta;s)$ for a fixed prefix $s$.
The score identity $\sum_a p_\theta(a)\psi_\theta(a,s)=0$ gives
\[
 b_s(\theta)=\sum_a p_\theta(a)\psi_\theta(a,s)
       \log\frac{p_\theta(a)}{p_T(a)}.
\]
Integrating along the segment from $\theta_T$ to $\theta_T+e$ yields
\begin{align}
 \sum_a|p_\theta(a)-p_T(a)|&\le Gd,\\
 \left|\log\frac{p_\theta(a)}{p_T(a)}\right|&\le Gd,\\
 \left|\log\frac{p_\theta(a)}{p_T(a)}
             -\psi_T(a,s)^\top e\right|&\le\tfrac{L_s}{2}d^2.
 \label{eq:score-log-ratio-bounds}
\end{align}
Let $H_T(s)=\sum_a p_T(a)\psi_T(a,s)\psi_T(a,s)^\top$.
Subtracting $H_T(s)e$ from $b_s(\theta)$ gives three terms: the change
in the score, the remainder of the log-ratio, and the change from $p_T$
to $p_\theta$. Their norms are bounded by $B L_sd^2$,
$B L_sd^2/2$, and $B^3d^2$, respectively. Hence
\begin{equation}
 \norm{b_s(\theta)-H_T(s)e}
 \le C_{\mathrm{pre}}d^2,
 \qquad C_{\mathrm{pre}}:=\tfrac32B L_s+B^3.
 \label{eq:prefix-score-remainder}
\end{equation}

We next control the trajectory distribution.  In this proof, write $Q_\theta$
for the law of the recorded trajectory $\tau\sim\pi_\theta$.  The likelihood score of the
complete generated trajectory, including auxiliary randomness before
postprocessing, is a sum of at most $M$ token scores and has
norm at most $MG$. Stopped sequences may be padded by absorbing states
with zero score. Integrating the derivative of its law along the same
segment, then applying the fixed postprocessing map, gives
\begin{equation}
 \norm{Q_\theta-Q_{\theta_T}}_1\le MGd,
 \qquad \norm{Q-Q'}_1:=\int|q-q'|.
 \label{eq:record-law-stability}
\end{equation}
For $A_T(\tau):=N(\tau)^{-1}\sum_{s\in\cS(\tau)}H_T(s)$,
we have $\norm{A_T(\tau)}_{\mathrm{op}}\le B^2$ and
$\E_{\tau\sim\pi_{\theta_T}}A_T(\tau)=H_T$.
Averaging Eq~\eqref{eq:prefix-score-remainder} over scored prefixes and
then over $\tau\sim\pi_\theta$ therefore gives
\[
 \norm{g(\theta)-H_Te}
 \le C_{\mathrm{pre}}d^2
    +B^2d\norm{Q_\theta-Q_{\theta_T}}_1
 \le C_gd^2.
\]
This averages each trajectory's normalized statistic; the random denominator
is not replaced by its expectation.

For any two parameters in the ball, the conditional-policy law changes
in $L^1$ by at most $B\norm{\theta-\theta'}$.
The outer product of two scores changes in operator norm by at most
$2B L_s\norm{\theta-\theta'}$.
Combining these bounds with the trajectory-law bound gives
Eq~\eqref{eq:score-fisher-stability}.
Finally, the scores and log-ratios are bounded and Lipschitz on the ball.
Applying the same distributional bounds to their product in $b_s$ gives
\[
 \norm{g(\theta)-g(\theta')}
 \le\bigl[B^2+B L_sR+(M+1)B^3R\bigr]\norm{\theta-\theta'}.
\]
The argument also applies on a slightly larger ball inside the open
regularity neighborhood, proving the stated local Lipschitz property.
\end{proof}

The bounded-draw mechanism is part of this lemma's scope: conditioning on
accepted trajectories or resampling empty responses requires controlling the
resulting law separately unless all retries obey the stated draw bound.
Parameter-dependent top-$k$ or top-$p$ truncation is not included.

\begin{proposition}[Local approximation from endpoint geometry]
\label{prop:endpoint-stability}
Assume Lemma~\ref{lem:root-fisher} and the score and sampling conditions
of Lemma~\ref{lem:score-update} on a neighborhood of a ball
$\mathcal B_R$ containing $\theta_0$.
Suppose $\mathcal L$ is $C^2$ with an $L_{\mathcal L}$-Lipschitz Hessian there, measured in
operator norm. For $e=\theta-\theta_T$ and $\theta\in\mathcal B_R$,
\begin{align}
 \left|\mathcal L(\theta)-\tfrac12e^\top H_0e\right|
 &\le\tfrac{\varepsilon_H}{2}\norm e^2
       +\tfrac{L_{\mathcal L}}{6}\norm e^3,\\
 \norm{g(\theta)-H_0e}
 &\le\varepsilon_H\norm e+C_g\norm e^2.
 \label{eq:endpoint-taylor-errors}
\end{align}
Consequently, uniform loss and update errors on $\mathcal B_R$ are
bounded by
\begin{equation}
 \varepsilon_{\mathcal L}=\tfrac{\varepsilon_H R^2}{2}+\tfrac{L_{\mathcal L}R^3}{6},
 \qquad
 \varepsilon_g=\varepsilon_HR+C_gR^2.
 \label{eq:endpoint-error-constants}
\end{equation}
\end{proposition}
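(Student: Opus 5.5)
The proof is two Taylor expansions around the teacher, each followed by replacing the teacher Fisher $H_T$ with the initial Fisher $H_0$ at cost $\varepsilon_H=\norm{H_T-H_0}_{\mathrm{op}}$.

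\emph{Loss bound.} Lemma~\ref{lem:root-fisher} gives $\mathcal L(\theta_T)=0$, $\nabla\mathcal L(\theta_T)=0$ and $\nabla^2\mathcal L(\theta_T)=H_T$. Fix $\theta=\theta_T+e\in\mathcal B_R$ and use Taylor's theorem with integral remainder along the segment $\theta_T+se$, $s\in[0,1]$:
\[
 \mathcal L(\theta)=\int_0^1(1-s)\,e^\top\nabla^2\mathcal L(\theta_T+se)\,e\,ds .
\]
The segment lies in $\mathcal B_R$ because the ball is convex and centered at $\theta_T$. The Hessian is $L_{\mathcal L}$-Lipschitz, so $\norm{\nabla^2\mathcal L(\theta_T+se)-H_T}_{\mathrm{op}}\le L_{\mathcal L}s\norm e$. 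Since $\int_0^1(1-s)s\,ds=1/6$, this gives
\[
 \bigl|\mathcal L(\theta)-\tfrac12e^\top H_Te\bigr|\le\tfrac{L_{\mathcal L}}{6}\norm e^3 .
\]
Next, $\bigl|\tfrac12e^\top(H_T-H_0)e\bigr|\le\tfrac{\varepsilon_H}{2}\norm e^2$. The triangle inequality then yields the first inequality of Eq~\eqref{eq:endpoint-taylor-errors}.

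\emph{Update bound.} Lemma~\ref{lem:score-update} applies on $\mathcal B_R$ and gives $\norm{g(\theta_T+e)-H_Te}\le C_g\norm e^2$. Adding and subtracting $H_0e$, we have $\norm{(H_T-H_0)e}\le\varepsilon_H\norm e$, which gives the second inequality. No further Hessian argument is needed, because the detached update is controlled directly by the score bounds rather than by the Lipschitz Hessian of $\mathcal L$.

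\emph{Uniform constants.} For $\theta\in\mathcal B_R$ we have $\norm e\le R$, and both right-hand sides are nondecreasing in $\norm e$. Substituting $\norm e=R$ gives $\varepsilon_{\mathcal L}$ and $\varepsilon_g$ in Eq~\eqref{eq:endpoint-error-constants}.

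\emph{Main care point.} The only step needing attention is the Taylor expansion of $\mathcal L$. It requires $\mathcal L$ to be $C^2$ on a convex set containing the whole segment, which is why the statement assumes the Lipschitz-Hessian condition on a neighborhood of the ball rather than only at the ball's points. Lemma~\ref{lem:root-fisher} then supplies the vanishing zeroth- and first-order terms, so the expansion starts at the quadratic term.
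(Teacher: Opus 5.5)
Your proof is correct and follows the paper's argument: a second-order Taylor expansion at $\theta_T$ using the root identities and the $L_{\mathcal L}$-Lipschitz Hessian for the loss, Lemma~\ref{lem:score-update} for the update, then replacing $H_T$ by $H_0$ at cost $\varepsilon_H\norm{e}^2/2$ and $\varepsilon_H\norm{e}$ respectively, and finally using $\norm{e}\le R$. The only difference is that you write out the integral-remainder computation giving the constant $1/6$, which the paper leaves implicit.
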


\begin{proof}
The segment from $\theta_T$ to $\theta_T+e$ lies in the ball.
The root identities and the Hessian condition give
\[
 \left|\mathcal L(\theta_T+e)-\tfrac12e^\top H_Te\right|
 \le\tfrac{L_{\mathcal L}}{6}\norm e^3.
\]
Lemma~\ref{lem:score-update} gives the update remainder
$\norm{g(\theta_T+e)-H_Te}\le C_g\norm e^2$.
Replacing $H_T$ by $H_0$ adds at most $\varepsilon_H\norm e^2/2$ to the
loss error and $\varepsilon_H\norm e$ to the update error.
Using $\norm e\le R$ proves the uniform bounds.
\end{proof}

Lemma~\ref{lem:score-update} also yields
$\varepsilon_H\le L_H\norm\Delta$ with $L_H$ defined in
Eq~\eqref{eq:score-fisher-stability}.
For $R=c\norm\Delta$ with a fixed $c\ge1$, the preceding constants satisfy
\begin{align}
 \varepsilon_{\mathcal L}
 &\le\left(\tfrac{L_Hc^2}{2}+\tfrac{L_{\mathcal L}c^3}{6}\right)\norm\Delta^3,\\
 \varepsilon_g
 &\le\left(L_Hc+C_gc^2\right)\norm\Delta^2.
 \label{eq:small-displacement-errors}
\end{align}
These are uniform functional estimates on the stated ball.
The following lemma establishes how long the exact OPD flow remains
there for the choice $R=2\norm\Delta$.

\begin{lemma}[A local validity interval for the OPD mean flow]
\label{lem:opd-confinement}
Let $\rho=\norm{\theta_T-\theta_0}>0$ and suppose $g$ is
locally Lipschitz on an open neighborhood of the closed ball
$\mathcal B_{2\rho}$ about $\theta_T$.
Assume $g(\theta_T)=0$, $H_T\succeq0$, and
\[
 \norm{g(\theta_T+e)-H_Te}\le C_g\norm e^2
 \quad\text{on }\mathcal B_{2\rho}.
\]
The score conditions in Lemma~\ref{lem:score-update} supply this bound.
Define
\begin{equation}
 T_{\mathrm{loc}}:=
 \begin{cases}
  (2C_g\rho)^{-1},&C_g>0,\\
  +\infty,&C_g=0.
 \end{cases}
 \label{eq:opd-local-horizon}
\end{equation}
The exact OPD mean flow in Eq~\eqref{eq:opd-flow-alignment} exists uniquely and
remains in $\mathcal B_{2\rho}$ for $0\le t\le T_{\mathrm{loc}}$.
For $C_g>0$,
\begin{equation}
 \norm{\theta(t)-\theta_T}
 \le\frac{\rho}{1-C_g\rho t}\le2\rho,
 \qquad 0\le t\le T_{\mathrm{loc}}.
 \label{eq:opd-radius-bound}
\end{equation}
When $C_g=0$, the distance is nonincreasing. If $\rho=0$ and the field is
locally Lipschitz at its root, the solution is stationary at $\theta_T$.
\end{lemma}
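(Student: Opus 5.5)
We prove Lemma~\ref{lem:opd-confinement} by a differential inequality for the squared distance to the teacher, followed by a comparison argument and a continuation argument. Write $e(t)=\theta(t)-\theta_T$ and $r(t)=\norm{e(t)}$.

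\emph{Local existence.} Since $g$ is locally Lipschitz on an open neighborhood of $\mathcal B_{2\rho}$, Picard--Lindel\"of gives a unique maximal $C^1$ solution of Eq~\eqref{eq:opd-flow-alignment}. This solution can be continued as long as it stays in a compact subset of that neighborhood. So it suffices to show an a priori bound keeping $\theta(t)$ strictly inside, or on, $\mathcal B_{2\rho}$ for $t\le T_{\mathrm{loc}}$. Compactness of the ball and the openness of the neighborhood then rule out blow-up or exit before $T_{\mathrm{loc}}$.

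\emph{Energy inequality.} While $\theta(t)\in\mathcal B_{2\rho}$, decompose $g(\theta)=H_Te+\bigl(g(\theta)-H_Te\bigr)$. Using $H_T\succeq0$ and the assumed remainder bound,
\[
 \tfrac12\tfrac{d}{dt}r^2=-\ip{e}{g(\theta)}
 =-e^\top H_Te-\ip{e}{g(\theta)-H_Te}
 \le C_g r^3 .
\]
To avoid dividing by $r$ at points where $r=0$, work with $r^2$ directly, or with $\phi=\sqrt{r^2+\epsilon}$ and let $\epsilon\to0$. This yields $\dot r\le C_g r^2$ in the upper Dini-derivative sense.

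\emph{Comparison.} The ODE $\dot u=C_gu^2$ with $u(0)=\rho$ has solution $u(t)=\rho/(1-C_g\rho t)$. A standard comparison lemma for Dini derivatives then gives $r(t)\le u(t)$ on any interval where the trajectory stays in the ball. At $t=T_{\mathrm{loc}}=(2C_g\rho)^{-1}$ we get $u=2\rho$, so $u\le2\rho$ on $[0,T_{\mathrm{loc}}]$.

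\emph{Bootstrap and main obstacle.} The main obstacle is closing the continuation argument, since the inequality $\dot r\le C_g r^2$ was derived only while the trajectory is in the ball. Define $t^\ast=\sup\{t\le T_{\mathrm{loc}}:\theta([0,t])\subset\mathcal B_{2\rho}\}$. On $[0,t^\ast)$ the comparison gives $r\le u$.
\begin{itemize}
\item If $t^\ast<T_{\mathrm{loc}}$, then $r(t^\ast)\le u(t^\ast)<2\rho$. By continuity the trajectory remains in the ball slightly beyond $t^\ast$, contradicting maximality.
\item Hence $t^\ast=T_{\mathrm{loc}}$, which proves both existence up to $T_{\mathrm{loc}}$ and Eq~\eqref{eq:opd-radius-bound}.
\end{itemize}

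\emph{Degenerate cases.} If $C_g=0$, the inequality reads $\frac{d}{dt}r^2\le0$, so the distance is nonincreasing. The trajectory never leaves $\mathcal B_\rho$, and the solution is global. If $\rho=0$, then $\theta_0=\theta_T$ is a root of $g$. Uniqueness under the local Lipschitz condition forces the solution to be the constant one, so it is stationary at $\theta_T$.
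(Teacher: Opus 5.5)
Your proposal is correct and follows essentially the same route as the paper. Both derive the differential inequality $\dot d\le C_g d^2$ from $H_T\succeq0$ and the remainder bound, integrate it against $\rho/(1-C_g\rho t)$ (the paper integrates $1/d$ directly, which is the same computation), and close with a first-exit/compact-continuation argument, treating the degenerate cases by uniqueness at the root. Your handling of points where $r=0$, via Dini derivatives or $\sqrt{r^2+\epsilon}$ smoothing, is a slightly more careful version of the paper's remark that reaching the root makes the trajectory stationary.
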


\begin{proof}
Set $e=\theta-\theta_T$, $d=\norm e$, and
$q(e)=g(\theta_T+e)-H_Te$.
For the original flow $\dot e=-g(\theta_T+e)$, positivity
of $H_T$ and the remainder bound imply, whenever $d>0$,
\begin{equation}
 \dot d
 =-\frac{e^\top H_Te+\ip{e}{q(e)}}{d}
 \le C_gd^2.
 \label{eq:opd-radius-differential}
\end{equation}
If the root is reached, uniqueness makes the trajectory stationary there.
Otherwise, integrate the differential inequality for $1/d$ to obtain
the first bound in Eq~\eqref{eq:opd-radius-bound} up to any first exit.
For $t<T_{\mathrm{loc}}$, that bound is strictly below $2\rho$, ruling
out exit before $T_{\mathrm{loc}}$.
The solution stays in a compact ball contained in the open regularity
neighborhood, so the ODE continuation theorem gives existence through
$T_{\mathrm{loc}}$; continuity gives the closed-ball bound at its endpoint.
If $C_g=0$, Eq~\eqref{eq:opd-radius-differential} gives $d(t)\le\rho$,
and the same compact-continuation argument applies for every finite time.
The case $\rho=0$ follows from uniqueness at
$g(\theta_T)=0$.
\end{proof}

\section{Recovery bound from the parameter difference}
\label{app:main-proof}

Write $e(t)=\theta(t)-\theta_T$, where $\theta_T=\theta_{\mathrm{RL}}$,
and use the initial Fisher $H_0=H(\theta_0)$ from Eq~\eqref{eq:self-fisher}.
Define the reference energy and its low-curvature part by
\begin{equation}
 E_0(e):=\tfrac12e^\top H_0e,
 \qquad
 E_{\le\delta}(e):=E_0(P_{\le\delta}e).
 \label{eq:reference-energy}
\end{equation}
Because $e(0)=-\Delta$, Lemma~\ref{lem:endpoint-tail} gives
$E_{\le\delta}(e(0))=E_\Delta(\delta)$ exactly.
The theorem uses the error terms of Proposition~\ref{prop:endpoint-stability}
with $R=2\rho$,
\begin{equation}
 \begin{aligned}
 \varepsilon_{\mathcal L}(\rho)&=2\varepsilon_H\rho^2+\tfrac43L_{\mathcal L}\rho^3,\\
 \varepsilon_g(\rho)&=2\varepsilon_H\rho+4C_g\rho^2,
 \end{aligned}
 \label{eq:local-opd-errors}
\end{equation}
where $\varepsilon_H=\norm{H_T-H_0}_{\mathrm{op}}$ is defined in
Eq~\eqref{eq:teacher-fisher}.

\begin{theorem}[Local recovery for the nonlinear OPD mean flow]
\label{thm:detailed-recovery}
Consider exactly the flow in Eq~\eqref{eq:opd-flow-alignment} from
Proposition~\ref{thm:loss-dynamics}, starting at $\theta_0$, and let
$\theta_T=\theta_{\mathrm{RL}}$ with
$\rho=\norm{\theta_T-\theta_0}>0$. Assume:
\begin{enumerate}[label=(S\arabic*)]
 \item The teacher and student share their architecture, parameter
 coordinates, tokenizer, vocabulary, temperature, and policy interface.
 The quantities $\mathcal L$, $g$, and $H_0$ are well defined.
 The common-support, positivity and dominated-differentiation conditions
 of Lemma~\ref{lem:root-fisher} hold on a neighborhood of the closed ball
 $\mathcal B_{2\rho}$ about $\theta_T$. Thus
 $g(\theta_T)=0$ and
 $Dg(\theta_T)=H_T\succeq0$.

 \item The score and sampling conditions of
 Lemma~\ref{lem:score-update} hold on an open neighborhood of
 $\mathcal B_{2\rho}$ with constants $B,L_s\ge0$ and a finite total
 token-draw bound $M$. In particular, prompts have a fixed law,
 rollouts use the full student policy, stopping and masking rules do not
 depend on parameters, and $N(\tau)=|\cS(\tau)|>0$ almost surely.

 \item On an open neighborhood of $\mathcal B_{2\rho}$, the function
 $\mathcal L$ is $C^2$ with an $L_{\mathcal L}$-Lipschitz Hessian in operator norm.
\end{enumerate}
The root identities in (S1) follow from equality of student and teacher
policies; they require no stationarity of the teacher's RL objective.
Condition (S2) gives $C_g=\tfrac32B L_s+(M+1)B^3$ and local
Lipschitz continuity of the detached update.
Let $T_{\mathrm{loc}}$ be defined by Eq~\eqref{eq:opd-local-horizon}, and
let $\varepsilon_{\mathcal L}(\rho),\varepsilon_g(\rho)$ be the endpoint-dependent
quantities in Eq~\eqref{eq:local-opd-errors}.
The original OPD flow exists uniquely through $T_{\mathrm{loc}}$ and
remains in $\mathcal B_{2\rho}$.
For every finite $T_\star\le T_{\mathrm{loc}}$, every $\delta>0$, and
$0\le t\le T_\star$,
\begin{equation}
 \boxed{\begin{aligned}
 \mathcal L(\theta(t))\le{}&e^{-\delta t}\mathcal L(\theta_0)
                  +E_\Delta(\delta)\\
 &+2\varepsilon_{\mathcal L}(\rho)
  +\varepsilon_g(\rho)^2\left(\frac{t}{4}+\frac{1}{2\delta}\right).
 \end{aligned}}
 \label{eq:detailed-spectral-rate}
\end{equation}
The residual $E_\Delta(\delta)$ can be replaced by
$\delta\norm{\theta_{\mathrm{RL}}-\theta_0}^2/2$.
In the exact quadratic case with zero approximation errors,
\begin{equation}
 \mathcal L(\theta(t))
 \le e^{-2\delta t}\mathcal L(\theta_0)+E_\Delta(\delta).
 \label{eq:quadratic-spectral-rate}
\end{equation}
No lower bound on the smallest positive eigenvalue of either Fisher is
required. If $\rho=0$, local uniqueness at the root gives
$\theta(t)=\theta_T$ and $\mathcal L(\theta(t))=0$.

For a target tolerance $\varepsilon>0$, suppose some
$\delta>0$ and finite $T_\star\le T_{\mathrm{loc}}$ satisfy
\begin{equation}
 E_\Delta(\delta)+2\varepsilon_{\mathcal L}(\rho)
 +\varepsilon_g(\rho)^2\left(\frac{T_\star}{4}+\frac{1}{2\delta}\right)
 \le\frac\varepsilon2.
 \label{eq:spectral-tolerance-condition}
\end{equation}
Define
\begin{equation}
 t_\varepsilon
 :=\frac1\delta\log\max\left\{1,
       \frac{2\mathcal L(\theta_0)}{\varepsilon}\right\}.
 \label{eq:spectral-hitting-time}
\end{equation}
If $t_\varepsilon\le T_\star$, then $\mathcal L(\theta(t))\le\varepsilon$ for
every $t\in[t_\varepsilon,T_\star]$.
\end{theorem}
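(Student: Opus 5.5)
The plan is to work in the error coordinate $e(t)=\theta(t)-\theta_T$ and to treat the OPD flow as a perturbed linear flow driven by the initial Fisher $H_0$. Everything nonlinear, and the replacement of $H_T$ by $H_0$, goes into a uniformly bounded forcing term.

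\textbf{Step 1: confinement and the perturbed linear flow.} First I invoke Lemma~\ref{lem:opd-confinement}. Its hypotheses are supplied by (S1)--(S2) together with Lemma~\ref{lem:root-fisher} and Lemma~\ref{lem:score-update}. It gives existence and uniqueness of the flow on $[0,T_{\mathrm{loc}}]$ and $\theta(t)\in\mathcal B_{2\rho}$. On this ball, Proposition~\ref{prop:endpoint-stability} with $R=2\rho$ yields two uniform bounds:
\[
 \bigl|\mathcal L(\theta)-E_0(e)\bigr|\le\varepsilon_{\mathcal L}(\rho),
 \qquad
 \dot e=-H_0e-r(t),\quad \norm{r(t)}\le\varepsilon_g(\rho).
\]
If $\rho=0$, uniqueness at the root gives the stationary solution directly.

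\textbf{Step 2: spectral energy split.} Let $P_{\le\delta}$ and $P_{>\delta}=I-P_{\le\delta}$ be the spectral projectors of $H_0$. Both commute with $H_0$, so $E_0(e)=E_{\le\delta}(e)+E_{>\delta}(e)$, where $E_{>\delta}(e)=E_0(P_{>\delta}e)$. I bound each part separately.

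For the high-curvature part, write $v=H_0P_{>\delta}e$. Then
\[
 \tfrac{d}{dt}E_{>\delta}=-\norm v^2-\ip{v}{P_{>\delta}r}.
\]
Young's inequality gives $\ip{v}{P_{>\delta}r}\ge-\tfrac12\norm v^2-\tfrac12\varepsilon_g^2$. Since every eigenvalue on the range of $P_{>\delta}$ exceeds $\delta$, we have $\norm v^2\ge2\delta E_{>\delta}$. Hence
\[
 \tfrac{d}{dt}E_{>\delta}\le-\delta E_{>\delta}+\tfrac12\varepsilon_g^2 .
\]
Gr\"onwall then gives $E_{>\delta}(t)\le e^{-\delta t}E_{>\delta}(e(0))+\varepsilon_g^2/(2\delta)$.

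For the low-curvature part, with $w=H_0P_{\le\delta}e$ we get
\[
 \tfrac{d}{dt}E_{\le\delta}=-\norm w^2-\ip{w}{r}\le\tfrac14\varepsilon_g^2,
\]
using $-x^2+x\varepsilon\le\varepsilon^2/4$. Integrating, and using $E_{\le\delta}(e(0))=E_\Delta(\delta)$ because $e(0)=-\Delta$ (via Lemma~\ref{lem:endpoint-tail}), gives $E_{\le\delta}(t)\le E_\Delta(\delta)+\varepsilon_g^2t/4$.

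\textbf{Step 3: back to the loss.} Combine the two parts with the estimates $\mathcal L(\theta(t))\le E_0(e(t))+\varepsilon_{\mathcal L}$ and $E_{>\delta}(e(0))\le E_0(e(0))\le\mathcal L(\theta_0)+\varepsilon_{\mathcal L}$. Bounding $e^{-\delta t}\varepsilon_{\mathcal L}\le\varepsilon_{\mathcal L}$ produces the boxed inequality \eqref{eq:detailed-spectral-rate}. The alternative residual $\delta\rho^2/2$ comes from Lemma~\ref{lem:endpoint-tail}.

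\textbf{Step 4: the quadratic case and the hitting time.} In the exact quadratic case $r\equiv0$, so $\tfrac{d}{dt}E_{>\delta}=-\norm v^2\le-2\delta E_{>\delta}$ and $E_{\le\delta}$ is nonincreasing. This gives the faster rate $e^{-2\delta t}$ in \eqref{eq:quadratic-spectral-rate}.

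The tolerance statement is then immediate. Condition \eqref{eq:spectral-tolerance-condition} makes the residual terms at most $\varepsilon/2$ on $[0,T_\star]$. For $t\ge t_\varepsilon$ we have $e^{-\delta t}\mathcal L(\theta_0)\le\varepsilon/2$, so $\mathcal L(\theta(t))\le\varepsilon$ on $[t_\varepsilon,T_\star]$.

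For the abridged Theorem~\ref{thm:main}, use $\varepsilon_H\le L_H\rho$. Then $\varepsilon_{\mathcal L}=O(\rho^3)$ and $\varepsilon_g^2=O(\rho^4)$. With $t\le T_{\mathrm{loc}}=O(1/\rho)$, the $t$-term contributes $O(\rho^3)$, and the remaining term is $O(\rho^4/\delta)$.

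\textbf{Main obstacle.} The delicate point is that the forcing $r$ contains the term $(H_T-H_0)e$, which is first order in $\norm e$, rather than only the quadratic remainder. It is controlled only because $\varepsilon_H$ is itself $O(\rho)$ and $e$ stays $O(\rho)$ on the confinement interval. A second source of error is the constant in the low-curvature drift: if $r$ is not paired with the damping term $-\norm w^2$ through Young's inequality, the bound degrades to $\varepsilon_g\rho t$ instead of $\varepsilon_g^2t$. I would check that step first.
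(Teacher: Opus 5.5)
Your proposal is correct and follows the paper's proof. You use the same ingredients: confinement via Lemma~\ref{lem:opd-confinement}, the uniform errors of Proposition~\ref{prop:endpoint-stability} with $R=2\rho$, Young's inequality, the bound $-a^2+\varepsilon_g a\le\varepsilon_g^2/4$ for the low-curvature energy, and the spectral gap on the complementary eigenspace. The only difference is organizational: you apply Gr\"onwall to $E_{>\delta}$ alone and add the separate $E_{\le\delta}$ bound, whereas the paper writes one differential inequality for the total energy $E$ with the $E_{\le\delta}$ bound fed in. You also treat the quadratic case through the same split with $r\equiv0$ instead of the explicit eigen-expansion. Both routes give exactly the constants in \eqref{eq:detailed-spectral-rate}.
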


\begin{proof}
Lemma~\ref{lem:opd-confinement} gives existence and confinement of the
original OPD mean flow through $T_{\mathrm{loc}}$.
Applying Proposition~\ref{prop:endpoint-stability} with $R=2\rho$ gives
\begin{align}
 \left|\mathcal L(\theta(t))-E_0(e(t))\right|
 &\le\varepsilon_{\mathcal L}(\rho),
 \label{eq:loss-secant-stability}\\
 \norm{g(\theta(t))-H_0e(t)}
 &\le\varepsilon_g(\rho),
 \qquad 0\le t\le T_\star.
 \label{eq:update-secant-stability}
\end{align}
For brevity, write $\varepsilon_{\mathcal L}=\varepsilon_{\mathcal L}(\rho)$ and
$\varepsilon_g=\varepsilon_g(\rho)$ in the rest of the proof.
Let $E(t)=E_0(e(t))$ and
$r(t)=g(\theta(t))-H_0e(t)$.
The identity $\dot e=-H_0e-r$ rewrites the original update.
Young's inequality and Eq~\eqref{eq:update-secant-stability} imply
\begin{equation}
 \dot E(t)
 =-\norm{H_0e(t)}^2-\ip{H_0e(t)}{r(t)}
 \le-\tfrac12\norm{H_0e(t)}^2+\tfrac12\varepsilon_g^2.
 \label{eq:reference-energy-descent}
\end{equation}
Write $E_{\le\delta}(t)=E_{\le\delta}(e(t))$.
The spectral projector commutes with $H_0$. With
$a=\norm{H_0P_{\le\delta}e(t)}$,
\begin{align}
 \dot E_{\le\delta}(t)
 &=-a^2-\ip{H_0P_{\le\delta}e(t)}{P_{\le\delta}r(t)}\\
 &\le-a^2+\varepsilon_ga
 \le\frac{\varepsilon_g^2}{4}.
 \label{eq:low-energy-injection}
\end{align}
Using the endpoint value $E_{\le\delta}(0)=E_\Delta(\delta)$ gives
\begin{equation}
 E_{\le\delta}(t)
 \le E_\Delta(\delta)+\frac{\varepsilon_g^2t}{4}.
 \label{eq:low-energy-bound}
\end{equation}
The complementary eigenspace has eigenvalues greater than $\delta$, so
\begin{equation}
 \norm{H_0e(t)}^2
 \ge2\delta\bigl(E(t)-E_{\le\delta}(t)\bigr).
 \label{eq:high-spectrum-gap}
\end{equation}
Combining these inequalities yields
\begin{equation}
 \dot E(t)\le-\delta E(t)+\delta E_\Delta(\delta)
       +\frac{\delta\varepsilon_g^2t}{4}
       +\frac{\varepsilon_g^2}{2}.
 \label{eq:reference-energy-ode}
\end{equation}
Multiplying by $e^{\delta t}$ and integrating gives the convenient bound
\begin{equation}
 E(t)\le e^{-\delta t}E(0)+E_\Delta(\delta)
       +\varepsilon_g^2\left(\frac{t}{4}+\frac1{2\delta}\right).
 \label{eq:reference-spectral-rate}
\end{equation}
For example, the term linear in time follows by bounding $u\le t$ in
its convolution with $e^{-\delta(t-u)}$; the other forcing term contributes
at most $\varepsilon_g^2/(2\delta)$.
By Eq~\eqref{eq:loss-secant-stability},
$E(0)\le \mathcal L(\theta_0)+\varepsilon_{\mathcal L}$ and
$\mathcal L(\theta(t))\le E(t)+\varepsilon_{\mathcal L}$.
Since $e^{-\delta t}\le1$, substitution proves
Eq~\eqref{eq:detailed-spectral-rate}. Lemma~\ref{lem:endpoint-tail} gives the
parameter-distance bound.

In the exact quadratic limit, $\dot e=-H_0e$. Expanding in an eigenbasis,
with $c_i=u_i^\top e(0)=-u_i^\top\Delta$, gives
$\mathcal L(\theta(t))=\tfrac12\sum_i\lambda_ic_i^2e^{-2\lambda_it}$.
The terms with $\lambda_i>\delta$ are bounded by
$e^{-2\delta t}\mathcal L(\theta_0)$ and the remaining terms by
$E_\Delta(\delta)$, proving Eq~\eqref{eq:quadratic-spectral-rate}.
Finally, the error budget is bounded at $T_\star$ by
Eq~\eqref{eq:spectral-tolerance-condition}, and
$e^{-\delta t}\mathcal L(\theta_0)\le\varepsilon/2$ for
$t\ge t_\varepsilon$. The claimed interval is nonempty when
$t_\varepsilon\le T_\star$, completing the proof.
\end{proof}

\paragraph{Abridged form.}
Lemma~\ref{lem:score-update} gives $\varepsilon_H\le L_H\rho$.  Hence
$\varepsilon_{\mathcal L}(\rho)=O(\rho^3)$ and $\varepsilon_g(\rho)=O(\rho^2)$ in
Eq~\eqref{eq:local-opd-errors}.  For $t\le T_{\mathrm{loc}}=(2C_g\rho)^{-1}$, the
last two terms of Eq~\eqref{eq:detailed-spectral-rate} are therefore
$O(\rho^3+\rho^4/\delta)$, with constants depending only on $B$, $L_s$, $M$ and
$L_{\mathcal L}$.  Since $T_{\mathrm{loc}}=\Omega(1/\rho)$, this gives
Theorem~\ref{thm:main}.

\section{Local expansion and spectral specialization}
\label{app:local}

\begin{proposition}[Local same-model Fisher expansion]
\label{prop:local-expansion}
Assume the conditions of Proposition~\ref{prop:endpoint-stability}
on a neighborhood of $\theta_T$.
Let $H_T$ be the teacher Fisher in Eq~\eqref{eq:teacher-fisher}.
For $e=\theta-\theta_T$,
\begin{align}
 \mathcal L(\theta_T+e)
 &=\tfrac12e^\top H_Te+O(\norm e^3),
 \label{eq:local-F}\\
 g(\theta_T+e)
 &=H_Te+O(\norm e^2),
 \label{eq:local-g}\\
 h(\theta_T+e)
 &=O(\norm e^2),
 \label{eq:local-occ}\\
 \cA(\theta_T+e)
 &=e^\top H_T^2e+O(\norm e^3).
 \label{eq:local-A}
\end{align}
The full-logit forward KL has the same leading Hessian at the same-model
root.
\end{proposition}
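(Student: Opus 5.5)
The plan is to obtain all four expansions from the root identities of Lemma~\ref{lem:root-fisher} together with the Taylor bounds already established in Lemma~\ref{lem:score-update} and Proposition~\ref{prop:endpoint-stability}.

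\emph{Loss and detached update.} For Eq~\eqref{eq:local-F}, Lemma~\ref{lem:root-fisher} gives $\mathcal L(\theta_T)=0$, $\nabla\mathcal L(\theta_T)=0$ and $\nabla^2\mathcal L(\theta_T)=H_T$. A second-order Taylor expansion with the $L_{\mathcal L}$-Lipschitz Hessian then bounds the remainder by $L_{\mathcal L}\norm e^3/6$, exactly as in the first step of the proof of Proposition~\ref{prop:endpoint-stability}. Eq~\eqref{eq:local-g} is Eq~\eqref{eq:score-update-remainder} of Lemma~\ref{lem:score-update}, with constant $C_g$.

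\emph{Occupancy gradient.} For Eq~\eqref{eq:local-occ} I would avoid bounding $h$ directly and use the split $h=\nabla\mathcal L-g$. Since $\nabla\mathcal L$ is $C^1$ with Lipschitz Hessian and $\nabla\mathcal L(\theta_T)=0$, we get $\nabla\mathcal L(\theta_T+e)=H_Te+O(\norm e^2)$, with remainder at most $L_{\mathcal L}\norm e^2/2$. Subtracting Eq~\eqref{eq:local-g} cancels the linear terms, so $h=O(\norm e^2)$.

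As a cross-check, there is a direct route. We have $h=\E[\widehat{\mathcal L}(\theta;\tau)\nabla\log\pi_\theta(y\mid x)]$, where the trajectory score is bounded by $MB$. Moreover $\widehat{\mathcal L}(\theta;\tau)\le C\norm e^2$ uniformly, by the per-prefix log-ratio bounds in Eq~\eqref{eq:score-log-ratio-bounds}, since the KL is at most $\max_a|\log(p_\theta(a)/p_T(a))|$ times the $L^1$ distance. Together these give $\norm h\le MB\,C\norm e^2$.

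\emph{Alignment.} For Eq~\eqref{eq:local-A}, write $\cA=\ip{\nabla\mathcal L}{g}$ and substitute $\nabla\mathcal L=H_Te+O(\norm e^2)$ and $g=H_Te+O(\norm e^2)$. The leading term is $e^\top H_T^2e$. The cross terms are $O(\norm{H_T}_{\mathrm{op}}\norm e^3)$ with $\norm{H_T}_{\mathrm{op}}\le B^2$, and the product of remainders is $O(\norm e^4)$. Both are absorbed into $O(\norm e^3)$ on a bounded ball.

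\emph{Forward KL.} The per-prefix forward KL $\KL(\pi_T\Vert\pi_\theta)$ has zero value and zero gradient at $\theta=\theta_T$. Its Hessian there is $\E_{a\sim\pi_T}\psi_T\psi_T^\top$, because the term $-\E_{a\sim\pi_T}\nabla\psi$ integrates against the identity $\sum_a\nabla^2 p_T=0$. This is the same conditional Fisher as for the reverse KL. Every rollout-law derivative again multiplies a vanishing quantity, as in the proof of Lemma~\ref{lem:root-fisher}, so the leading Hessian is $H_T$.

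\emph{Main obstacle.} The delicate step is Eq~\eqref{eq:local-occ}. A naive expansion of $h$ loses a factor: $h$ is a product of a loss of order $\norm e^2$ with an $O(1)$ score, and this is only visible if one uses either the uniform per-trajectory quadratic bound or the cancellation in $\nabla\mathcal L-g$. I would rely on the cancellation, since it uses only regularity already assumed.
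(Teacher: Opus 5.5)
Your proposal is correct and follows the paper's proof essentially step for step. You use the root identities with the Lipschitz Hessian for $\mathcal L$ and $\nabla\mathcal L$, Lemma~\ref{lem:score-update} for $g$, the cancellation $h=\nabla\mathcal L-g$ for the occupancy term, the product of the two linear expansions for $\cA$, and the Fisher identity at the root for the forward KL. Your direct cross-check for $h$, which bounds $\widehat{\mathcal L}(\theta;\tau)\le B^2\norm e^2$ via the log-ratio and $L^1$ bounds and the trajectory score by $MB$, is also valid, and it has the small advantage of not needing the Lipschitz-Hessian assumption for that step.
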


\begin{proof}
Lemma~\ref{lem:root-fisher} gives $\nabla^2\mathcal L(\theta_T)=H_T$.
The Lipschitz Hessian condition gives Eq~\eqref{eq:local-F} and
$\nabla \mathcal L(\theta_T+e)=H_Te+O(\norm e^2)$.
Lemma~\ref{lem:score-update} gives Eq~\eqref{eq:local-g} directly from
the policy scores.
The identity $h=\nabla \mathcal L-g$ cancels their
linear terms, proving Eq~\eqref{eq:local-occ}.
Multiplying the expansions for $\nabla \mathcal L$ and $g$ gives
Eq~\eqref{eq:local-A}.
At equality of the two policies, the conditional forward and reverse KL
Hessians both equal the policy Fisher, yielding the final statement.
\end{proof}

This expansion is centered at the teacher and uses $H_T$.
Using $H_0$ as the reference requires controlling
$\norm{H_T-H_0}_{\mathrm{op}}$ in addition to the Taylor remainder.
Proposition~\ref{prop:endpoint-stability} gives that comparison explicitly
on the local ball. Lemma~\ref{lem:opd-confinement} then supplies a validity
interval for these estimates along the original nonlinear OPD flow.

\subsection{Fixed-Fisher specialization}

Suppose the local model is exactly quadratic with the initial Fisher:
\begin{equation}
 \mathcal L(\theta_T+e)=\tfrac12e^\top H_0e,
 \qquad
 g(\theta_T+e)=\nabla \mathcal L(\theta_T+e)=H_0e.
 \label{eq:fixed-fisher}
\end{equation}
Then $\dot e=-H_0e$ and $e(0)=-\Delta$.
Let $H_0u_i=\lambda_i u_i$ and $c_i=-u_i^\top\Delta$.
The exact solution and loss are
\begin{align}
 e(t)&=\sum_i c_i e^{-\lambda_it}u_i,\\
 \mathcal L(\theta(t))&=\tfrac12\sum_i\lambda_ic_i^2e^{-2\lambda_it},
 \label{eq:spectral-F}\\
 \gamma(t)&=
 \frac{\sum_i\lambda_i^2c_i^2e^{-2\lambda_it}}
      {\sum_i\lambda_ic_i^2e^{-2\lambda_it}},
 \qquad \mathcal L(\theta(t))>0.
 \label{eq:spectral-mu}
\end{align}
For normalized weights
$\omega_i(t)\propto\lambda_ic_i^2e^{-2\lambda_it}$,
\begin{equation}
 \dot\gamma(t)=-2\operatorname{Var}_{\omega(t)}(\lambda)\le0.
 \label{eq:mu-decay}
\end{equation}
Higher-curvature components decay first, so this decline need not represent
a failure to fit the teacher if the remaining loss is already small.

For any cutoff $\delta>0$, splitting the spectral sum yields
\begin{equation}
 \mathcal L(\theta(t))
 \le E_\Delta(\delta)+e^{-2\delta t}\mathcal L(\theta_0).
 \label{eq:tolerance-rate}
\end{equation}
If $E_\Delta(\delta)\le\varepsilon/2$, then loss at most $\varepsilon$
is reached once
$t\ge(2\delta)^{-1}\log\max\{1,2\mathcal L(\theta_0)/\varepsilon\}$,
provided the quadratic model remains valid until that time.
The residual is computed directly from the initial parameter difference
and also satisfies $E_\Delta(\delta)\le\delta\norm\Delta^2/2$.

\paragraph{Equal distances can have different relative decay rates.}
Take $H_0=\operatorname{diag}(1,\eta)$ with $0<\eta<1$, and compare
parameter differences $\Delta_1=(d,0)^\top$ and
$\Delta_2=(0,d)^\top$ for $d>0$.
Both have norm $d$, but Eq~\eqref{eq:spectral-F} gives
\begin{equation}
 \frac{F^{(1)}(t)}{F^{(1)}(0)}=e^{-2t},
 \qquad
 \frac{F^{(2)}(t)}{F^{(2)}(0)}=e^{-2\eta t}.
 \label{eq:equal-distance-example}
\end{equation}
Thus even in the exact quadratic case, an arbitrarily small parameter
distance does not specify a uniform relative convergence rate.
The direction of the parameter difference matters alongside its magnitude.

\clearpage
\section{Experimental details for the teacher comparisons}
\label{app:teacher-comparison-details}

\setcounter{table}{0}
\renewcommand{\thetable}{\thesection.\arabic{table}}
\renewcommand{\theHtable}{\thesection.\arabic{table}}

\paragraph{Figure layout.}
Each column of Figure~\ref{fig:teacher-trainability} fixes the student named
in its heading and identifies the teachers in its color key. The top row
shows macro-averaged validation accuracy; the bottom row shows the recorded
OPD training loss. Solid curves show the student runs, and colors identify
the corresponding teachers. The reference lines, loss statistic, and
comparison horizons are detailed below.

\paragraph{Models and training data.}
Table~\ref{tab:teacher-setup} lists the three student/task groups, the seven
teacher conditions, and the OPD training datasets used in
Figure~\ref{fig:teacher-trainability}.
For code, we distill \QwenRLCode{}~\citep{yang2026learning} and
Qwen3-14B~\citep{yang2025qwen3} into Qwen3-4B.  For math, we distill
\QwenRLMath{}~\citep{yang2026learning} and Qwen3-4B~\citep{yang2025qwen3} into
Qwen3-1.7B, and JustRL-1.5B~\citep{he2025justrl},
\SkyworkTeacher{}~\citep{he2025skywork}, and
R1-Distill-7B~\citep{guo2025deepseek} into R1-Distill-1.5B.  All Qwen3 models
use non-thinking mode, and R1-Distill denotes DeepSeek-R1-Distill-Qwen.
\QwenRLCode{} and \QwenRLMath{} are obtained by RL post-training
\QwenFourNT{} on code and math tasks, respectively~\citep{yang2026learning}.
JustRL-1.5B and \SkyworkTeacher{} are obtained by RL post-training
R1-Distill-1.5B and R1-Distill-7B, respectively, on math
tasks~\citep{he2025justrl,he2025skywork}.
For \QwenRLCode{} and \QwenRLMath{}, we use the released step-300
and step-500 checkpoints, respectively.
The table separately identifies the teacher's RL training data in the note
and the student's OPD prompts in the last column.
In particular, \QwenRLMath{} is trained on filtered DeepMath, while both math
OPD groups use DAPO-Math-17K.

The short names used in the text expand as follows:
\QwenRLCode{} and \QwenRLMath{} denote Qwen3-4B-\allowbreak Non-Thinking-RL-Code and
Qwen3-4B-\allowbreak Non-Thinking-RL-Math; R1-Distill-1.5B and R1-Distill-7B denote
DeepSeek-R1-Distill-Qwen-1.5B and DeepSeek-R1-Distill-Qwen-7B;
JustRL-1.5B denotes JustRL-DeepSeek-1.5B; and \SkyworkTeacher{} denotes
Skywork-OR1-Math-7B. \QwenRLCode{} and JustRL-1.5B are self-RL teachers for
their corresponding students, whereas \QwenRLMath{} and \SkyworkTeacher{} are
RL-trained models larger than their students.

\begin{table}[htbp]
 \centering
 \normalsize
 \setlength{\tabcolsep}{3pt}
 \renewcommand{\arraystretch}{1.15}
 \caption{Student--teacher groups and OPD training data. Qwen3
 models~\citep{yang2025qwen3} use non-thinking mode; R1-Distill abbreviates
 DeepSeek-R1-Distill-Qwen~\citep{guo2025deepseek}.}
 \label{tab:teacher-setup}
 \begin{tabular}{@{}p{0.20\linewidth}p{0.56\linewidth}p{0.21\linewidth}@{}}
  \hline
  \textbf{Task / student} & \textbf{Teachers} & \textbf{OPD data} \\
  \hline
  \raggedright Code /\newline Qwen3-4B\newline (Non-thinking)
   & \QwenFourteenNT{}\newline \QwenRLCode{}~\citep{yang2026learning}
   & \raggedright Eurus-2-RL-Data (code)~\citep{cui2025process} \tabularnewline
  \hline
  \raggedright Math /\newline Qwen3-1.7B\newline (Non-thinking)
   & \QwenFourNT{}\newline \QwenRLMath{}~\citep{yang2026learning}
   & \raggedright DAPO-Math-17K~\citep{yu2026dapo} \tabularnewline
  \hline
  Math /\newline R1-Distill-1.5B
   & R1-Distill-7B\newline
     JustRL-DeepSeek-1.5B~\citep{he2025justrl}\newline
     Skywork-OR1-Math-7B~\citep{he2025skywork}
   & \raggedright DAPO-Math-17K \tabularnewline
  \hline
 \end{tabular}
 \par\vspace{3pt}
 \begin{minipage}{\linewidth}
  \normalsize
  \textit{RL provenance.} \QwenRLCode{} and \QwenRLMath{} are GRPO checkpoints of
  Qwen3-4B trained on ${\sim}25$K Eurus coding problems and 57K
  DeepMath problems~\citep{he2026deepmath} (difficulty $\geq6$), respectively.
  JustRL-1.5B and \SkyworkTeacher{} are RL-post-trained from R1-Distill-1.5B and
  R1-Distill-7B, respectively.
 \end{minipage}
\end{table}

\paragraph{Training configurations.}
Table~\ref{tab:opd-training-config} lists the settings for all seven main
student--teacher runs, verified against their saved training configurations.
The two Qwen3-1.7B runs share the same DAPO-Math-17K prompts and validation
protocol; validation includes the initial checkpoint and every ten updates.

\begin{table}[htbp]
  \centering
  \normalsize
  \setlength{\tabcolsep}{4pt}
  \renewcommand{\arraystretch}{1.08}
  \caption{Training configurations for the seven main OPD runs. Each teacher
  condition is one training run; the three columns contain two, two, and
  three teacher conditions, respectively.}
  \label{tab:opd-training-config}
  \begin{tabular}{@{}lccc@{}}
    \hline
    \textbf{Setting} & \shortstack{\textbf{Code}\\Qwen3-4B\\(Non-thinking)}
      & \shortstack{\textbf{Math}\\Qwen3-1.7B\\(Non-thinking)}
      & \shortstack{\textbf{Math}\\R1-Distill-1.5B\\\strut} \\
    \hline
    OPD objective & \multicolumn{3}{c}{Reverse KL, Eq~\eqref{eq:online-loss}} \\
    Vocabulary approximation & \multicolumn{3}{c}{Student's top 16 tokens} \\
    Token weights & \multicolumn{3}{c}{Student probabilities} \\
    Loss normalization & \multicolumn{3}{c}{Global mean over valid response tokens} \\
    \hline
    Optimizer & AdamW & AdamW & AdamW \\
    Learning rate & $10^{-6}$ & $10^{-6}$ & $10^{-6}$ \\
    Learning-rate schedule & constant & constant & constant \\
    Warmup fraction & 0 & 0 & 0 \\
    Adam $(\beta_1,\beta_2)$ & $(0.9, 0.999)$ & $(0.9, 0.999)$ & $(0.9, 0.999)$ \\
    Weight decay & 0.01 & 0.01 & 0.01 \\
    Gradient clipping ($\ell_2$ norm) & 1 & 1 & 1 \\
    Prompt batch size & 64 & 64 & 64 \\
    Optimization minibatch size & 64 & 64 & 64 \\
    Training rollouts per prompt & 1 & 1 & 1 \\
    Optimization epochs per batch & 1 & 1 & 1 \\
    Maximum prompt tokens & 2,048 & 1,024 & 1,024 \\
    Maximum response tokens & 16,384 & 7,168 & 21,000 \\
    Student sampling temperature & 1 & 1 & 1 \\
    Teacher temperature & 1 & 1 & 1 \\
    Student sampling top-$p$ & 1 & 1 & 1 \\
    Configured training updates & 200 & 200 & 200 \\
    Validation interval (updates) & 10 & 10 & 10 \\
    Checkpoint interval (updates) & 10 & 10 & 10 \\
    \hline
    Common comparison endpoint & 200 & 200 & 200 \\
    \hline
  \end{tabular}
  \par\vspace{3pt}
  \begin{minipage}{\linewidth}
    \normalsize
    Student rollouts provide the prefixes; the teacher is fixed during training.
    Merged entries apply to all three groups. The configured budget is 200 updates
    in every run, and all seven comparisons use the complete 200-update window.
    The Code run with \QwenRLCode{} is the completed reproduction described in
    the comparison-scope discussion below.
  \end{minipage}
\end{table}

\paragraph{Validation protocol.}
Code validation covers HumanEval+ and MBPP+~\citep{liu2023your} and
LiveCodeBench~\citep{jain2025livecodebench}; math validation covers
AIME 2024, AIME 2025, and AMC 2023.
We report the unweighted mean of the three benchmark scores, using
\(\mathrm{avg}@4\) for code and \(\mathrm{avg}@8\) for math.
These quantities average sampled outcomes; they are not the probability
that at least one of $k$ samples is correct, commonly denoted
\texttt{pass@k}.
The Code and R1-Distill standalone teacher references use the benchmarks,
decoding parameters, and response-length limits of their corresponding
student evaluations. For the completed \QwenRLCode{} run, the saved
validation outputs contain eight responses per problem. We select four
without replacement using a fixed seed of 0 and recompute
\(\mathrm{avg}@4\). Selection depends only on the benchmark, prompt identity,
and response index, not on correctness; each problem uses the same four
response indices at every checkpoint.
Color-matched dashed lines in Figure~\ref{fig:teacher-trainability} show
standalone teacher accuracy, with each reference line labeled directly.
They provide reference levels, not upper bounds on student performance.
For the Qwen3-1.7B math runs, validation uses 30 AIME 2024 problems,
30 AIME 2025 problems, and 83 AMC 2023 problems, with eight responses per
problem, temperature $0.7$, top-$p=0.95$, and a response limit of 31,744
tokens. Each checkpoint therefore evaluates 1,144 responses over 143
distinct problems. The two Qwen-math teachers are evaluated under this same
protocol, giving macro accuracies of $69.17\%$ for \QwenRLMath{} (AIME 2024
$63.75$, AIME 2025 $56.25$, AMC 2023 $87.50$) and $35.13\%$ for Qwen3-4B
Non-thinking ($25.42$, $19.58$, $60.39$); the earlier archived scores under
a 7,168-token limit are not used.

\paragraph{Training-loss statistic.}
The plotted loss is the global token-mean student-top-16 OPD surrogate:
the negative mean of the top-$K$ advantage summed over $K$ and averaged
over valid response tokens.
We use the logged statistic
\path{critic/advantages/mean_sum_over_k} with its sign reversed,
which removes the packed-microbatch scaling present in
\texttt{actor/pg\_loss}.
This implementation-level surrogate is distinct from the full-vocabulary
KL objective analyzed in Section~\ref{sec:main}.
The training signal $\widehat\mu_m$ of
Section~\ref{sec:learning-signal-collapse} uses the same $\widehat{\mathcal L}_m$
together with the logged pre-clip gradient norm \path{actor/grad_norm}.
In Table~\ref{tab:learning-signal}, the signal before the split step $s$ is
the median of $\widehat\mu_m$ over steps 1 to $s$, the signal after $s$ is the
median over steps $s+1$ to $s+30$, the last-50-steps column is the median
over the run's last 50 steps, and the loss left at $s$ is the centered
9-step rolling median of $\widehat{\mathcal L}_m/\widehat{\mathcal L}_1$ (the
smoothed curve of Figure~\ref{fig:signal-collapse-timeseries}) at step $s$.  The
signal gaps quoted in Section~\ref{sec:learning-signal-collapse} are ratios
of the after-$s$ medians of the runs with self-RL and larger-scale
teachers.
Figure~\ref{fig:teacher-trainability} draws the recorded $\widehat{\mathcal L}_m$
faintly and overlays its centered 9-step rolling median, without
normalization; all loss statistics use the raw records. Every loss panel displays its own numerical tick labels;
vertical limits vary by student/task group. Fractional loss reduction is
reported separately in Table~\ref{tab:loss-ability-recovery}.

\paragraph{Learning signal over training steps.}
Figure~\ref{fig:signal-collapse-timeseries} shows the data of
Figure~\ref{fig:code-signal-collapse} against the training step, and
Table~\ref{tab:learning-signal} summarizes the learning signal before and
after a split step $s$.

\begin{figure}[tbp]
 \centering
 \includegraphics[width=\textwidth]{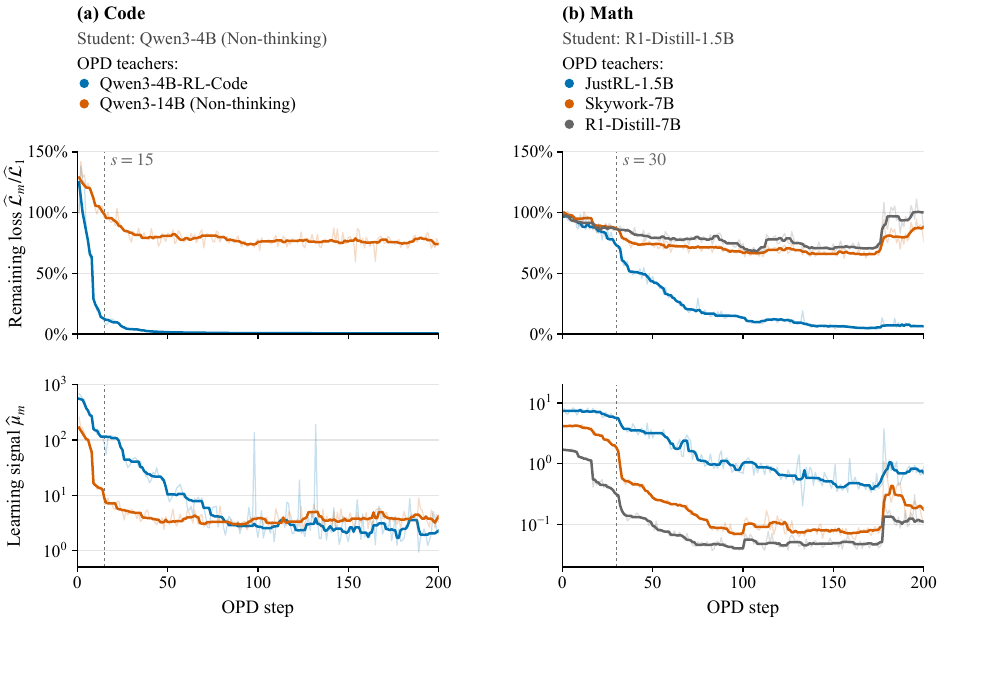}
 \caption{Learning signal and remaining loss over training steps.  Left
 column: the Code comparison; right column: the Math comparison.  Top:
 remaining OPD loss $\widehat{\mathcal L}_m/\widehat{\mathcal L}_1$.  Bottom:
 learning signal $\widehat\mu_m$ on a log scale. Both settings cover all
 200 updates. Lines are centered 9-step
 rolling medians of the per-step values, which are drawn faintly behind them.
 The dotted vertical line marks the split step $s$ of
 Table~\ref{tab:learning-signal}.}
 \label{fig:signal-collapse-timeseries}
\end{figure}

\begin{table}[htbp]
  \centering
  \normalsize
  \setlength{\tabcolsep}{5pt}
  \renewcommand{\arraystretch}{1.08}
  \caption{Learning signal $\widehat\mu_m$ around a split step $s$.  For each run of Figure~\ref{fig:signal-collapse-timeseries} we report
  the median learning signal $\widehat\mu_m$ before the split step $s$ (steps
  1 to $s$), in the 30 steps after $s$, and in the last 50 steps of the run;
  the drop is the ratio of the first two; loss left is the smoothed remaining
  loss $\widehat{\mathcal L}_m/\widehat{\mathcal L}_1$ at step $s$.  Here $s=15$
  for Code and $s=30$ for Math.  $^{\dagger}$ marks self-RL teachers.}
  \label{tab:learning-signal}
  \begin{tabular}{@{}lccccc@{}}
    \hline
    Teacher & Before $s$ & After $s$ & Drop & Loss left at $s$ & Last 50 steps \\
    \hline
    \multicolumn{6}{@{}l}{\emph{Code, $s=15$}} \\
    \QwenRLCode{}$^{\dagger}$ & $272$ & $40.9$ & $6.7\times$ & $12.6\%$ & $2.37$ \\
    Qwen3-14B & $60.7$ & $5.05$ & $12.0\times$ & $98.8\%$ & $3.78$ \\
    \hline
    \multicolumn{6}{@{}l}{\emph{Math, $s=30$}} \\
    JustRL-1.5B$^{\dagger}$ & $7.11$ & $3.37$ & $2$$\times$ & $73.0\%$ & $0.56$ \\
    \SkyworkTeacher{} & $3.73$ & $0.38$ & $10$$\times$ & $86.5\%$ & $0.090$ \\
    R1-Distill-7B & $1.14$ & $0.121$ & $9$$\times$ & $86.5\%$ & $0.056$ \\
    \hline
  \end{tabular}
\end{table}

\paragraph{Cross-fitted estimator of $\alpha$.}
At each checkpoint $m$ of Section~\ref{sec:learning-signal-collapse}, the
eight rollouts per prompt are split into two folds.  Let $g_m^{(0)}$ and
$g_m^{(1)}$ be the detached top-16 loss gradients estimated from the two
folds, and $h_m^{(0)}$ and $h_m^{(1)}$ the likelihood-ratio estimates of the
occupancy gradient, each using the opposite fold to construct its per-prompt
baseline.  The estimators are
\begin{equation}
 \widehat g_m^2:=\ip{g_m^{(0)}}{g_m^{(1)}},
 \qquad
 \widehat X_m:=\tfrac12\bigl(\ip{h_m^{(0)}}{g_m^{(1)}}+\ip{h_m^{(1)}}{g_m^{(0)}}\bigr),
 \qquad
 \widehat\alpha_m:=\widehat X_m/\widehat g_m^2.
 \label{eq:crossfit-occupancy}
\end{equation}
Conditional on the sampled prompts, the rollout folds are independent.
Their cross-product removes the same-fold rollout-noise bias from
$\widehat g_m^2$, but not uncertainty from prompt sampling. The baseline
from the opposite fold does not bias $\widehat X_m$.  Code uses 16
prompts and Math 64 prompts per checkpoint; Table~\ref{tab:occupancy-factor}
uses all 20 diagnostic checkpoints at steps $10,20,\ldots,200$ in both
settings. The Code learning-signal curves use all 200 updates of the same
runs as Figure~\ref{fig:teacher-trainability}: the completed \QwenRLCode{}
reproduction and the original Qwen3-14B run. The Qwen3-14B occupancy
statistics use all 20 checkpoints of that original run.

\paragraph{Metric reporting.}
Accuracy gain and teacher--student gap recovery follow the definitions in
Section~\ref{sec:evaluation-metrics}. Loss minima in
Table~\ref{tab:loss-ability-recovery} use raw updates 1--200 and need not be
sustained. For the five larger-scale runs and two self-RL runs, mean maximum
reductions are $41.6\%$ and $97.9\%$, respectively; mean final reductions
$(1-\widehat{\mathcal L}_{200}/\widehat{\mathcal L}_{1})\times100\%$
are $25.1\%$ and $96.2\%$.

\paragraph{Comparison scope.}
All seven main comparisons in Figure~\ref{fig:teacher-trainability} and
Table~\ref{tab:loss-ability-recovery} use 200 training updates. For
Code/Qwen3-4B, the \QwenRLCode{} curve comes from an independent reproduction
on two H200 GPUs, with the same OPD hyperparameters and native checkpoint
resumes. It replaces the earlier run that stopped at update 149; the two
runs are not concatenated. Qwen3-14B uses its complete original 200-update
run. Code accuracy uses four responses per problem, including the fixed-seed
subsampling described above. Validation is recorded at steps
$0,10,\ldots,200$ for the \QwenRLCode{} reproduction and the two
Math/Qwen3-1.7B runs. The Qwen3-14B and three Math/R1-Distill-1.5B runs
originally begin validation at step~10. We supplement their curves with
evaluations of the same initial student under the corresponding validation
protocol: Code shares the step~0 evaluation from the \QwenRLCode{}
reproduction, and R1-Distill-1.5B shares a before-training evaluation from
a separate full-logit OPD run with JustRL-1.5B. These are shared measurements
of the initial model, not additional evaluations within each training run.
The two Math/Qwen3-1.7B curves retain their own before-training evaluations
of the same base checkpoint; their step~0 estimates differ slightly because
of sampling. In all seven comparisons, accuracy gains and gap recovery use
the base student model at step~0 as $\mathrm{Acc}_{\mathrm{start}}$.
The loss and validation panels within each group both extend to step~200.
Each panel uses a linear scale that contains all its plotted observations.
Validation accuracy and loss use column-specific vertical limits, so their
absolute magnitudes are compared within the corresponding student/task group.
Each teacher condition comprises one training run; the curves therefore do
not provide repeated-seed uncertainty estimates.

\begin{table}[htbp]
\centering
\normalsize
\setlength{\tabcolsep}{5pt}
\caption{Completed 200-update \QwenOneSevenNT{} math runs. $n$ is the number of training rollouts per prompt; validation always uses eight responses per problem. Accuracies are percentages and gain is in percentage points. Final loss reduction is $(1-\widehat{\mathcal L}_{200}/\widehat{\mathcal L}_1)\times100\%$, using each run's own objective. Each row is one training run.}
  \label{tab:qwen-math-200-ablation}
  \begin{tabular}{llrrrrr}
\hline
Teacher & Support & $n$ & $\mathrm{Acc}_0$ & $\mathrm{Acc}_{200}$ & Gain & \shortstack{Final loss\\reduction} \\
\hline
\QwenRLMath{} & Top-16 & 1 & 21.10 & 43.94 & 22.83 & 55.37\% \\
\QwenFourNT{} & Top-16 & 1 & 20.66 & 22.74 & 2.07 & 6.24\% \\
\QwenRLMath{} & Full logits & 1 & 20.55 & 46.36 & 25.81 & 54.85\% \\
\QwenFourNT{} & Full logits & 1 & 21.66 & 22.88 & 1.22 & 11.60\% \\
\QwenFourNT{} & Top-16 & 8 & 21.85 & 24.87 & 3.02 & 9.89\% \\
\hline
\end{tabular}

\end{table}

\paragraph{Additional completed Qwen-math comparisons.}
Table~\ref{tab:qwen-math-200-ablation} includes the two main top-16 runs,
two full-logit runs with one training rollout per prompt, and a top-16
run with eight training rollouts per prompt for the Qwen3-4B teacher.
All rows use the Qwen-math training and validation limits described above.
The full-logit runs replace top-16 support with all 151,936 student
vocabulary entries. Each loss reduction is computed using that run's own
objective at updates 1 and 200; top-16 and full-logit loss magnitudes are
not identical objectives.
With full logits, \QwenRLMath{} yields a $25.81$-point accuracy gain and a
$54.85\%$ loss reduction, while Qwen3-4B yields a $1.22$-point gain and an
$11.60\%$ loss reduction. The teacher ordering observed with top-16 OPD
therefore also appears with full logits in these runs.
For the Qwen3-4B teacher, the eight-rollout run gains $3.02$ accuracy
points and reduces its loss by $9.89\%$. This additional single run does
not establish a general effect of rollout count across teachers.
The additional runs are excluded from the seven-run averages in
Section~\ref{sec:phenomenon-results}.

\paragraph{Scope of the additional experiments.}
The seven main OPD runs are supplemented by eight Math runs that vary
vocabulary support or the number of training rollouts
(Appendix~\ref{app:support-rollout-ablations}). Three of these eight runs
already appear in Table~\ref{tab:qwen-math-200-ablation}; they are not
additional replications. Appendix~\ref{app:ablation-sft} reports four SFT
runs and two subsequent OPD runs initialized from SFT checkpoints.
Appendix~\ref{app:if-experiment} reports two instruction-following OPD
runs. These comparisons use the specified single-run configurations;
they do not constitute repeated-seed estimates. Checkpoint diagnostics
and standalone model evaluations are reported separately from training
runs.

\subsection{Additional Training-Dynamics Diagnostics}
\label{app:additional-dynamics}

\paragraph{Gradient norms and clipping.}
Figure~\ref{fig:appendix-gradient-norm} separates the gradient norm from the
loss-normalized learning signal in Figure~\ref{fig:signal-collapse-timeseries}.
We use the logged pre-clipping norm at each of the 200 optimizer updates.
The norm exceeds the clipping threshold of 1 at 24 updates with
\QwenRLCode{}, 64 with Qwen3-14B, 56 with JustRL-1.5B, 22 with
\SkyworkTeacher{}, and none with R1-Distill-7B.
These counts describe when clipping is active; they do not identify the
effective AdamW update, which also depends on the optimizer's moment
estimates and weight decay. Accordingly, the ratio
$\widehat\mu_m=\widehat G_m^2/(2\widehat{\mathcal L}_m)$ is an empirical
diagnostic rather than a direct measurement of the population
continuous-time loss derivative. A small gradient norm alone does not
distinguish successful matching from a plateau; the remaining loss must
also be considered. All recorded losses in these five diagnostic runs are
positive. The truncated objective is not a nonnegative full-vocabulary KL
in general; Appendix~\ref{app:if-experiment} reports a run in which it
becomes negative and does not interpret the corresponding ratio as a
loss-decay rate.

\begin{figure}[tbp]
  \centering
  \includegraphics[width=\textwidth]{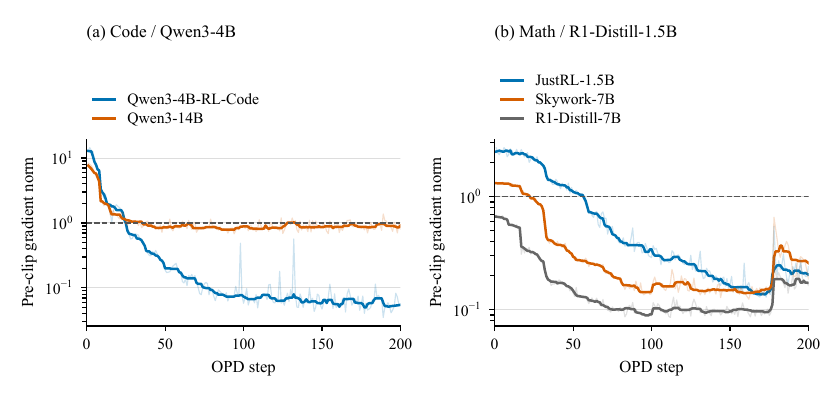}
  \caption{Pre-clipping gradient norms in the five runs used for the
  learning-signal analysis. Curves use a logarithmic vertical axis.
  Solid lines are centered 9-step rolling medians; faint lines show all
  200 recorded values. The dashed line marks the clipping threshold of 1.
  These are norms of the training gradients, not norms of the AdamW
  parameter updates.}
  \label{fig:appendix-gradient-norm}
\end{figure}

\paragraph{Diagnostic sampling.}
The occupancy measurements are separate evaluations of saved checkpoints,
not additional training runs. The prompt subset is fixed across the 20
checkpoints at steps $10,20,\ldots,200$, with eight sampled responses per
prompt at each checkpoint. The Code subset contains 16 Eurus coding
prompts; the Math subset contains the first
64 DAPO-Math-17K prompts after deterministic overlength filtering, without
shuffling. The Math diagnostic uses sampling seed 42 and FP32 model
computations. Table~\ref{tab:occupancy-diagnostic-config} summarizes the
sampling budget. Four responses per prompt enter each fold of
Eq~\eqref{eq:crossfit-occupancy}; the opposite fold supplies the per-prompt
baseline for the occupancy gradient.

\begin{table}[htbp]
  \centering
  \caption{Sampling settings for checkpoint-level occupancy diagnostics.
  These rollout counts differ from the one-rollout training setting.}
  \label{tab:occupancy-diagnostic-config}
  \setlength{\tabcolsep}{5pt}
  \renewcommand{\arraystretch}{1.12}
  \begin{tabular}{@{}lcc@{}}
    \toprule
    Setting & Code & Math \\
    \midrule
    Student & Qwen3-4B & R1-Distill-1.5B \\
    Prompt source & Eurus code & DAPO-Math-17K \\
    Prompts per checkpoint & 16 & 64 \\
    Responses per prompt & 8 & 8 \\
    Responses per fold & 64 & 256 \\
    Maximum response length & 16,384 & 21,000 \\
    Student vocabulary support & Top 16 & Top 16 \\
    Checkpoints per teacher & 20 & 20 \\
    \bottomrule
  \end{tabular}
\end{table}

\paragraph{Occupancy factor over training.}
Figure~\ref{fig:appendix-occupancy-dynamics} shows every measured
$1+\widehat\alpha_m$, including the self-RL teacher JustRL-1.5B as a
reference. All 80 cross-fold estimates of $\widehat g_m^2$ are positive,
so no checkpoint is removed on account of a nonpositive denominator.
For the larger-scale teachers, $1+\widehat\alpha_m$ is positive at
19/20 checkpoints with Qwen3-14B, 12/20 with \SkyworkTeacher{}, and 13/20
with R1-Distill-7B, reproducing Table~\ref{tab:occupancy-factor}.
It is also positive at 13/20 checkpoints with JustRL-1.5B.
The checkpoint estimates fluctuate and can become negative even in the
self-RL run. They therefore do not establish the sign of the population
loss derivative at every update. In particular, these finite-sample ratios
should not be read as confidence intervals or as deterministic predictions
of the subsequent AdamW trajectory. Their role is to test whether
persistent occupancy cancellation is shared by the observed plateaus.

\begin{figure}[tbp]
  \centering
  \includegraphics[width=\textwidth]{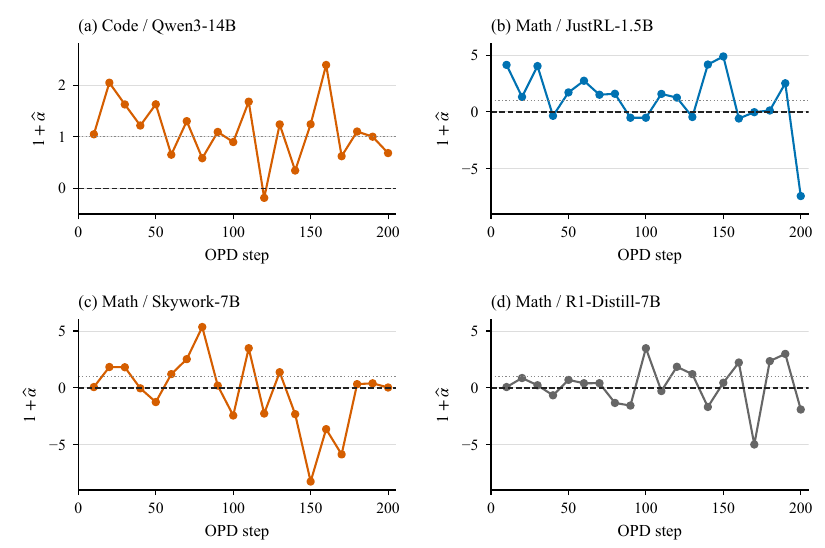}
  \caption{Checkpoint-level occupancy factor $1+\widehat\alpha_m$.
  Each point is one cross-fitted estimate, and lines connect consecutive
  checkpoints without smoothing. The dashed line is zero, the cancellation
  threshold of the idealized flow; the dotted line is one, corresponding
  to zero occupancy contribution. The Code panel uses a different vertical
  range from the three Math panels. No confidence intervals are implied.}
  \label{fig:appendix-occupancy-dynamics}
\end{figure}

\clearpage
\section{Parameter and Representation Measurement Details}
\label{app:representation-details}
\setcounter{table}{0}

\paragraph{Checkpoint scope and parameter statistics.}
The eight OPD endpoints in Table~\ref{tab:parameter-representation} are
full-parameter, student-top-16 reverse-OPD checkpoints at step 200.
The three R1-Distill-1.5B runs use the same clean single-rollout runs as
Figure~\ref{fig:teacher-trainability}: JustRL-1.5B, \SkyworkTeacher{}, and R1-Distill-7B.
The Qwen3-4B endpoints use Qwen3-14B Non-thinking and \QwenRLCode{} on
Code. The \QwenRLCode{}
endpoint comes from a 200-step reproduction on two H200 GPUs; the original
run stopped at update 149. The two Qwen3-1.7B endpoints come from
the Math runs of Figure~\ref{fig:teacher-trainability}, with teachers
\QwenRLMath{} and Qwen3-4B. The IF endpoint comes from the supplementary run of
Appendix~\ref{app:if-experiment} with the self-RL teacher
UltraData-IF-1.5B.
All parameter statistics compare exported model checkpoints, compute
tensor differences in FP32, and count tied embedding/head weights once.
The global displacement aggregates the squared differences and squared
starting weights before taking their ratio and square root; it is not an
unweighted average of per-layer relative changes.

\paragraph{Teacher--student parameter distance.}
The Teacher columns of Table~\ref{tab:parameter-representation} report the relative
parameter distance $\norm{\theta_T-\theta_0}_2/\norm{\theta_0}_2$ between each
self-RL teacher and its student, which
Section~\ref{sec:parameter-recovery} uses.  We compute it with the same
protocol as $d_{\mathrm{par}}$ above, comparing the released teacher checkpoint
with the student checkpoint used in our runs.  Most parameter entries are
identical in BF16, namely 87.7\% for \QwenRLCode{}, 34.4\% for JustRL-1.5B and
83.4\% for UltraData-IF-1.5B.  In the JustRL-1.5B run, the OPD student moves by
only 0.098\%, much less than the teacher's distance of 0.50\%.  OPD thus
reduces the loss to 3.7\% of its starting value without moving the student to
the teacher's parameters.

\paragraph{Representation extraction and CKA.}
Each checkpoint pair uses identical tokenizer vocabulary and prompt token
IDs. The Math bank contains AIME24 (30 prompts), AIME25 (30), and AMC23
(83); the Code bank contains HumanEval+ (164), MBPP+ (378), and
LiveCodeBench (131). A BF16 forward pass is performed separately for each
prompt, without response generation. At each hidden-state index, token
states are mean-pooled and stored in FP32. For the resulting representation
matrices $X$ and $Y$, centered across prompts, we compute
\begin{equation}
 \operatorname{CKA}(X,Y)
 =\frac{\norm{Y^\top X}_F^2}
        {\norm{X^\top X}_F\norm{Y^\top Y}_F}
 \label{eq:prompt-cka}
\end{equation}
using FP64 accumulation~\citep{kornblith2019similarity}.
The Teacher columns of Table~\ref{tab:parameter-representation} use the same protocol
and the prompt bank of the corresponding base model, so UltraData-IF-1.5B is
also compared on the Math bank.  The IF endpoint is likewise compared on
the R1-Distill-1.5B Math bank.  The Qwen3-1.7B endpoints use the Qwen3 Math
bank, since Qwen3-1.7B and Qwen3-4B share the same tokenizer.  The node that
holds the \QwenRLMath{} endpoint had no free GPU, so its CKA is computed on CPU
with standard attention instead of FlashAttention.  On the Qwen3-4B endpoint,
the two implementations differ by at most $3\times10^{-4}$ in layerwise CKA.
The figures report full-bank estimates. Existing stability checks use 20
bootstrap resamples, each drawing 100 prompts from the pooled bank;
they quantify prompt-resampling variation rather than training-seed
uncertainty. Subset variation is appreciable in Code: the minimum CKA is
$0.9587$ on HumanEval+, $0.9940$ on MBPP+, and $0.8808$ on LiveCodeBench.
The LiveCodeBench minimum has bootstrap standard deviation $0.0459$,
so the pooled and subset estimates should be interpreted at their
respective levels of aggregation.  High CKA concerns the geometry of the pooled
prompt representations rather than equality of individual activations, since
CKA is invariant to orthogonal transformations and isotropic scaling
\citep{kornblith2019similarity}.  It also does not establish a fixed neural
tangent kernel, whose use for language-model fine-tuning requires additional
conditions \citep{malladi2023kernel}.

\paragraph{Layerwise profiles.}
Figure~\ref{fig:appendix-layerwise-cka} expands the minimum-CKA summary in
Table~\ref{tab:parameter-representation} into the complete layerwise
profiles. Panels (a)--(c) contain the same eight step-200 OPD endpoints,
and panel (d) compares the three self-RL teachers with their respective
initial students. The horizontal coordinate divides the hidden-state
index by its maximum, including the embedding and final normalized
states. Qwen3-4B provides 37 hidden-state entries and the two smaller
student architectures provide 29. We use the same pooled prompt banks
and checkpoint comparisons described above, without smoothing or
resampling the measured hidden-state indices.

The plots expose changes that a single minimum cannot localize: for
example, the minimum CKA of the Qwen3-14B Code run is 0.9850, whereas the
\QwenRLCode{} OPD endpoint stays above 0.9978. The self-RL teacher
JustRL-1.5B reaches a minimum of 0.9674 relative to its initial student,
while its OPD endpoint has minimum 0.9904. These are endpoint comparisons,
not a time series of representation learning, and high CKA does not show
that the teacher's behavior has been reproduced.

\begin{figure}[tbp]
  \centering
  \includegraphics[width=\textwidth]{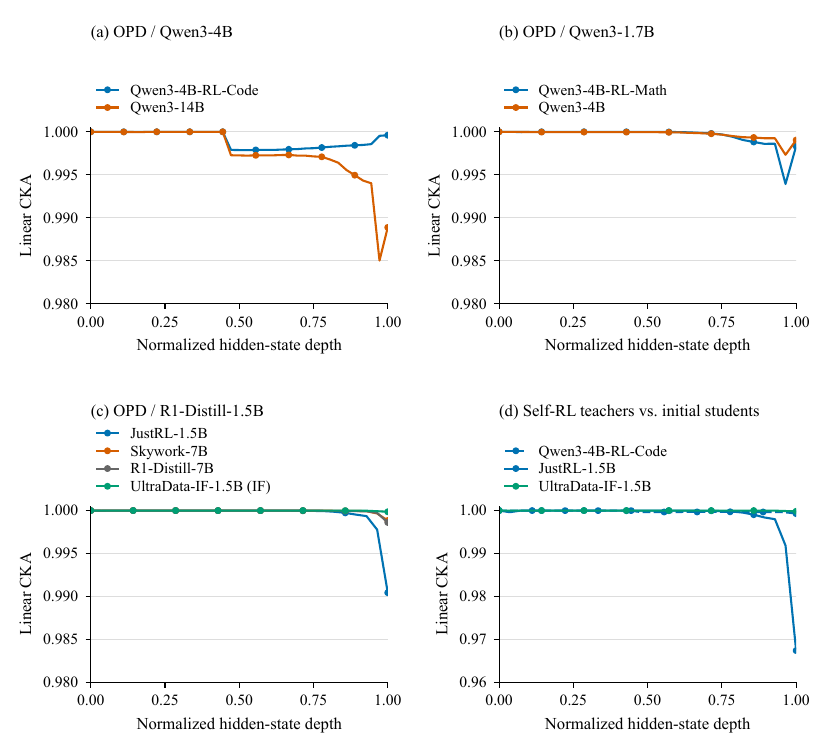}
  \caption{Layerwise linear CKA for the checkpoints in
  Table~\ref{tab:parameter-representation}. Panels (a)--(c) compare each
  OPD student after 200 updates with its initialization; the legend names
  the teacher. Panel (d) compares self-RL teachers with their initial
  students, using a wider vertical range. All R1-Distill-1.5B comparisons,
  including the IF endpoint and UltraData-IF teacher, use the same pooled
  Math prompt bank. Lines connect measured hidden-state entries; markers
  are placed every four entries for readability.}
  \label{fig:appendix-layerwise-cka}
\end{figure}

\clearpage
\section{OPD on Instruction Following}
\label{app:if-experiment}
\setcounter{table}{0}

\paragraph{Models and data.}
We compare two teachers for R1-Distill-1.5B: the self-RL teacher
UltraData-IF-1.5B, obtained by further RL training of the student on
instruction-following data \citep{fu2026rethinking}, and the larger
R1-Distill-7B. The former is the instruction-following pair included in
Table~\ref{tab:parameter-representation}; the latter is an additional
comparison reported here. Both OPD runs use the first-turn prompts of all
4,501 conversations in Multi-IF \citep{he2024multi}. All prompts remain
after the 4,096-token length filter. Evaluation uses the 4,445 conversations
with all three turns: 896 English, 518 French, 529 Hindi, 515 Russian,
454 Chinese, 493 Italian, 524 Portuguese, and 516 Spanish conversations.
All evaluation conversations' first-turn prompts occur in the training
set. These results measure transfer on overlapping inputs, not held-out
generalization; later-turn prompts are not used for OPD training.

\paragraph{Training.}
Both runs start from the same base checkpoint and use the settings in
Table~\ref{tab:if-config}. They use the student's top-16 tokens with
renormalized student weights and the original teacher--student log-ratio,
as in Eq~\eqref{eq:topk-kl}, averaged over valid response tokens. The
instruction-checking reward is logged as a diagnostic and is not mixed
into the distillation update. Each training update uses 64 prompts and eight rollouts per prompt,
so the optimizer processes 512 responses in one update. The separate
three-turn evaluation uses one response per turn.

\begin{table}[htbp]
  \centering
  \caption{Shared configuration of the two instruction-following OPD runs.
  Both runs use full-parameter updates on two H200 GPUs.}
  \label{tab:if-config}
  \begin{tabular}{@{}ll@{}}
    \toprule
    \textbf{Setting} & \textbf{Value} \\
    \midrule
    Optimizer & AdamW, $\beta=(0.9,0.999)$ \\
    Learning rate / schedule & $10^{-6}$ / constant, no warmup \\
    Weight decay / gradient clip & $0.01$ / $\ell_2$ norm 1 \\
    Prompt batch / configured minibatch (prompts) & 64 / 64 \\
    Rollouts per prompt / optimization epochs & 8 / 1 \\
    Training temperature / top-$p$ & 1 / 1 \\
    Teacher temperature & 1 \\
    Maximum prompt / response tokens & 4,096 / 7,168 \\
    Prompt order & Fixed, no shuffling \\
    Updates / checkpoint interval & 200 / 10 \\
    \midrule
    Evaluation responses per turn & 1 (seed 0) \\
    Evaluation temperature / top-$p$ & 0.6 / 0.95 \\
    Maximum evaluation response / context & 16,384 / 32,768 tokens \\
    \bottomrule
  \end{tabular}
\end{table}

\paragraph{Evaluation and score aggregation.}
We evaluate the base student, both standalone teachers, and every tenth
OPD checkpoint on the same three-turn conversations. Each subsequent turn
includes the model's preceding answers with completed reasoning blocks
removed; scoring also uses the text after the final \texttt{</think>} tag
when present. Responses without this closing tag are scored as generated.
Overlong dialogue contexts receive an empty response rather than a truncated
history. Responses that reach the output limit are retained for scoring.

For each language and turn, the Multi-IF instruction checkers provide four
rates: strict and loose prompt-level success, and strict and loose
instruction-level success. Prompt-level success requires all instructions
to pass; instruction-level success divides the total passed instructions
by the total number of instructions. Loose checks additionally allow
variants removing boundary lines or asterisks. The reported
\emph{three-turn composite score} is the unweighted mean of these four
rates over all eight languages and all three turns, expressed as a
percentage. Thus, it averages 96 language--turn--metric rates rather than
pooling conversations across languages. This is a composite benchmark
score, not a single accuracy rate or an avg@8 estimate.
Gain is the change from the common base-student score at step~0; gap
recovery divides this gain by the corresponding teacher--base score gap.
The configured single-turn diagnostic validation loop is disabled; all
reported evaluation scores come from this separate three-turn protocol.

\begin{table}[htbp]
  \centering
  \caption{Instruction-following outcomes after 200 OPD updates.
  Top: raw signed top-16 objective at update~1, its minimum over updates
  1--200, and its final value. Bottom: three-turn composite scores (\%),
  gain from the shared base student (pp), and score-gap recovery (\%).
  $^{\dagger}$ marks the self-RL teacher.}
  \label{tab:if-results}
  \begin{tabular}{@{}lrrr@{}}
    \toprule
    \textbf{Teacher} & \textbf{Initial objective} & \textbf{Minimum} & \textbf{Final} \\
    \midrule
    UltraData-IF-1.5B$^{\dagger}$ & 0.00852 & $-0.00134$ & 0.00124 \\
    R1-Distill-7B & 0.78025 & 0.09804 & 0.11119 \\
    \bottomrule
  \end{tabular}
  \par\medskip
  \begin{tabular}{@{}lrrrrr@{}}
    \toprule
    \textbf{Teacher} & \textbf{Base} & \textbf{OPD} & \textbf{Teacher score} & \textbf{Gain} & \textbf{Recovery} \\
    \midrule
    UltraData-IF-1.5B$^{\dagger}$ & 30.97\% & 40.86\% & 41.38\% & $+9.88$ & 95.0\% \\
    R1-Distill-7B & 30.97\% & 27.43\% & 47.72\% & $-3.54$ & $-21.2\%$ \\
    \bottomrule
  \end{tabular}
\end{table}

\begin{figure}[htbp]
  \centering
  \includegraphics[width=\textwidth]{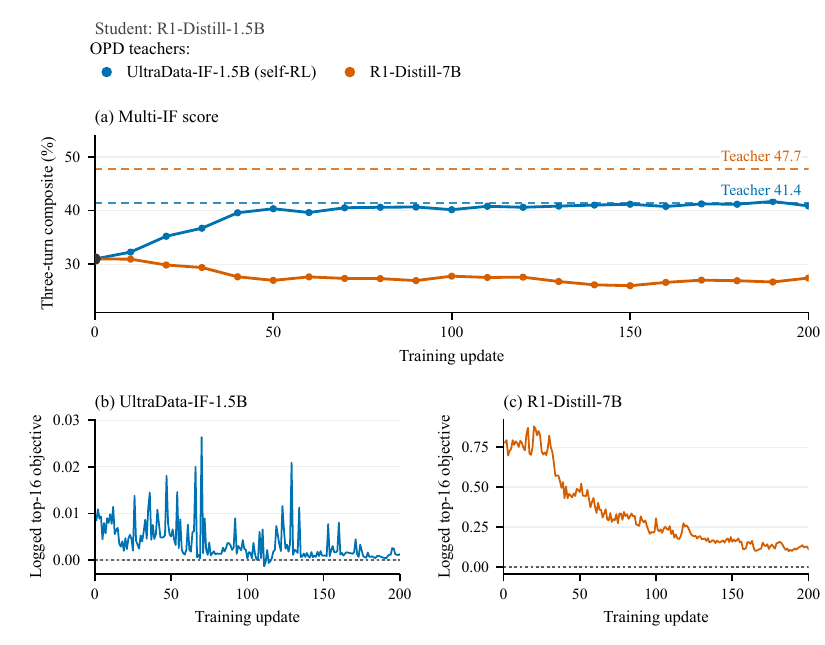}
  \caption{Instruction-following dynamics over 200 updates.
  (a) Three-turn composite scores; the black point is the shared initial
  student, and dashed lines are the corresponding standalone teachers.
  (b,c) Raw signed top-16 objectives, without smoothing, on separate linear
  axes that include zero. Colors identify the teacher used for OPD.}
  \label{fig:if-dynamics}
\end{figure}

\paragraph{Results and interpretation.}
With the self-RL teacher, the three-turn composite score rises from 30.97
at initialization to 40.86 at update~200, compared with the teacher's
41.38, closing 95.0\% of the score gap. With R1-Distill-7B, the score falls
to 27.43 despite the teacher's stronger standalone score of 47.72 and a
substantial decrease in the logged objective
(Table~\ref{tab:if-results}, Figure~\ref{fig:if-dynamics}). The third-turn
composite score follows the same pattern: 22.31 initially, 29.02 with
UltraData-IF-1.5B, and 17.62 with R1-Distill-7B; the teachers score 29.45
and 35.84, respectively.

The top-16 surrogate in Eq~\eqref{eq:topk-kl} need not be nonnegative:
its weights are renormalized on the retained support, but its log-ratio
uses the original distributions. The UltraData run has four negative
observations, with minimum $-0.00134$ at update~111. We therefore report
its raw objective values rather than interpreting a percentage reduction
above 100\% as recovery of the full-vocabulary KL. The two runs illustrate
that reducing this surrogate does not by itself ensure improvement in
instruction following. Each curve is a single training run with one
sampled dialogue per evaluation conversation, and the overlapping
training and evaluation inputs limit conclusions about generalization.

\section{Ablations}
\label{app:ablations}
\setcounter{table}{0}

\subsection{SFT on Teacher Responses and Subsequent OPD}
\label{app:ablation-sft}

We evaluate SFT for four Math student--teacher pairs from
Table~\ref{tab:student-teacher-pairs}: R1-Distill-1.5B learns from
JustRL-1.5B or R1-Distill-7B, and \QwenOneSevenNT{} learns from
\QwenRLMath{} or \QwenFourNT{}. We then continue OPD for 200 updates from
the final SFT checkpoints associated with R1-Distill-7B and
\QwenFourNT{}, using the same respective teachers. Each setting is one
training run. The \QwenRLMath{} teacher is its RL update-500 checkpoint.
SFT, direct OPD, and SFT$\rightarrow$OPD have different
training budgets; the last includes the entire SFT stage before its
200 OPD updates.

\paragraph{Teacher-response collection.}
We use the project's DAPO-Math-17K training file, containing 17,917 rows
and 17,398 distinct question texts. We preserve each question body,
replace the source file's answer-format instructions with
\emph{Please reason step by step, and put your final answer within
\texttt{\textbackslash boxed\{\}}.}, and apply the teacher's chat template.
Each teacher contributes 20,000 accepted responses. Since there are fewer
than 20,000 distinct questions, collection traverses deterministically
shuffled cycles of the source rows, with shuffle seed $42+\text{cycle}$
and fresh generations on repeated questions. The candidate pool
contains four cycles (71,668 slots). We draw one response per
attempt, retry a candidate slot at most three times, and continue through
the candidate pool until the accepted-response quota is reached.

Teacher generation uses temperature 0.7, top-$p$ 0.95, no top-$k$
truncation, repetition penalty 1, a 12,288-token response cap, and a
32,768-token context limit. Two vLLM workers use seeds 42 and 43, with generation batches of 128.
The Qwen teachers generate in non-thinking mode; the R1 teachers retain
their reasoning traces. We require the output to contain the
\texttt{\textbackslash boxed} marker and terminate normally, reject
detected repetitive text, and additionally require a closing
\texttt{</think>} tag for R1 responses. The repetition filters detect
a stripped line of at least 20 characters appearing five times, a
100-character substring appearing three times when sampled every ten
characters, or three consecutive repetitions of a 50-character block
for outputs longer than 5,000 characters. We do not filter for answer
correctness. A final student-tokenization check rejects examples longer
than 14,336 tokens including prompt and response; all accepted examples
fit this limit, so the admitted targets need not be truncated.

\begin{table}[htbp]
  \centering
  \normalsize
  \setlength{\tabcolsep}{4pt}
  \caption{SFT response collection and row-split overlap. Every teacher
  supplies 20,000 accepted responses, split into 19,000 training and
  1,000 held-out rows. \emph{Distinct questions} counts the accepted pool.
  \emph{Shared held-out rows} counts held-out examples whose question text also
  appears in the training split.}
  \label{tab:sft-response-data}
  \begin{tabular}{@{}lrrr@{}}
    \toprule
    Teacher & Attempts & Distinct questions & Shared held-out rows \\
    \midrule
    JustRL-1.5B & 27,706 & 16,401 & 344 \\
    R1-Distill-7B & 30,565 & 15,855 & 387 \\
    \QwenRLMath{} & 32,034 & 15,481 & 424 \\
    \QwenFourNT{} & 22,931 & 17,123 & 268 \\
    \bottomrule
  \end{tabular}
\end{table}

\paragraph{SFT objective and implementation.}
We perform full-parameter SFT with LLaMA-Factory, using cross-entropy on
the teacher response and masking prompt and padding tokens. We use
source revision \texttt{ac26e38} of LLaMA-Factory. Sequence packing
and label smoothing are disabled. The recorded templates are \texttt{deepseekr1}
for R1 and \texttt{qwen3} for Qwen, with
\texttt{enable\_thinking=True} in the SFT preprocessing configuration.
For R1, a missing opening \texttt{<think>} tag is restored before
training, preserving the generated reasoning and answer. For Qwen,
the reasoning template prepends empty thinking tags to a response that
has neither thinking tag; those tags are included in the supervised
target. This preprocessing flag is distinct from the non-thinking
teacher-generation and Qwen benchmark settings.

We use BF16 training, DeepSpeed ZeRO-2, FlashAttention-2, the Liger kernel,
and gradient checkpointing on two H200 GPUs.
Table~\ref{tab:sft-training-settings} gives the optimization settings.
The response rows are split with seed 42 before preprocessing and
shuffled during SFT. This is
a row-level split: repeated questions can occur on both sides
(Table~\ref{tab:sft-response-data}). Consequently, held-out
cross-entropy is a diagnostic of fitting these teacher responses,
not an independent test of mathematical generalization. We measure
mathematical capability separately on the benchmarks below.

\begin{table}[htbp]
  \centering
  \normalsize
  \setlength{\tabcolsep}{4pt}
  \caption{Optimization settings for all four SFT runs and the two
  subsequent OPD runs. OPD response limits differ by student; the other
  listed settings are shared within each stage.}
  \label{tab:sft-training-settings}
  \begin{tabular}{@{}p{126pt}p{119pt}p{129pt}@{}}
    \toprule
    \raggedright Setting & \raggedright SFT & \raggedright Subsequent OPD \tabularnewline
    \midrule
    \raggedright Training examples & \raggedright 19,000 response rows & \raggedright DAPO training prompts \tabularnewline
    \raggedright Optimizer updates & \raggedright 297 (one epoch) & \raggedright 200 \tabularnewline
    \raggedright Global batch & \raggedright 64 examples & \raggedright 64 prompts, one rollout each \tabularnewline
    \raggedright Per-GPU SFT batch & \raggedright 8, accumulated 4 times & \raggedright Dynamic token batching \tabularnewline
    \raggedright Trainable weights & \raggedright All & \raggedright All \tabularnewline
    \raggedright Optimizer & \raggedright Fused AdamW & \raggedright AdamW \tabularnewline
    \raggedright Learning rate & \raggedright Peak $10^{-5}$ & \raggedright Constant $10^{-6}$ \tabularnewline
    \raggedright Schedule / warmup & \raggedright Cosine / 5\% & \raggedright Constant / none \tabularnewline
    \raggedright Adam $(\beta_1,\beta_2)$ & \raggedright $(0.9,0.999)$ & \raggedright $(0.9,0.999)$ \tabularnewline
    \raggedright Adam $\epsilon$ & \raggedright $10^{-8}$ & \raggedright Library default (not explicitly set) \tabularnewline
    \raggedright Weight decay & \raggedright 0 & \raggedright 0.01 \tabularnewline
    \raggedright Gradient clipping norm & \raggedright 1 & \raggedright 1 \tabularnewline
    \raggedright Prompt + response limit & \raggedright 14,336 tokens total & \raggedright 1,024 prompt tokens; 21,000 response tokens (R1), 7,168 (Qwen) \tabularnewline
    \raggedright Random / data seed & \raggedright 42 / 42 & \raggedright Not explicitly set in saved configuration \tabularnewline
    \raggedright Loss recording & \raggedright Every 5 updates & \raggedright Every update \tabularnewline
    \raggedright Evaluation during training & \raggedright Row-heldout CE at 100, 200, 297 & \raggedright Math benchmarks at $0,10,\ldots,200$ \tabularnewline
    \bottomrule
  \end{tabular}
\end{table}

\paragraph{SFT loss dynamics.}
Figure~\ref{fig:sft-training-dynamics} shows all four runs. The first
logged training losses at update 5 are 0.4108, 0.3711, 0.4262, and
0.2764 for JustRL-1.5B, R1-Distill-7B, \QwenRLMath{}, and
\QwenFourNT{}, respectively. The last logged training losses, at update
295, are 0.2951, 0.3459, 0.3229, and 0.2368. Training finishes at update
297; the corresponding final held-out losses are 0.2942, 0.3537,
0.3252, and 0.2332. These cross-entropies concern different
teacher-response datasets and are not the OPD objective. The final
checkpoint is used for benchmarking and, where applicable, subsequent
OPD; checkpoints are not selected by benchmark accuracy.

\begin{figure}[!t]
  \centering
  \includegraphics[width=\textwidth]{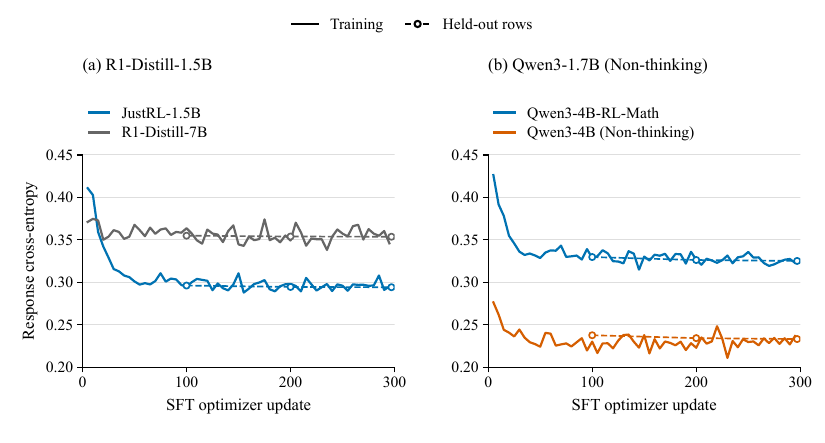}
  \caption{SFT loss dynamics for all four teacher-response datasets.
  Solid lines join recorded training cross-entropies at updates
  $5,10,\ldots,295$; no update-0 loss is available.
  Dashed lines and hollow circles mark held-out response-row
  cross-entropy at updates 100, 200, and 297. The held-out split has
  question overlap with training and is not a mathematical benchmark.
  Colors identify teachers as in Figure~\ref{fig:teacher-trainability}.
  Each teacher contributes one run; no averaging across seeds is shown.}
  \label{fig:sft-training-dynamics}
\end{figure}

\paragraph{Mathematical benchmark evaluation.}
We evaluate each final SFT checkpoint on 30 AIME 2024, 30 AIME 2025,
and 83 AMC 2023 problems. We sample eight responses per problem
(1,144 responses total), with temperature 0.7, top-$p$ 0.95, and a
31,744-token response cap, 1,024-token prompt limit, and 32,768-token
context limit. We score answer correctness and report the
unweighted mean of the three benchmark accuracies, using the same
\(\mathrm{avg}@8\) convention as Section~\ref{sec:phenomenon-setup}.
R1 evaluation uses vLLM 0.11.0 and the archived R1/Modal benchmark
prompts; Qwen uses SGLang, its non-thinking chat template, and the
Qwen benchmark prompts. These benchmarks are separate from the
1,000-row SFT loss holdout. No benchmark-accuracy trajectory was
recorded during the SFT stage; Figure~\ref{fig:sft-training-dynamics}
therefore reports only SFT cross-entropy.

\begin{table}[htbp]
  \centering
  \normalsize
  \setlength{\tabcolsep}{4pt}
  \caption{Standalone evaluation of the four final SFT checkpoints.
  Every entry is a percentage; Mean is the unweighted mean of the
  three benchmark columns. These are the original SFT evaluations,
  not the subsequent OPD runs' independent step-0 evaluations.}
  \label{tab:sft-benchmark-breakdown}
  \begin{tabular}{@{}lrrrr@{}}
    \toprule
    Teacher & AIME24 & AIME25 & AMC23 & Mean \\
    \midrule
    JustRL-1.5B & 46.67\% & 35.83\% & 79.67\% & 54.06\% \\
    R1-Distill-7B & 21.67\% & 20.42\% & 55.57\% & 32.55\% \\
    \QwenRLMath{} & 31.25\% & 27.92\% & 64.01\% & 41.06\% \\
    \QwenFourNT{} & 13.33\% & 12.92\% & 41.42\% & 22.56\% \\
    \bottomrule
  \end{tabular}
\end{table}

\paragraph{Subsequent OPD.}
The two continuation runs use the final update-297 SFT models as their
students and retain the corresponding teachers. We optimize the
student-top-16 reverse-KL surrogate in Eq~\eqref{eq:topk-kl}, with
student-probability weighting and a global mean over valid response
tokens. Both student and teacher temperatures are 1; training uses
top-$p=1$, no top-$k$ sampling truncation, and repetition penalty 1.
Each update uses 64 prompts, one student rollout per prompt, and one
optimization pass. The training data are not shuffled. Optimizer and
length settings are listed in Table~\ref{tab:sft-training-settings}.
Benchmark decoding and aggregation follow the preceding paragraph,
with an evaluation before training and every ten updates thereafter.

Both continuation runs use two H200 GPUs. The Qwen continuation uses
SGLang, as does its direct-OPD comparison. The R1 continuation uses
vLLM 0.11.0 in a different runtime from the four-GPU direct-OPD run;
the global batch remains 64. Its validation sampling uses vLLM's
PyTorch sampler rather than the original FlashInfer sampler, so the
random draws need not coincide. Checkpoints are saved every ten
updates, with at most two retained, and the comparison uses the
update-200 endpoint. The Qwen log includes an abandoned initial
attempt at steps 0 and 1; the completed trajectory supersedes those
records. Figures and loss summaries use the last recorded value at
each step, yielding one evaluation at each scheduled checkpoint and
200 training-loss observations per continuation.

\paragraph{Separate post-SFT starting evaluations.}
The continuation runs independently evaluate their SFT starting
checkpoints, obtaining 36.57\% for R1 and 21.63\% for Qwen, compared
with the original standalone SFT scores of 32.55\% and 22.56\%.
For Qwen, all 143 rendered inputs and ground truths match the original
SFT benchmark in order; the responses are newly sampled. For R1,
the archived standalone benchmark asks
\emph{Let's think step by step and output the final answer within
\texttt{\textbackslash boxed\{\}}.}, whereas the continuation
uses \emph{Please reason step by step, and put your final answer within
\texttt{\textbackslash boxed\{\}}.}
The R1 scores therefore come from different prompt formats as well
as different sampled responses and runtimes. They must not be
treated as repeated evaluations under identical conditions or as
the base student's pre-SFT accuracy.

\paragraph{Accuracy comparison.}
Table~\ref{tab:sft-ablation} compares the final outcomes. With the three
larger-scale teachers tested, SFT alone does no better than direct OPD.
For the self-RL teacher JustRL-1.5B, SFT reaches 54.1\%, slightly above
direct OPD. Continuing OPD after SFT reaches 40.2\% with R1-Distill-7B
and 22.6\% with \QwenFourNT{}, compared with 42.7\% and 22.7\% for
direct OPD. Relative to their own post-SFT starting evaluations, the
continuations gain 3.60 and 1.01 percentage points. Neither surpasses
direct OPD at the final update. These single-run comparisons do not
establish statistical significance for the accuracy differences.

\begin{table}[htbp]
  \centering
  \normalsize
  \setlength{\tabcolsep}{3pt}
  \caption{SFT on teacher responses, with and without subsequent OPD.
  Start is the base student's step-0 accuracy before either training stage;
  OPD is its accuracy after 200 direct OPD updates. SFT is the original
  evaluation of the final SFT checkpoint; SFT$\rightarrow$OPD is the
  accuracy after 200 additional OPD updates. All accuracies are percentages
  averaged across the three Math benchmarks. A dash indicates an untested
  setting. $^{\dagger}$ marks the self-RL teacher. Qwen models use
  non-thinking mode for generation and evaluation.}
  \label{tab:sft-ablation}
  \begin{tabular}{@{}llrrrrr@{}}
    \toprule
    Student & Teacher & Start & OPD & SFT & SFT$\rightarrow$OPD & Teacher \\
    \midrule
    R1-Distill-1.5B & JustRL-1.5B$^{\dagger}$ & 39.9 & 51.9 & 54.1 & --- & 56.9 \\
     & R1-Distill-7B & 39.9 & 42.7 & 32.6 & 40.2 & 57.8 \\
    \midrule
    Qwen3-1.7B & \QwenRLMath{} & 21.1 & 43.9 & 41.1 & --- & 69.2 \\
     & Qwen3-4B & 20.7 & 22.7 & 22.6 & 22.6 & 35.1 \\
    \bottomrule
  \end{tabular}
\end{table}

\paragraph{OPD loss dynamics after SFT.}
Figure~\ref{fig:sft-opd-training-dynamics} and
Table~\ref{tab:sft-opd-loss} show that neither continuation sustains
loss reduction toward zero. The maximum observed reductions are
37.1\% and 23.0\%; the Qwen final loss is above its initial OPD loss.
Thus, SFT initialization does not remove the limited loss reduction
or the remaining performance gap in these two settings. This finding
concerns the tested data, recipe, and budgets, rather than ruling out
other SFT initializations.

\begin{table}[htbp]
  \centering
  \normalsize
  \setlength{\tabcolsep}{4pt}
  \caption{Loss during the 200 OPD updates following SFT. The initial loss
  is measured at the first OPD update after SFT. Loss reduction is
  $(1-\widehat{\mathcal L}_{\min}/\widehat{\mathcal L}_{\mathrm{start}})
  \times100\%$, computed from unrounded, unsmoothed losses.}
  \label{tab:sft-opd-loss}
  \begin{tabular}{@{}lrrrc@{}}
    \toprule
    Teacher & $\widehat{\mathcal L}_{\mathrm{start}}$
    & $\widehat{\mathcal L}_{\min}$ & $\widehat{\mathcal L}_{\mathrm{end}}$
    & Loss reduction \\
    \midrule
    R1-Distill-7B & 0.1461 & 0.0919 & 0.1232 & 37.1\% \\
    \QwenFourNT{} & 0.0805 & 0.0620 & 0.0861 & 23.0\% \\
    \bottomrule
  \end{tabular}
\end{table}

\begin{figure}[!t]
  \centering
  \includegraphics[width=\textwidth]{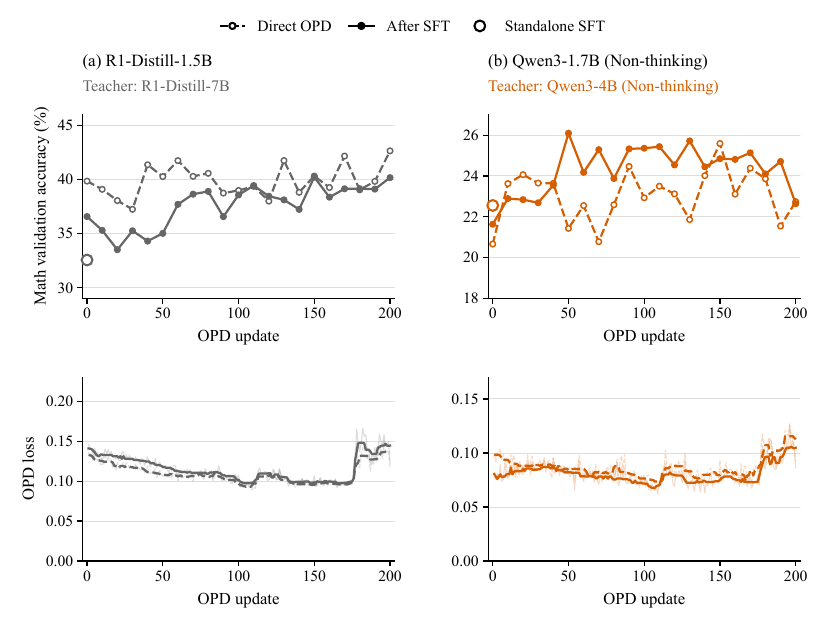}
  \caption{Direct OPD and OPD after SFT with the two tested teachers.
  Top: benchmark accuracy from each run's own recorded evaluations.
  The direct R1 curve's step-0 point is the shared base-model
  measurement described in Appendix~\ref{app:teacher-comparison-details}.
  Larger hollow circles separately show the original standalone SFT
  evaluations; they are not connected to either trajectory.
  The R1 standalone SFT evaluation uses a different prompt suffix.
  Bottom: raw per-update OPD loss in faint lines and centered
  nine-update rolling medians in stronger lines. Solid lines are
  continuations after SFT; dashed lines are direct OPD from the base
  student. Each panel uses a linear scale containing its observations.
  The two training stages have separate update counts, and the
  continuation does not include the SFT cost on its horizontal axis.}
  \label{fig:sft-opd-training-dynamics}
\end{figure}

\subsection{Vocabulary support and training rollouts}
\label{app:support-rollout-ablations}

We compare the main top-16, one-rollout configuration with full-logit OPD
at one rollout per prompt and top-16 OPD at eight rollouts per prompt.
These comparisons cover Qwen3-1.7B with \QwenRLMath{} and Qwen3-4B,
and R1-Distill-1.5B with JustRL-1.5B and R1-Distill-7B.
Table~\ref{tab:support-rollout-results} reports all twelve conditions:
four main-run baselines and eight additional completed runs.
The two Qwen full-logit runs and the Qwen3-4B eight-rollout run are the
same runs already reported in Table~\ref{tab:qwen-math-200-ablation},
not additional replications. JustRL-1.5B is a self-RL teacher for
R1-Distill-1.5B; \QwenRLMath{} is an RL-trained larger model relative
to Qwen3-1.7B.

\paragraph{Data and optimization.}
All additional runs use the same DAPO-Math-17K training file as their
corresponding baselines, containing 17,917 prompts before prompt-length
filtering. We apply the student's chat template, filter prompts longer
than 1,024 tokens, and keep the dataset order fixed. Qwen3 uses
non-thinking mode. Training response limits are 7,168 tokens for
Qwen3-1.7B and 21,000 for R1-Distill-1.5B. Each run has 200 optimizer
updates, with 64 prompts per update, student sampling temperature $1$,
top-$p=1$, no sampling top-$k$ cutoff, and teacher temperature $1$.
The optimizer is AdamW with constant learning rate $10^{-6}$, no warmup,
$(\beta_1,\beta_2)=(0.9,0.999)$, weight decay $0.01$, and gradient-norm
clipping at $1$. We use one optimization epoch per prompt batch and a
global mean over valid response tokens. The teacher remains fixed;
no task reward or format reward is added to the OPD update.
The vocabulary top-16 restriction is separate from the unrestricted
training sampling distribution.

Increasing the rollout count from one to eight changes the number of
sampled responses per update from 64 to 512. The optimization minibatch
is expanded accordingly, so both settings still perform one optimizer
update per batch. Thus, the total numbers of sampled training responses
are 12,800 and 102,400, respectively, while both process 12,800 prompt
instances. This is a comparison at fixed prompt and update budgets,
not at fixed generation cost. We use dynamic microbatching with a
32,768-token per-GPU budget, FSDP, activation checkpointing, and bfloat16
rollout inference. The additional runs use two H200 GPUs, except the
R1-Distill-7B top-16, eight-rollout run, which uses four. The Qwen
\QwenRLMath{} top-16 baseline and both R1 top-16 baselines use four GPUs.
All other Qwen conditions use two. Qwen runs use SGLang 0.5.3rc0 and
Transformers 4.56.1; the additional R1 runs use vLLM 0.11.0 and
Transformers 4.57.6. The additional runs use PyTorch 2.8.0 and Ray 2.49.2.
No separate experiment-level seed is recorded in the saved data
configuration. Each condition is a single stochastic training run;
the comparisons are not paired-seed or repeated-seed estimates.

\paragraph{Full-logit implementation.}
Top-16 OPD selects the student's sixteen highest-probability vocabulary
entries at each visited prefix. Student probabilities are renormalized
over these entries to form detached weights, while the log-probability
ratio uses each model's full-vocabulary normalization. Full-logit OPD
instead includes all 151,936 student output entries. It retains the
same detached student-weighted update and loss normalization, computing
the output projection in chunks of 128 prefix states to limit memory
use. No vocabulary truncation is applied in these full-logit runs.
For R1-Distill-7B, whose output head has 152,064 entries, teacher logits
are normalized over all 152,064 entries before selecting the 151,936
entries aligned with the student; the selected teacher probabilities
are not renormalized. Token-ID compatibility is checked before training.
Because the top-16 and full-logit objectives differ, we compare their
loss reductions relative to their own update-1 losses, rather than
treating their absolute losses as the same quantity.

\paragraph{Evaluation and run accounting.}
All additional runs are evaluated before training and every ten updates
on AIME 2024 (30 problems), AIME 2025 (30), and AMC 2023 (83).
Each evaluation contains 1,144 responses: eight per problem, sampled
at temperature $0.7$, top-$p=0.95$, and a 31,744-token response limit
within a 32,768-token context. We average correctness over responses
within each benchmark and then average the three benchmark scores
equally. The same answer-extraction and mathematical-correctness scorer
is used throughout; format reward is disabled. We retain each student
group's original prompt formatting: Qwen validation requests step-by-step
reasoning and a boxed final answer under its non-thinking chat template,
while R1 validation uses the corresponding R1 prompt and reasoning
prefix. Dataset and rendered-prompt checks verify consistency within
each student group. The two original R1 baselines use the shared
initial-student evaluation described in
Appendix~\ref{app:teacher-comparison-details}; every additional run uses
its own step-0 evaluation. Variations in these initial estimates reflect
sampling from the same initial checkpoint.

For each additional run, the loss record contains all 200 updates and
the validation record contains all 21 evaluation checkpoints. Native
optimizer and data-loader counters were checked against the saved
checkpoints. We use the completed run's canonical local records when
an earlier attempt failed: in particular, the JustRL eight-rollout run
was restarted from the initial checkpoint after an unsaved one-update
attempt, and the discarded attempt is not concatenated with the
completed trajectory. These eight additional runs are excluded from
the seven-run averages in the main text.

\begin{table}[tbp]
\centering
\normalsize
\setlength{\tabcolsep}{3pt}
\renewcommand{\arraystretch}{1.18}
\caption{Support and rollout comparisons after 200 updates. Accuracy is
macro-averaged $\mathrm{avg}@8$. Maximum and final loss reduction use
$(1-\widehat{\mathcal L}_{\min}/\widehat{\mathcal L}_1)\times100\%$ and
$(1-\widehat{\mathcal L}_{200}/\widehat{\mathcal L}_1)\times100\%$,
respectively. Each uses its own run's objective. The four top-16, $n=1$
rows are the main-run baselines; the other eight rows are additional runs.}
\label{tab:support-rollout-results}
\begin{tabular}{@{}llcrrrr@{}}
\toprule
Teacher & Support & $n$ & \multicolumn{2}{c}{Accuracy} & \multicolumn{2}{c}{Loss reduction} \\
\cmidrule(lr){4-5}\cmidrule(l){6-7}
 & & & Initial & Final & Maximum & Final \\
\midrule
\multicolumn{7}{@{}l}{\textit{Student: Qwen3-1.7B (Non-thinking)}} \\
\addlinespace[2pt]
\QwenRLMath{} & Top-16 & 1 & 21.10\% & 43.94\% & 65.08\% & 55.37\% \\
 & Full & 1 & 20.55\% & 46.36\% & 62.68\% & 54.85\% \\
 & Top-16 & 8 & 22.95\% & 45.21\% & 60.62\% & 51.29\% \\
\addlinespace[3pt]
Qwen3-4B & Top-16 & 1 & 20.66\% & 22.74\% & 30.36\% & 6.24\% \\
 & Full & 1 & 21.66\% & 22.88\% & 37.52\% & 11.60\% \\
 & Top-16 & 8 & 21.85\% & 24.87\% & 33.02\% & 9.89\% \\
\midrule
\multicolumn{7}{@{}l}{\textit{Student: R1-Distill-1.5B}} \\
\addlinespace[2pt]
JustRL-1.5B & Top-16 & 1 & 39.85\% & 51.88\% & 96.29\% & 92.95\% \\
 & Full & 1 & 39.85\% & 50.46\% & 96.22\% & 91.80\% \\
 & Top-16 & 8 & 39.85\% & 51.93\% & 96.18\% & 31.09\% \\
\addlinespace[3pt]
R1-Distill-7B & Top-16 & 1 & 39.85\% & 42.66\% & 34.35\% & 13.94\% \\
 & Full & 1 & 39.85\% & 40.32\% & 30.77\% & -7.05\% \\
 & Top-16 & 8 & 40.39\% & 38.87\% & 35.03\% & 18.29\% \\
\bottomrule
\end{tabular}
\end{table}

\begin{figure}[!t]
\centering
\includegraphics[width=\textwidth]{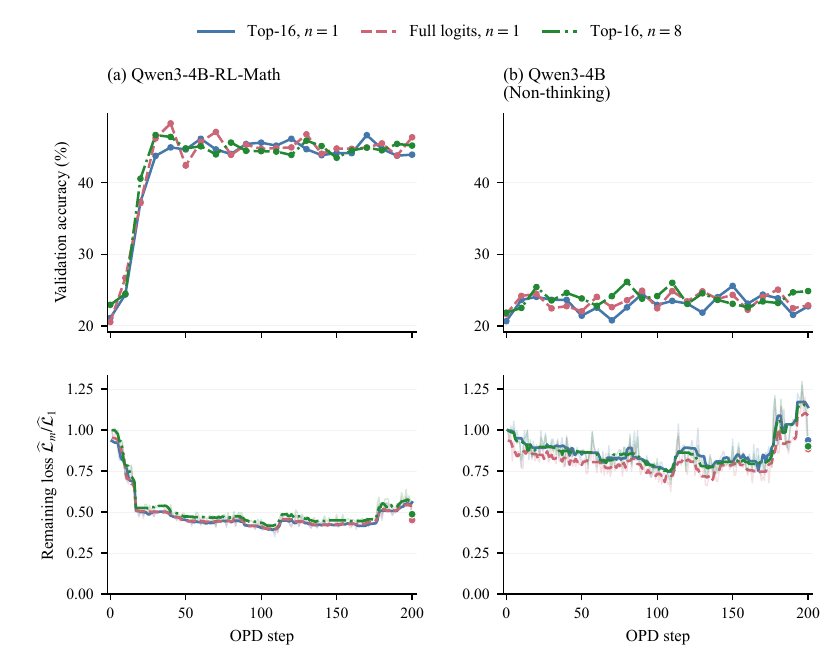}
\caption{Vocabulary-support and rollout comparisons for Qwen3-1.7B
(Non-thinking). Each column fixes the OPD teacher. Top: validation
accuracy. Bottom: loss divided by that run's update-1 loss. Faint lines
show all raw losses; thicker lines show centered nine-step rolling
medians. Filled loss markers show the raw update-200 values.
Here $n$ is the number of training rollouts per prompt; all validation
scores use $\mathrm{avg}@8$.}
\label{fig:support-rollout-qwen}
\end{figure}

\paragraph{Observed behavior.}
Figures~\ref{fig:support-rollout-qwen} and~\ref{fig:support-rollout-r1}
show validation accuracy and loss over the full training window.
For Qwen3-1.7B, final accuracy with \QwenRLMath{} is $43.94\%$,
$46.36\%$, and $45.21\%$ for top-16 with one rollout, full logits with
one rollout, and top-16 with eight rollouts, respectively. The
corresponding final loss reductions are $55.37\%$, $54.85\%$, and
$51.29\%$. With Qwen3-4B, accuracy remains much lower
($22.74\%$, $22.88\%$, and $24.87\%$), and final loss reductions remain
small ($6.24\%$, $11.60\%$, and $9.89\%$). Thus, removing the vocabulary
restriction or increasing the rollout count preserves the broad
teacher-dependent behavior in these Qwen runs.

\begin{figure}[!t]
\centering
\includegraphics[width=\textwidth]{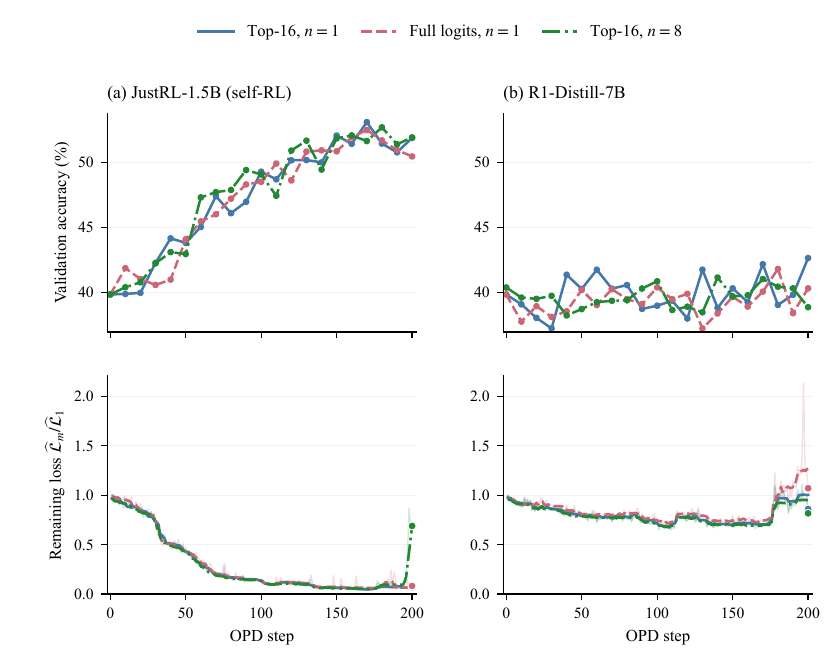}
\caption{Vocabulary-support and rollout comparisons for R1-Distill-1.5B,
with the same plotting conventions as
Figure~\ref{fig:support-rollout-qwen}. JustRL-1.5B is a self-RL teacher.
The late loss increase in its eight-rollout run is retained, as is the
loss increase in the full-logit R1-Distill-7B run.}
\label{fig:support-rollout-r1}
\end{figure}

For R1-Distill-1.5B, JustRL-1.5B also produces higher final accuracy
than R1-Distill-7B under all three configurations. With JustRL,
final accuracies are $51.88\%$, $50.46\%$, and $51.93\%$, and maximum
loss reductions are all approximately $96\%$. However, the eight-rollout
run's loss rises during its final updates, leaving a final loss reduction
of $31.09\%$, compared with $92.95\%$ and $91.80\%$ in the one-rollout
top-16 and full-logit runs. The R1-Distill-7B full-logit run ends with
a loss $7.05\%$ above its update-1 value; its maximum loss reduction
is $30.77\%$. We therefore report both maximum and final loss reduction.
These runs preserve the broad teacher ordering, but their detailed loss
trajectories are not interchangeable, and a larger rollout count does
not uniformly improve optimization. In the JustRL eight-rollout run,
the pre-clip gradient norm and mean response length also rise during
the last updates; the epoch counter remains zero throughout this
interval. These logs establish the late increase but do not identify
its cause.

\end{document}